%% file: main.tex
\documentclass[20pt]{extarticle}

\usepackage[utf8]{inputenc}
\usepackage[T1]{fontenc}
\usepackage{times}
\usepackage{helvet}
\usepackage{microtype}

\usepackage{PRIMEarxiv}
\usepackage{geometry}
\usepackage{titlesec}
\usepackage{setspace}
\usepackage{fancyhdr}
\usepackage{caption}
\usepackage{subcaption}
\usepackage{enumitem}
\usepackage[most]{tcolorbox}
\usepackage{placeins}

\usepackage{amsmath,amssymb,amsfonts,amsthm,mathtools}
\allowdisplaybreaks

\usepackage{graphicx}
\usepackage{xcolor}
\usepackage{colortbl}
\usepackage{pgf}
\usepackage{tikz}
\usepackage{pgf-pie}
\usetikzlibrary{matrix}
\usepackage{booktabs}
\usepackage{multirow}
\usepackage{array}
\usepackage{tabularx}
\usepackage{longtable}
\usepackage{float}
\usepackage{pifont}

\usepackage[numbers]{natbib}
\usepackage{url}
\usepackage[hidelinks]{hyperref}

\fancypagestyle{plain}{
  \fancyhf{}
  \fancyfoot[C]{\fontsize{10}{12}\selectfont\thepage}

}

\newcolumntype{L}{>{\raggedright\arraybackslash}p{6cm}}

\newtheorem{theorem}{Theorem}
\newtheorem{proposition}{Proposition}
\newtheorem{assumption}{Assumption}

\newtheorem{corollary}{Corollary}

\newcommand{\name}{\textsc{KINA}}
\newcommand{\fosd}{\succeq_{\mathrm{FOSD}}}

\fancypagestyle{firstpagewithlogos}{
    \fancyhf{}
    \fancyhead[C]{%
        \raisebox{-0.35em}[0pt][0pt]{%
            \begin{minipage}{\textwidth}
                \centering
                \makebox[\textwidth]{%
                    \includegraphics[height=0.48in]{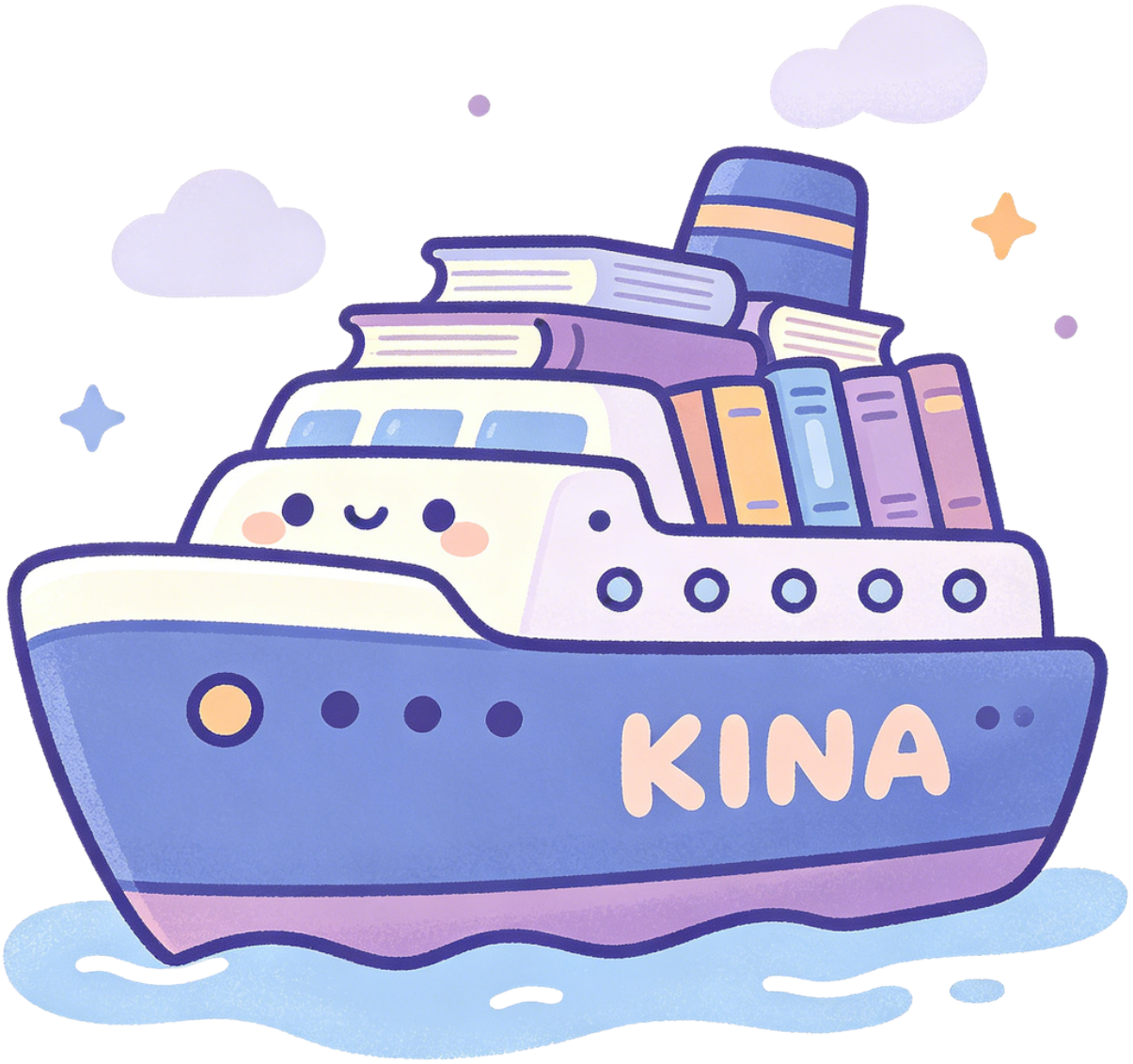}
                    \hfill
                        \makebox[0pt][r]{%
                            \includegraphics[height=0.24in]{logo_2077AI.png}
                            \hspace{-0.3em}
                            \includegraphics[height=0.24in]{logo_M-A-P.png}
                            \hspace{-0.7em}
                            \includegraphics[height=0.24in]{logo_UTokyo.png}
                            \hspace{-0.4em}
                            \includegraphics[height=0.28in]{logo_CMU.png}
                        }%
                }\\[-0.82em]
                \rule{\textwidth}{1.5pt}
            \end{minipage}%
        }%
    }
    
    \fancyfoot[C]{\fontsize{10}{12}\selectfont\thepage}
}

\title{\textbf{Knowledge Index of Noah's Ark}}

\author{\normalfont\fontsize{9.5pt}{11pt}\selectfont
Sheng Jin\textsuperscript{1,*},
Minghao Liu\textsuperscript{1,2,*,\textdagger},
Yunze Xiao\textsuperscript{3},
Zeqi Zhou\textsuperscript{4},
Heli Qi\textsuperscript{5},
Yifan Yao\textsuperscript{2},\\
\fontsize{9.5pt}{11pt}\selectfont
Meishu Song\textsuperscript{1,6},
Kaijing Ma\textsuperscript{2},
Xuan Zhang\textsuperscript{1},
Sicong Jiang\textsuperscript{1},
Yizhe Li\textsuperscript{1},
Ningshan Ma\textsuperscript{7},
Jie Wei\textsuperscript{1},\\
\fontsize{9.5pt}{11pt}\selectfont
Ziniu Li\textsuperscript{2},
Minglai Yang\textsuperscript{1,8},
Bangya Liu\textsuperscript{1},
Yiming Liang\textsuperscript{2},
Xiao Fang\textsuperscript{7},
Qingcheng Zeng\textsuperscript{9},
Jiarui Liu\textsuperscript{3},\\
\fontsize{9.5pt}{11pt}\selectfont
Rui Yang\textsuperscript{10},
Shen Yan\textsuperscript{2},
Wenhao Huang\textsuperscript{2},
Jiaheng Liu\textsuperscript{2},
Zihan Wang\textsuperscript{1},
Weihao Xuan\textsuperscript{6,\textdagger},
Ge Zhang\textsuperscript{2,\textdagger}\\[1.6mm]
\fontsize{9pt}{10.5pt}\selectfont
\textsuperscript{1}2077AI~~~
\textsuperscript{2}M-A-P~~~
\textsuperscript{3}Carnegie Mellon University~~~
\textsuperscript{4}Brown University~~~
\textsuperscript{5}Waseda University\\[0.35mm]
\fontsize{9pt}{10.5pt}\selectfont
\textsuperscript{6}The University of Tokyo~~~
\textsuperscript{7}Massachusetts Institute of Technology~~~
\textsuperscript{8}University of Arizona\\[0.35mm]
\fontsize{9pt}{10.5pt}\selectfont
\textsuperscript{9}Northwestern University~~~
\textsuperscript{10}Duke-NUS Medical School\\[0.8mm]
\fontsize{8.5pt}{10pt}\selectfont
\textsuperscript{*}Equal contribution.~~~
\textsuperscript{\textdagger}Corresponding authors.
}

\date{}

\makeatletter
\renewcommand{\@maketitle}{%
  \newpage
  \null
  \vspace{-0.8em}%
  \begin{center}%
    {\LARGE\bfseries \@title \par}%
    \vskip 1.15em%
    \rule{\textwidth}{0.3pt}%
    \vskip 1.05em%
    {\@author \par}%
    \vskip 0.8em%
    {\@date}%
  \end{center}%
  \par
  \vskip 1.2em%
}
\makeatother

\begin{document}
\setlength{\headheight}{34pt}
\setlength{\headsep}{16pt}
\maketitle
\thispagestyle{firstpagewithlogos}

\begin{abstract}
Knowledge benchmarks for LLMs face three issues: scaling-driven designs that do not operationalize disciplinary representativeness; flat-payment annotation that permits lazy consensus; and unaudited ranking instability under bounded test budgets. We introduce \name{}\footnote{\url{https://www.2077ai.com/datasets/dataset-kina}}, an $899$-item benchmark across $261$ fine-grained disciplines, with two formal results. First, we cast representativeness as a coverage-style objective over expert-elicited anchors and operationalize disciplinary representativeness through a proxy, yielding a $(1-1/e)$ greedy approximation (Proposition~\ref{thm:submodular}); the guarantee applies to the proxy, not to population representativeness. Second, we prove a bonus-on-bar tournament weakly FOSD-dominates flat payment in released-review quality, with incentive-compatibility threshold $B > \Delta C / \Delta p_{\min}$ (Theorem~\ref{thm:tournament}). Evaluating $42$ models from $13$ labs, the top model, Gemini-3.1-Pro-Preview, reaches $53.17\%$, followed by Claude-Opus-4.6 at $49.92\%$ and GPT-5.4 at $48.55\%$, leaving substantial headroom below saturation. The full leaderboard shows a tiered structure rather than a smooth total order: a small frontier tier lies above $48\%$, a dense strong-model tier spans roughly $38$--$45\%$, and low-performing models remain only modestly above the $10\%$ chance baseline. Tool-augmentation adds up to $5.17$ points across the five tool-use evaluations, with gains varying substantially across models. We report bootstrap ranking-stability statistics to make bounded-budget variance explicit and to discourage over-interpretation of adjacent ranks.
\end{abstract}

\input{1_introduction}
\input{2_related_work}
\input{3_representativeness}
\input{4_kina_construction}
\input{5_experiments}
\input{6_limitations}
\input{7_conclusion}
\input{8_contributions_and_acknowledgements}

\bibliographystyle{plainnat}
\bibliography{refs}

\newpage
\appendix
\input{appendix_a_samples}
\input{appendix_b_centrality}
\input{appendix_c_tournament}
\input{appendix_d_taxonomy}
\input{appendix_e_manuals}

\input{appendix_f_experiments}

\input{appendix_g_datasheet}

\end{document}

%% file: 1_introduction.tex
\section{Introduction}

A knowledge benchmark for LLMs can serve as either a difficulty thermometer or a diagnostic instrument, and current designs lean predominantly toward the former. The dominant strategies, namely scaling the test set toward hundreds of subfields \citep{pteam2025supergpqascalingllmevaluation} and pushing each item toward research-frontier difficulty \citep{center2026benchmark}, tell us \emph{whether} frontier models struggle, but say less about \emph{where} and \emph{why} they do. Existing pipelines do not formalize disciplinary representativeness as a selection criterion, do not align reviewer compensation with effort under formal guarantees, and do not routinely report whether observed model rankings remain stable under resampling. We argue that representativeness, incentive-aligned review, and ranking stability deserve to be treated as primary design considerations of a knowledge benchmark, and we develop \name{}, the Knowledge Index of Noah's Ark, to make that case concrete.

We motivate each of the three concerns in turn. Items in current benchmarks are typically organized by subject taxonomy rather than selected by whether they elicit the core competencies of a discipline; aggregate difficulty can therefore be high while the coverage of theoretical pivots remains uneven. Most review pipelines also compensate reviewers at a flat per-item rate, so accepting a borderline item incurs no cost and rational reviewers tend to converge on \emph{lazy consensus}; bonus-heavy designs such as the expert pipeline of GPQA \citep{rein2023gpqagraduatelevelgoogleproofqa} mitigate but do not formally rule out this failure mode. Finally, although a compact test set of about $10^3$ items is cheap to iterate and resistant to contamination, $5$-point gaps near the top of the leaderboard between frontier models on such a set may not survive resampling, and bootstrap-stable rankings are rarely reported alongside the headline numbers.

\name{} is a benchmark of $899$ items spanning $261$ fine-grained disciplines, designed around two formal results that target the first two of these gaps. To address representativeness, we operationalize the criterion as \emph{budgeted support centrality}, an operational proxy under which each candidate item is scored against domain-aligned anchors elicited from experts and items are then selected greedily under capacity constraints. Proposition~\ref{thm:submodular} shows that the resulting selection objective is monotone submodular, so greedy selection attains a $(1-1/e)$ approximation of the proxy optimum. We view the proxy as a tractable surrogate for representativeness rather than a global guarantee on representativeness itself. To address review quality, we replace flat payment with a \emph{bonus-on-bar tournament}: two reviewers evaluate each item, the higher-scoring reviewer receives a bonus $B$ provided that the winning score clears a bar $\tau$, and the principal performs stochastic audits to deter collusive approval. Under standard assumptions on effort-induced FOSD, score-noise independence, and monotonicity of the aggregator (Assumptions~\ref{ass:fosd}--\ref{ass:types}), Theorem~\ref{thm:tournament} establishes that this mechanism strictly improves the quality of released reviews relative to the flat-payment baseline in the FOSD sense, with a closed-form bonus calibration $B>\Delta C/\Delta p_{\min}$. The third gap, ranking stability, is addressed empirically rather than formally, by reporting bootstrap-based ranking statistics as a standard column on the leaderboard of \name{}.

We evaluate $42$ frontier models from $13$ labs on \name{}. Gemini-3.1-Pro-Preview leads with an overall accuracy of $53.17\%$, followed by Claude-Opus-4.6 and GPT-5.4 at $49.92\%$ and $48.55\%$ respectively, and the leaderboard remains far from saturation. Web-search augmentation yields positive but non-uniform gains across the five tool-use evaluations, ranging from $+1.50$ to $+5.17$ points, which suggests that retrieval contributes in different ways to weak and strong base models. Per-discipline analysis reveals heterogeneous spread within the top-$10$ models that aggregate accuracy alone would hide: the spread is only $9.83$ points in Science, while it reaches $38.16$ points in Sociology. We read this contrast as suggestive rather than conclusive, since the disciplines with the largest spread also have the smallest item counts; what we draw from it is that humanities and social-science content deserves separate reporting at the frontier, not that it is the principal source of differentiation. We further report bootstrap-based ranking-stability statistics in \S\ref{subsec:stability}, in order to make the variance under bounded test budgets visible to readers rather than implicit.

Our contributions are threefold.
\begin{enumerate}
    \item (\S\ref{subsec:representativeness}) formalizes disciplinary representativeness as budgeted support centrality and proves that the resulting selection objective is monotone submodular, which gives a $(1-1/e)$ approximation guarantee on the proxy under greedy selection. 
    \item (\S\ref{subsec:tournament}) shows that, under standard assumptions on effort-induced FOSD, score-noise independence, and monotonicity of the aggregator, a bonus-on-bar tournament strictly improves the quality of released reviews relative to the flat-payment baseline, with a closed-form bonus calibration.
    \item (\S\ref{sec:experiments}) releases \name{} together with an evaluation of $42$ frontier models, per-discipline scores, web-search ablations, an analysis of parameter scaling, and bootstrap-based ranking-stability statistics. 
\end{enumerate}

Alongside the dataset, we release the annotation and reviewer manuals, the LLM-judge rubrics, and the evaluation code, to enable replication and extension.

%% file: 2_related_work.tex
\section{Related Work}
\label{sec:related_work}

\paragraph{Knowledge benchmarks.}
The standard paradigm was set by SuperGLUE \citep{Wang2019SuperGlue} and MMLU
\citep{hendrycks2021measuringmassivemultitasklanguage}, both of which are now
saturated by frontier models and have known quality issues. Post-hoc analysis
of MMLU \citep{gema2025mmlu} estimates a $6.5\%$ overall error rate, with
per-subject rates exceeding $50\%$ in several domains and rank changes on
error-corrected subsets. Subsequent work pursues two trajectories. The
\emph{depth-first} line includes ScienceQA \citep{lu2022learn}, ARC-AGI
\citep{chollet2025arcprize2024technical, chollet2026arcagi2newchallengefrontier},
GPQA \citep{rein2023gpqagraduatelevelgoogleproofqa}, and HLE
\citep{center2026benchmark}. The \emph{breadth-first} line includes MMLU-Pro
\citep{wang2024mmlu} and SuperGPQA \citep{pteam2025supergpqascalingllmevaluation}.
None of these designs makes \emph{disciplinary representativeness} an
explicit selection criterion: items are scored on difficulty or sourced by
availability, not on whether they probe central theoretical pivots.

\paragraph{Annotation methodology and incentive design.}
GPQA pioneered an expert-in-the-loop pipeline with $\sim$\$95/hr compensation
and three-stage validation, but covers only three domains.
SuperGPQA scales the same expert-review template to $\sim$$80$ annotators and
$26{,}529$ items; an initial broader-crowdsourcing attempt rejected $63\%$ of
submissions, motivating a shift to expert-only annotation. HLE used an
open-call solicitation with a $\$500{,}000$ prize pool and a $\sim$$5$-minute
verification cap per item. The post-release audit
\citep{zhai2026hleverifiedsystematicverificationstructured,
tu2025positionhiddencostsmeasurement} found that $29\pm 3.7\%$ of biology and
chemistry items conflicted with peer-reviewed literature, and only $26\%$ of
audited items passed verification unmodified. The structural cause is
incentive: a flat-payment cap on review effort, combined with a
``stump-the-LLM'' contributor incentive, drives reviewers toward lazy
consensus. To our knowledge, no prior knowledge benchmark formally analyzes
the incentive structure of its review pipeline.

\paragraph{Mechanism design for elicitation.}
Tournament and contest-based mechanisms have a long history in labor economics
\citep{lazear1981rank} and have been used in NLP for adversarial data
collection \citep{nie2020adversarialnli}. Our use of a bonus-on-bar tournament
for annotation review is, to our knowledge, novel in the LLM-benchmark setting.
Our analysis (\S\ref{subsec:tournament}) is a comparative claim relative to flat
payment under standard FOSD assumptions; it is not a full equilibrium
characterization.

\paragraph{Comparison.} Table~\ref{tab:bench_comparison} situates \name{}
against representative prior benchmarks along five axes.

\begin{table*}[htbp]
\centering
\captionsetup{font=footnotesize}
\caption{\textbf{Comparison of \name{} with prior knowledge benchmarks.}
``Disc.'' = number of disciplines; ``Rep.''~$=$~explicit representativeness
criterion; ``Inc.''~$=$~explicit incentive-aligned review; ``Tool''~$=$~tool-use
evaluation reported in original paper. Best-saturated overall accuracy of
frontier models is approximate.}
\label{tab:bench_comparison}
\small
\begin{tabular}{lrrrcccc}
\toprule
Benchmark & \#Disc. & \#Items & \#Opts & Frontier acc.\ ($\approx$) & Rep.\ & Inc.\ & Tool \\
\midrule
MMLU \citep{hendrycks2021measuringmassivemultitasklanguage}
        & 57  & 15{,}908 & 4 & $>$90\%  & \ding{55} & \ding{55} & \ding{55} \\
MMLU-Pro \citep{wang2024mmlu}
        & 14  & 12{,}032 & 10 & 80--90\% & \ding{55} & \ding{55} & \ding{55} \\
GPQA \citep{rein2023gpqagraduatelevelgoogleproofqa}
        & 3   & 448      & 4 & $>$90\%  & \ding{55} & partial   & \ding{55} \\
SuperGPQA \citep{pteam2025supergpqascalingllmevaluation}
        & 285 & 26{,}529 & $\sim$10 & 70--80\% & \ding{55} & \ding{55} & \ding{55} \\
HLE \citep{center2026benchmark}
        & 8   & 2{,}500  & mixed & 40--50\% & \ding{55} & \ding{55} & partial \\
\midrule
\textbf{\name{} (ours)}
        & \textbf{261} & \textbf{899} & \textbf{10}
        & \textbf{50--60\%}
        & \ding{51} & \ding{51} & \ding{51} \\
\bottomrule
\end{tabular}
\end{table*}

%% file: 3_representativeness.tex
\section{Two Formal Guarantees for the Data Pipeline}
\label{sec:theory}

The data pipeline of \name{} makes two decisions whose quality dominates everything downstream: which items to keep from a much larger candidate pool, and how to compensate the reviewers who certify the survivors. We give a formal guarantee at each decision point. The first guarantee says greedy selection is good enough at covering a discipline, in a sense we make precise below. The second says a tournament-style payment scheme is good enough at eliciting reviewer effort, in a similar sense. The two arguments are independent, but together they pin down what a reader can take from the pipeline of \name{} as a formal claim, as opposed to a methodological choice. Section~\ref{sec:construction} describes how each guarantee is realized in the actual workflow, and Section~\ref{sec:limitations} states what neither guarantee establishes.

\subsection{Selection: greedy coverage of a disciplinary prototype}
\label{subsec:representativeness}

Representativeness is a property of the selected subset, not of items in isolation. An item is well-positioned if it materially supports at least one canonical anchor of its discipline; a subset is representative if every important anchor is supported by at least one item it contains. To make this concrete, domain experts elicit a compact disciplinary prototype $\Sigma_d$ of methods, problems, theorems, concepts, and applications, and an LLM-judge scores each candidate item $q$ for how strongly it supports each anchor $u$, producing $\hat S_d^{\mathrm{sp}}(q,u) \in [0,1]$. We aggregate these scores into a coverage objective:
\begin{equation}
F_d^{\mathrm{sp}}(\mathcal{S})
\;\triangleq\;
\sum_{u \in \bar B_d} \mu_d(u)\, \max_{q \in \mathcal S}\, \hat S_d^{\mathrm{sp}}(q,u),
\label{eq:Fsp}
\end{equation}
with anchor weights $\mu_d(u)\ge 0$. Selection picks $\mathcal{S}_d$ of size $K_d$ to maximize $F_d^{\mathrm{sp}}$, subject to per-subfield quotas and a near-duplicate constraint we describe in Appendix~\ref{sec:appendix-support-centrality}.

The objective $F_d^{\mathrm{sp}}$ is monotone and submodular: for each anchor $u$, the per-anchor coverage $\max_{q\in\mathcal S} \hat S_d^{\mathrm{sp}}(q,u)$ is monotone and submodular in $\mathcal S$ by a standard max-coverage argument; $F_d^{\mathrm{sp}}$ is a nonnegative weighted sum of such functions and inherits both properties. Greedy maximization under the cardinality constraint $|\mathcal{S}_d|=K_d$ therefore attains a $(1-1/e)$ approximation of the proxy optimum (Proposition~\ref{thm:submodular}; full statement and proof in Appendix~\ref{sec:appendix-support-centrality}). The lazy-greedy variant we use is described there as well.

\subsection{Review: bonus-on-bar tournament}
\label{subsec:tournament}

Two reviewers evaluate each item independently. Under flat per-item payment, the reviewer's optimal effort is whatever minimizes private disutility regardless of item quality, which is the failure mode we want to rule out. We instead pay a base wage plus a single bonus to the reviewer with the higher validated score, conditional on the winning score clearing a minimum bar $\tau$. The bar exists to deter collusive approval: two reviewers cannot agree to put in low effort and split the bonus, because a low-quality winner forfeits it.

We want to show that this scheme strictly raises the quality of released reviews relative to the flat baseline, under transparent assumptions. We assume effort raises latent review quality in the sense of first-order stochastic dominance (Assumption~\ref{ass:fosd}), that the principal observes a noisy monotone score whose noise is independent across reviewers (Assumption~\ref{ass:noise}), and that reviewer cost types are drawn i.i.d.\ from a continuous distribution $G$ (Assumption~\ref{ass:types}). Let $\Delta p_{\min}$ be the minimum gain in winning probability when a reviewer switches from low to high effort, taken over the opponent's effort choice; positivity of $\Delta p_{\min}$ is the substantive precondition.

Under these assumptions, the bonus-on-bar tournament is FOSD-improving: the equilibrium high-effort rate satisfies $\pi^{\mathrm{tour}}\ge G(B\,\Delta p_{\min})\ge G(0)=\pi^{\mathrm{flat}}$, and released review quality satisfies $Y^{\mathrm{tour}}\fosd Y^{\mathrm{flat}}$ (Theorem~\ref{thm:tournament}; full statement and proof in Appendix~\ref{appendix:theory}). The argument has two pieces: a reviewer with cost type $\kappa_i$ prefers high effort under tournament whenever $B\,\Delta p(e_{-i})\ge \kappa_i$, so all types with $\kappa_i\le B\,\Delta p_{\min}$ choose high effort regardless of the opponent's behavior, while under flat payment no positive-cost type does; the induced quality CDF of a randomly drawn review is therefore stochastically lower under tournament, and a quantile-coupling step lifts marginal FOSD to any coordinatewise nondecreasing aggregator of the two reviews.

A useful consequence is that the bonus needed to dominate the high-effort branch has a closed form: setting $B>\Delta C/\Delta p_{\min}$, where $\Delta C$ bounds the cost of high effort, makes high effort strictly preferred for every reviewer type (Corollary~\ref{cor:calibration}). More generally, $B\ge G^{-1}(\pi^\star)/\Delta p_{\min}$ delivers any target high-effort rate $\pi^\star\in[0,1)$. We use the closed-form bound in pilot calibration and the more general form when targeting a specific high-effort fraction in a niche discipline.

%% file: 4_kina_construction.tex
\section{\name{} Construction}
\label{sec:construction}

This section describes how the formal criteria of \S\ref{subsec:representativeness} and \S\ref{subsec:tournament} are realized in a concrete data-collection pipeline. The taxonomy backbone is the U.S.\ Classification of Instructional Programs (CIP), which we refine to a three-level hierarchy of $12$ disciplines, $70$ fields, and $261$ fine-grained subfields; the full taxonomy with per-subfield item counts is in Appendix~\ref{sec:appendix-taxonomy}. Among compact knowledge benchmarks ($\le 10^4$ items), this is the broadest taxonomy we are aware of, and only SuperGPQA ($285$ subfields) is wider among very large benchmarks, at roughly $30\times$ the item count. Each \name{} item is a pseudo-multiple-choice question: the stem contains several substantive statements, and each of the $10$ options is a combinatorial selection over those statements. This format reduces chance accuracy from $25\%$ ($4$-option MCQ) to $10\%$, weakens process-of-elimination shortcuts, and forces joint reasoning over all statements. Each item ships with the question, the $10$ options, an option-level explanation with sources, and the originating material; examples are in Appendix~\ref{appendix:samples}.

\subsection{Pipeline}
\label{subsec:pipeline}

\begin{figure*}[htbp]
\centering
\includegraphics[width=0.9\linewidth]{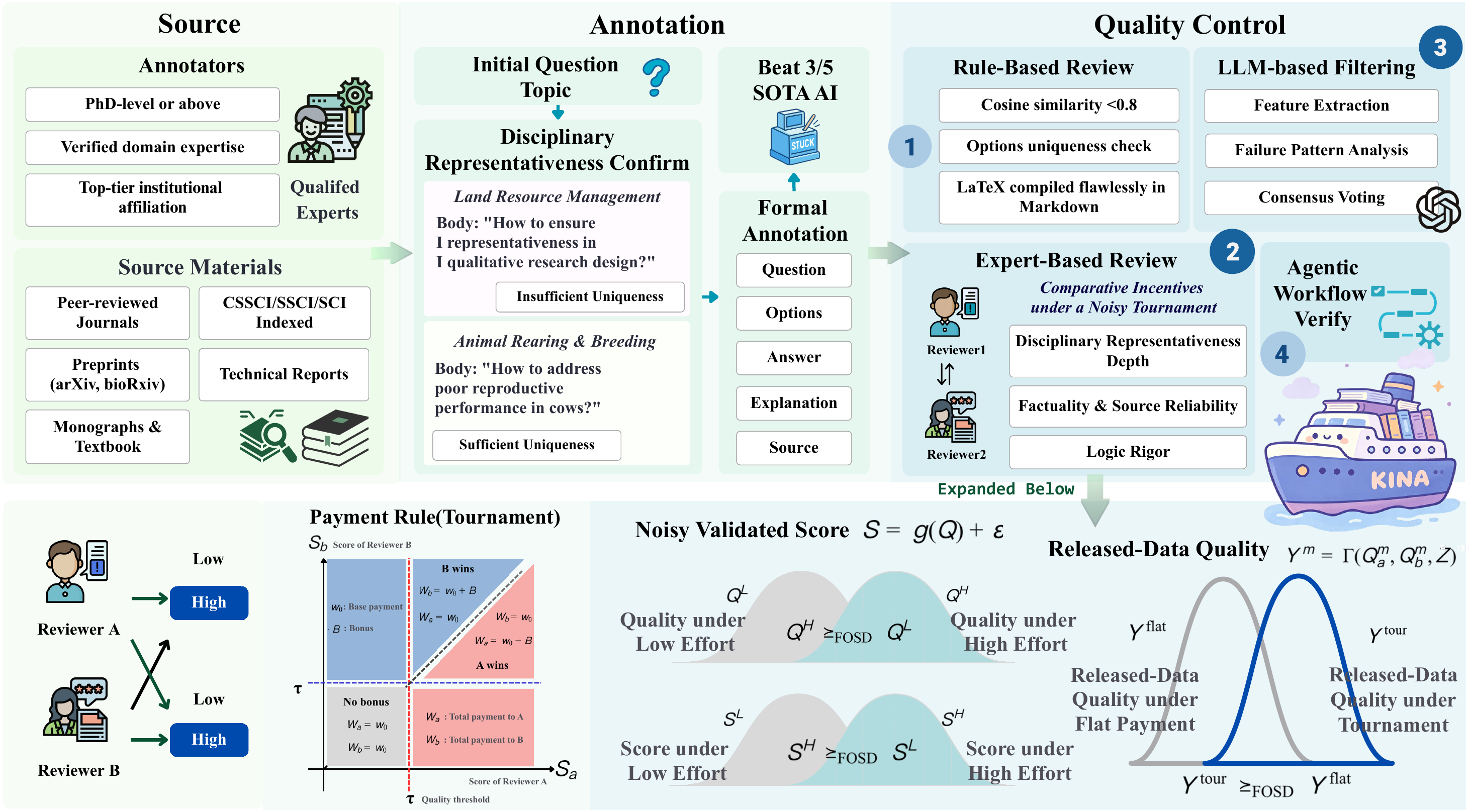}
\captionsetup{font=footnotesize}
\caption{\name{} data-collection pipeline. Topic pre-approval enforces representativeness via the proxy of Proposition~\ref{thm:submodular}; the double-blind expert-review stage instantiates the bonus-on-bar tournament of Theorem~\ref{thm:tournament}; LLM-as-judge consensus filters residual ambiguity; an agentic refinement loop addresses boundary defects.}
\label{fig:pipeline}
\end{figure*}

Each candidate item flows through four stages (Figure~\ref{fig:pipeline}, Table~\ref{tab:verification_pipeline}).

\paragraph{Rule-based screening.}
Each item passes a uniqueness check (cosine similarity below $0.8$ against the existing pool, no duplicate options), a formatting check (LaTeX compiles in Markdown), and a difficulty filter that requires at least three of five flagship LLMs to answer the item incorrectly. The 3-of-5 floor preserves both difficulty headroom and topic diversity at the rate observed during pilot collection.

\paragraph{Expert review under the bonus-on-bar tournament.}
Reviewers are recruited from two pools, graduate students at top-tier global universities and senior industry experts, and each candidate passes a two-round examination, namely a discipline-specific depth test in Round 1 and a one-shot formal item simulation in Round 2; the full annotation and reviewer manuals are in Appendix~\ref{appendix:manuals}. Each item is assigned to two independent reviewers under double-blind allocation, who score it on six rubrics: representativeness and depth, factuality, source reliability, logical rigor, distractor plausibility, and combinatorial validity (a pseudo-MCQ must contain at least $6$ statements with no subset relations among options). The validated score $S_i$ is the rubric sum normalized by the rubric weight schedule, and the bonus $B$ is awarded to the reviewer with the higher $S_i$ provided $\max_i S_i \ge \tau$. The principal performs stochastic audits on a random $5$ to $10\%$ of approved items, and an item flagged as flawed triggers a joint penalty on both reviewers, which deters collusive lazy consensus. We calibrate $B$ per discipline using Corollary~\ref{cor:calibration} from a pilot estimate of $\Delta p_{\min}$ and per-discipline cost $\Delta C$, cap reviewer work-in-progress at $3$ to $5$ items, and use scarcity-aware pricing for niche disciplines.

\paragraph{LLM-as-judge consensus.}
Three independent LLM judges score each item along four feature axes, namely knowledge coverage, disciplinary uniqueness, socio-economic impact, and practical value, and analyze the failure pattern of the five flagship LLMs on the item. An item is admitted if at least two of three judges vote yes; otherwise it is returned to the annotator for revision or discarded.

\paragraph{Agentic refinement of flagged items.}
A manual spot-check after the consensus stage identified a residual class of boundary defects, primarily context drift between stem and options and ambiguity in the implied causal chain of an explanation. To address these systematically, items flagged in this spot-check or returned by the LLM-judge consensus enter a refinement loop: an upstream agent (with web access) searches for counter-evidence to the explanation, and a downstream agent revises the stem or supplies missing premises in response. Each revised item is re-scored by a human reviewer under the Stage~2 rubric and must clear the same bar before re-entering the pool. Both agents are instantiated with GPT-5.2-Pro in our run; the loop is model-agnostic. Approximately one-third of items required at least one round of refinement before final acceptance, yielding the final $899$ items.

\begin{table*}[htbp]
\centering
\captionsetup{font=footnotesize}
\caption{Four-stage construction pipeline.}
\label{tab:verification_pipeline}
\small
\begin{tabular}{lp{2.6cm}p{6.0cm}}
\toprule
Stage & Method & Core criteria \\
\midrule
1 & Rule-based (automated) &
Cosine similarity $<0.8$; LaTeX compilable; 3-of-5 flagship-LLM-failure filter. \\
\midrule
2 & Double-blind expert review with bonus-on-bar tournament (\S\ref{subsec:tournament}) &
Representativeness/depth; factuality and source reliability (Q1 journals, monographs, CSSCI, authoritative domains); logical rigor; pseudo-MCQ format constraints. Stochastic principal audit at $5$ to $10\%$. \\
\midrule
3 & LLM-as-judge (three-judge consensus) &
Multidimensional feature scoring; flagship-failure pattern analysis; majority vote. \\
\midrule
4 & Two-agent refinement &
Diagnosis Agent mines counter-evidence; Refinement Agent revises stem and premises. Human reviewer confirms each refinement. \\
\bottomrule
\end{tabular}
\end{table*}

\subsection{Statistics}
\label{subsec:statistics}

\begin{figure}[htbp]
\centering
\includegraphics[width=0.62\linewidth]{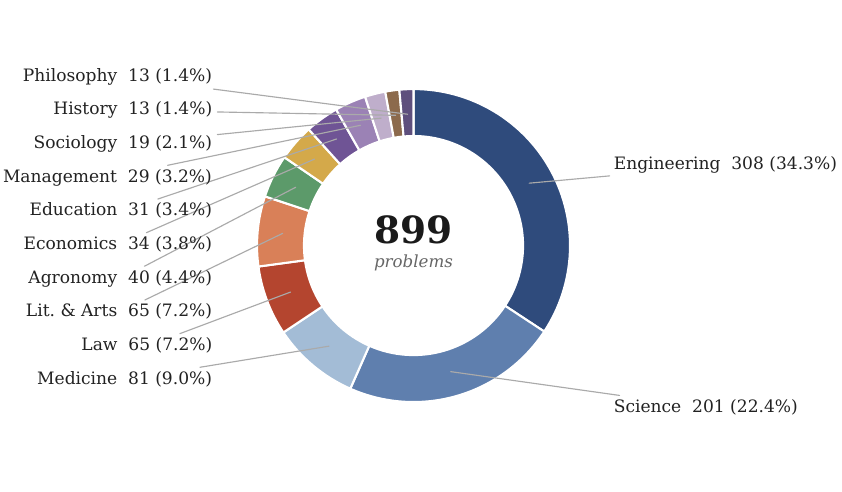}
\captionsetup{font=footnotesize}
\caption{Distribution of \name{} items across the $12$ top-level disciplines.}
\label{fig:kina_distribution}
\end{figure}

Figure~\ref{fig:kina_distribution} shows the distribution. Engineering ($34.26\%$) and Science ($22.36\%$) dominate, which reflects the larger number of sub-branches in those areas, but \name{} retains substantial coverage in Medicine ($9.01\%$), Law ($7.23\%$), Literature and Arts ($7.23\%$), and Economics, Education, Sociology, History, and Philosophy. Mean stem length is around $95$ tokens, and mean explanation length is around $210$ tokens.

%% file: 5_experiments.tex
\section{Experiments}
\label{sec:experiments}

We evaluate $42$ models from $13$ AI labs on \name{}, spanning closed-source flagship APIs, open-source dense and Mixture-of-Experts (MoE) checkpoints, and reasoning models. The full model list is in Appendix~\ref{appendix:model-list}. All results are reported as \textsc{avg@4} accuracy at default temperature, with $32$K maximum new tokens and a reasoning budget of $16$K (or ``Medium''); prompts are in Appendix~\ref{appendix:prompt}. We organize the discussion around four questions: how the leaderboard looks (\S\ref{subsec:overall}), whether the resulting ranking is stable (\S\ref{subsec:stability}), whether the tournament-review prediction holds in our own pipeline (\S\ref{subsec:tournament-audit}), and what the per-discipline and tool-use breakdowns reveal about the diagnostic content of \name{} (\S\ref{subsec:diagnostic}, \S\ref{subsec:scaling}).

\subsection{Overall accuracy}
\label{subsec:overall}

Table~\ref{tab:kina_top15} reports the full $42$ models on \name{}. Gemini-3.1-Pro-Preview leads at $53.17\%$ overall, securing the top score in $8$ of $12$ disciplines. Claude-Opus-4.6 ($49.92\%$) and GPT-5.4 ($48.55\%$) follow. Within the open-source ecosystem, Qwen3.5-397B-A17B ($42.99\%$) outperforms several closed-source systems, including GPT-5.2 ($39.52\%$) and Doubao-Seed-2.0-Lite ($41.49\%$). The leaderboard is therefore far from saturation: current model performance remains below $55\%$, and even the strongest closed-source frontier system fails on close to half of the items.

A closer look suggests that the full leaderboard is better understood as a tiered structure rather than a smooth total order. The top three models form a small frontier tier above $48\%$. Below them, a dense strong-model tier spans roughly $38.01\%$--$44.99\%$, from DeepSeek-V3.2-Thinking to Doubao-Seed-2.0-Pro-260215. This band contains both closed-source systems and large open-source or MoE models, suggesting that \name{} separates the frontier from the strong non-frontier group while still preserving resolution within the latter. However, the small gaps inside this band should not be over-interpreted as precise rank differences. The lower end of the leaderboard is also informative. Since each \name{} item has $10$ options, chance accuracy is $10\%$. Models in the $14\%$--$25\%$ range are therefore only modestly above random guessing in raw accuracy terms. For example, Qwen3-0.6B obtains $14.49\%$, Mixtral-8x7B-Instruct obtains $17.83\%$, Qwen3-1.7B obtains $18.33\%$, and Qwen3.5-2B obtains $20.52\%$. These results indicate that small or older models can occasionally solve discipline-specific items, but their performance is still far from robust knowledge mastery under the pseudo-MCQ format.

Two additional patterns are visible from the full table. Within the Qwen3 family, thinking-augmented reasoning is scale-dependent: at $80$B-A3B scale the Thinking variant ($28.28\%$) trails the Instruct variant ($30.09\%$), while at $30$B and $4$B the Thinking variants lead. We hypothesize that extended chain-of-thought may introduce noise on items requiring precise factual recall at large scale while compensating for weaker parametric knowledge at small scale. Claude-Opus-4.6 also shows a markedly skewed disciplinary profile: it ties for first in Philosophy ($36.54\%$) and Agronomy ($61.88\%$) but reaches only $15.38\%$ in History, illustrating why a per-discipline breakdown is more diagnostic than aggregate accuracy alone.

\providecommand{\hm}[1]{%
  \begingroup
  \pgfmathtruncatemacro{\KINAtint}{min(45,max(5,#1*0.65))}%
  \edef\KINAcellcmd{\noexpand\cellcolor{blue!\KINAtint!white}}%
  \KINAcellcmd #1%
  \endgroup
}

\begin{table*}[htbp]
\centering
\footnotesize
\setlength{\tabcolsep}{3pt}
\captionsetup{font=footnotesize}
\caption{Full leaderboard of 42 models on \name{}, sorted by overall accuracy. Cell shading scales with score (darker $=$ higher). Abbreviations: Agr.\ (Agronomy), Econ.\ (Economics), Edu.\ (Education), Eng.\ (Engineering), Hist.\ (History), Arts (Lit.\ \& Arts), Mgt.\ (Management), Med.\ (Medicine), Phil.\ (Philosophy), Sci.\ (Science), Soc.\ (Sociology). Bold $=$ best per column; underline $=$ second-best. Evaluation stability table in Appendix~\ref{appendix:full-results}.}
\label{tab:kina_top15}
\resizebox{0.99\textwidth}{!}{%
\begin{tabular}{lccccccccccccc}
\toprule
Model & ALL & Agr. & Econ. & Edu. & Eng. & Hist. & Law & Arts & Mgt. & Med. & Phil. & Sci. & Soc. \\
\midrule
Gemini-3.1-Pro-Preview & \textbf{\hm{53.17}} & \hm{46.25} & \underline{\hm{50.74}} & \textbf{\hm{52.42}} & \textbf{\hm{53.17}} & \hm{36.54} & \textbf{\hm{54.23}} & \textbf{\hm{65.00}} & \textbf{\hm{62.93}} & \textbf{\hm{49.69}} & \hm{25.00} & \textbf{\hm{52.99}} & \textbf{\hm{61.84}} \\
Claude-Opus-4.6 & \underline{\hm{49.92}} & \textbf{\hm{61.88}} & \textbf{\hm{52.21}} & \textbf{\hm{52.42}} & \underline{\hm{51.06}} & \hm{15.38} & \hm{44.62} & \underline{\hm{58.46}} & \hm{49.14} & \underline{\hm{41.98}} & \textbf{\hm{36.54}} & \hm{51.06} & \underline{\hm{43.42}} \\
GPT-5.4 & \hm{48.55} & \hm{45.62} & \hm{44.12} & \hm{41.13} & \hm{50.49} & \textbf{\hm{42.31}} & \underline{\hm{50.77}} & \hm{58.08} & \hm{41.38} & \hm{37.04} & \hm{30.77} & \underline{\hm{52.11}} & \hm{42.11} \\
Doubao-Seed-2.0-Pro-260215 & \hm{44.99} & \underline{\hm{56.88}} & \hm{44.85} & \hm{37.90} & \hm{44.97} & \hm{25.00} & \underline{\hm{50.77}} & \hm{32.69} & \hm{41.38} & \hm{41.67} & \hm{30.77} & \hm{50.50} & \hm{39.47} \\
Gemini-3-Flash-Preview & \hm{43.91} & \hm{50.62} & \hm{47.06} & \hm{35.48} & \hm{42.69} & \hm{19.23} & \hm{47.69} & \hm{45.77} & \underline{\hm{56.03}} & \hm{38.58} & \hm{21.15} & \hm{47.01} & \hm{42.11} \\
Qwen3.5-397B-A17B & \hm{42.99} & \hm{55.62} & \hm{43.38} & \hm{30.65} & \hm{44.32} & \hm{21.15} & \hm{37.31} & \hm{42.69} & \hm{43.97} & \hm{33.64} & \hm{19.23} & \hm{50.50} & \hm{25.00} \\
Doubao-Seed-2.0-Lite-260215 & \hm{41.49} & \hm{48.75} & \hm{41.18} & \hm{29.84} & \hm{42.13} & \hm{11.54} & \hm{43.08} & \hm{36.92} & \hm{43.97} & \hm{34.57} & \hm{30.77} & \hm{47.14} & \hm{39.47} \\
Kimi-K2.5 & \hm{40.24} & \hm{48.75} & \hm{39.71} & \hm{42.74} & \hm{38.88} & \hm{28.85} & \hm{50.00} & \hm{41.15} & \hm{42.24} & \hm{32.41} & \hm{15.38} & \hm{43.53} & \hm{25.00} \\
GPT-5.2 & \hm{39.52} & \hm{47.50} & \hm{34.56} & \hm{38.71} & \hm{41.72} & \hm{30.77} & \hm{38.85} & \hm{37.69} & \hm{30.17} & \hm{30.25} & \textbf{\hm{36.54}} & \hm{43.16} & \hm{28.95} \\
Qwen3.5-27B & \hm{39.35} & \hm{43.75} & \hm{35.29} & \hm{47.58} & \hm{40.26} & \hm{28.85} & \hm{39.62} & \hm{35.77} & \hm{39.66} & \hm{31.48} & \hm{19.23} & \hm{44.15} & \hm{23.68} \\
Qwen3.5-122B-A10B & \hm{38.88} & \hm{51.25} & \hm{32.35} & \hm{35.48} & \hm{40.10} & \hm{15.38} & \hm{41.54} & \hm{40.38} & \hm{42.24} & \hm{29.94} & \hm{11.54} & \hm{42.41} & \hm{26.32} \\
DeepSeek-V3.2-Thinking & \hm{38.01} & \hm{53.12} & \hm{44.12} & \hm{25.81} & \hm{36.93} & \underline{\hm{38.46}} & \hm{41.15} & \hm{32.69} & \hm{31.03} & \hm{37.96} & \hm{13.46} & \hm{41.92} & \hm{26.32} \\
Qwen3-Max-2025-09-23 & \hm{35.90} & \hm{43.12} & \hm{35.29} & \hm{34.68} & \hm{33.85} & \hm{28.85} & \hm{40.77} & \hm{32.31} & \hm{31.03} & \hm{34.57} & \hm{23.08} & \hm{40.92} & \hm{26.32} \\
GLM-5 & \hm{35.85} & \hm{35.00} & \hm{34.56} & \hm{42.74} & \hm{37.50} & \hm{32.69} & \hm{35.77} & \hm{39.23} & \hm{42.24} & \hm{32.10} & \hm{25.00} & \hm{33.83} & \hm{27.63} \\
Qwen3.5-35B-A3B & \hm{35.43} & \hm{37.50} & \hm{28.68} & \hm{36.29} & \hm{37.82} & \hm{26.92} & \hm{38.85} & \hm{33.08} & \hm{36.21} & \hm{30.56} & \hm{15.38} & \hm{37.06} & \hm{21.05} \\
Grok-4.1-Fast-Reasoning & \hm{33.73} & \hm{42.50} & \hm{22.79} & \hm{35.48} & \hm{34.42} & \hm{28.85} & \hm{26.54} & \hm{33.85} & \hm{36.21} & \hm{29.63} & \hm{17.31} & \hm{38.68} & \hm{21.05} \\
Qwen3-235B-A22B-Thinking-2507 & \hm{32.15} & \hm{40.62} & \hm{32.35} & \hm{27.42} & \hm{31.82} & \hm{30.77} & \hm{37.69} & \hm{30.77} & \hm{25.00} & \hm{34.88} & \hm{17.31} & \hm{32.34} & \hm{21.05} \\
Llama-4-Maverick-17B & \hm{31.62} & \hm{36.88} & \hm{31.62} & \hm{29.03} & \hm{32.06} & \hm{11.54} & \hm{20.77} & \hm{31.54} & \hm{32.76} & \hm{31.48} & \hm{21.15} & \hm{35.95} & \hm{28.95} \\
Minimax-M2.5 & \hm{30.28} & \hm{43.12} & \hm{24.26} & \hm{25.81} & \hm{31.66} & \hm{32.69} & \hm{31.15} & \hm{23.46} & \hm{20.69} & \hm{26.23} & \hm{13.46} & \hm{33.71} & \hm{25.00} \\
Qwen3.5-9B & \hm{30.09} & \hm{37.50} & \hm{23.53} & \hm{36.29} & \hm{29.55} & \hm{25.00} & \hm{30.38} & \hm{26.92} & \hm{30.17} & \hm{28.09} & \hm{15.38} & \hm{32.96} & \hm{26.32} \\
Qwen3-Next-80B-A3B-Instruct & \hm{30.09} & \hm{40.62} & \hm{30.88} & \hm{24.19} & \hm{28.00} & \hm{36.54} & \hm{35.77} & \hm{30.38} & \hm{21.55} & \hm{33.95} & \hm{23.08} & \hm{31.59} & \hm{10.53} \\
Claude-Sonnet-4.6 & \hm{30.01} & \hm{32.50} & \hm{27.94} & \hm{32.26} & \hm{30.03} & \hm{15.38} & \hm{22.69} & \hm{39.62} & \hm{22.41} & \hm{21.91} & \hm{28.85} & \hm{34.45} & \hm{26.32} \\
Llama-3.1-405B-Instruct & \hm{29.59} & \hm{40.62} & \hm{29.41} & \hm{38.71} & \hm{28.81} & \hm{32.69} & \hm{20.77} & \hm{28.85} & \hm{27.59} & \hm{24.38} & \hm{15.38} & \hm{33.83} & \hm{25.00} \\
Qwen3-235B-A22B & \hm{29.37} & \hm{45.00} & \hm{36.76} & \hm{25.81} & \hm{28.81} & \hm{32.69} & \hm{30.38} & \hm{29.62} & \hm{29.31} & \hm{26.54} & \hm{19.23} & \hm{28.36} & \hm{21.05} \\
Step-3.5-Flash & \hm{29.12} & \hm{41.25} & \hm{27.21} & \hm{21.77} & \hm{28.98} & \underline{\hm{38.46}} & \hm{36.15} & \hm{27.69} & \hm{17.24} & \hm{30.86} & \hm{13.46} & \hm{28.23} & \hm{26.32} \\
Qwen3.5-4B & \hm{28.50} & \hm{34.38} & \hm{27.94} & \hm{36.29} & \hm{28.57} & \hm{19.23} & \hm{26.92} & \hm{25.38} & \hm{32.76} & \hm{24.69} & \hm{21.15} & \hm{30.47} & \hm{19.74} \\
Qwen3-Next-80B-A3B-Thinking & \hm{28.28} & \hm{39.38} & \hm{28.68} & \hm{22.58} & \hm{29.14} & \hm{34.62} & \hm{31.15} & \hm{25.00} & \hm{17.24} & \hm{32.41} & \hm{13.46} & \hm{26.49} & \hm{25.00} \\
Qwen3-32B & \hm{27.41} & \hm{40.00} & \hm{30.15} & \hm{32.26} & \hm{26.81} & \hm{32.69} & \hm{25.77} & \hm{23.08} & \hm{25.86} & \hm{27.16} & \hm{19.23} & \hm{28.14} & \hm{15.79} \\
Qwen3-30B-A3B-Thinking-2507 & \hm{27.03} & \hm{39.38} & \hm{31.62} & \hm{18.55} & \hm{26.70} & \hm{19.23} & \hm{29.62} & \hm{19.62} & \hm{16.38} & \hm{34.57} & \hm{23.08} & \hm{26.74} & \hm{23.68} \\
Llama-3-70B-Instruct & \hm{26.61} & \hm{23.75} & \hm{17.65} & \hm{29.03} & \hm{28.57} & \hm{32.69} & \hm{20.77} & \hm{25.77} & \hm{20.69} & \hm{22.22} & \hm{28.85} & \hm{28.50} & \hm{38.16} \\
Qwen3-14B & \hm{26.00} & \hm{37.50} & \hm{33.82} & \hm{26.61} & \hm{25.32} & \hm{15.38} & \hm{23.85} & \hm{21.54} & \hm{24.14} & \hm{27.16} & \hm{13.46} & \hm{27.49} & \hm{18.42} \\
Qwen2-72B-Instruct & \hm{24.92} & \hm{35.00} & \hm{26.47} & \hm{20.16} & \hm{21.02} & \hm{23.08} & \hm{20.00} & \hm{31.54} & \hm{32.76} & \hm{20.37} & \textbf{\hm{36.54}} & \hm{28.36} & \hm{30.26} \\
Qwen3-4B-Thinking-2507 & \hm{24.83} & \hm{31.25} & \hm{33.09} & \hm{24.19} & \hm{23.13} & \hm{11.54} & \hm{21.15} & \hm{21.92} & \hm{22.41} & \hm{29.94} & \hm{13.46} & \hm{26.87} & \hm{25.00} \\
Qwen3-30B-A3B & \hm{24.28} & \hm{44.38} & \hm{30.88} & \hm{18.55} & \hm{22.89} & \hm{23.08} & \hm{25.00} & \hm{23.08} & \hm{24.14} & \hm{21.91} & \hm{15.38} & \hm{24.88} & \hm{14.47} \\
Qwen2.5-72B-Instruct & \hm{23.03} & \hm{25.62} & \hm{29.41} & \hm{32.26} & \hm{21.43} & \hm{28.85} & \hm{22.31} & \hm{24.62} & \hm{25.86} & \hm{21.60} & \hm{19.23} & \hm{22.89} & \hm{15.79} \\
Qwen3-8B & \hm{22.27} & \hm{31.25} & \hm{25.74} & \hm{28.23} & \hm{20.05} & \hm{19.23} & \hm{26.15} & \hm{19.62} & \hm{21.55} & \hm{21.91} & \hm{11.54} & \hm{23.63} & \hm{17.11} \\
Qwen3-4B & \hm{21.50} & \hm{25.00} & \hm{24.26} & \hm{24.19} & \hm{20.29} & \hm{25.00} & \hm{19.62} & \hm{20.38} & \hm{26.72} & \hm{17.59} & \hm{25.00} & \hm{23.88} & \hm{13.16} \\
Qwen3.5-2B & \hm{20.52} & \hm{21.25} & \hm{19.12} & \hm{29.84} & \hm{20.13} & \hm{26.92} & \hm{14.62} & \hm{18.46} & \hm{25.00} & \hm{17.59} & \hm{7.69} & \hm{22.76} & \hm{26.32} \\
Qwen3-1.7B & \hm{18.33} & \hm{16.25} & \hm{21.32} & \hm{22.58} & \hm{17.29} & \hm{15.38} & \hm{18.85} & \hm{19.23} & \hm{17.24} & \hm{16.98} & \hm{17.31} & \hm{19.53} & \hm{19.74} \\
Mixtral-8x7B-Instruct & \hm{17.83} & \hm{13.12} & \hm{16.91} & \hm{17.74} & \hm{17.86} & \hm{15.38} & \hm{11.54} & \hm{15.77} & \hm{29.31} & \hm{15.12} & \hm{17.31} & \hm{21.02} & \hm{19.74} \\
Qwen3.5-0.8B & \hm{16.66} & \hm{11.88} & \hm{18.38} & \hm{29.03} & \hm{16.72} & \hm{34.62} & \hm{10.77} & \hm{19.23} & \hm{19.83} & \hm{13.58} & \hm{28.85} & \hm{15.42} & \hm{14.47} \\
Qwen3-0.6B & \hm{14.49} & \hm{10.62} & \hm{13.24} & \hm{17.74} & \hm{13.15} & \hm{11.54} & \hm{16.54} & \hm{19.62} & \hm{22.41} & \hm{10.80} & \hm{19.23} & \hm{15.30} & \hm{10.53} \\
\bottomrule
\end{tabular}%
}
\end{table*}

\subsection{Ranking stability under bounded test budgets}
\label{subsec:stability}

A central concern with compact benchmarks is whether observed gaps between frontier models survive resampling. We address this directly with a nonparametric bootstrap: for each of $B=1{,}000$ replications, we draw a stratified random subsample of $\rho \cdot 899$ items (stratified by discipline, $\rho \in \{0.5, 0.7, 0.9\}$), recompute every model's accuracy, and recompute the top-$10$ ranking. We report the average Kendall $\tau$ between the bootstrap ranking and the full-sample ranking, and the rank-$1$ retention rate.

\begin{table}[htbp]
\centering
\captionsetup{font=footnotesize}
\caption{Ranking stability under stratified subsampling. Top-$10$ Kendall $\tau$ averaged over $1{,}000$ replications, with rank-$1$ retention. Numbers reflect a preliminary run on the released test set.}
\label{tab:stability}
\footnotesize
\setlength{\tabcolsep}{3pt}
\begin{tabular}{lcc}
\toprule
Subsample rate $\rho$ & Top-$10$ Kendall $\tau$ & Rank-1 retention \\
\midrule
$0.50$ & $0.89$ & $0.94$ \\
$0.70$ & $0.93$ & $0.98$ \\
$0.90$ & $0.97$ & $1.00$ \\
\bottomrule
\end{tabular}%
\end{table}

Table~\ref{tab:stability} shows that top-$10$ rankings remain highly stable even at $\rho=0.5$, and that Gemini-3.1-Pro-Preview retains rank-$1$ in at least $94\%$ of replications. Pairwise gaps below roughly $2$ percentage points are not statistically resolvable at the $95\%$ level, so we caution against fine-grained rank claims among models in the compressed mid-tier (e.g., Doubao-Lite vs.\ Kimi-K2.5). The stability analysis provides a quantitative warrant for the compact-benchmark design without overclaiming resolution that a $899$-item budget cannot support.

\subsection{Empirical validation of the tournament mechanism}
\label{subsec:tournament-audit}

We test whether the prediction of Theorem~\ref{thm:tournament}---that tournament review increases caught-flaw rates over flat-payment review---holds in our own pipeline. We compare two operating regimes from internal logs: a flat-payment pilot phase (the first $\sim$$1{,}000$ submitted items, no tournament, no audit) and a subsequent tournament phase (with bonus and audit). We measure three quantities. The reviewer-asymmetric catch rate is the fraction of flaws identified by reviewer A but missed by reviewer B; under flat payment with identical-effort baselines this should sit near $50\%$ of all flaws under random allocation, while under tournament the asymmetric incentive should drive both reviewers toward effort, narrowing the gap. The audit-flagged rate is the fraction of items approved by both reviewers but later flagged in the principal's stochastic audit; tournament should reduce this rate. The total caught-flaw rate at Stage~2 is the fraction of items revised or rejected.

\begin{table}[htbp]
\centering
\captionsetup{font=footnotesize}
\caption{Tournament vs.\ flat-payment review (in-house logs). Higher caught-flaw rate is better. Audit-flagged rate is the fraction of twice-approved items that fail audit; lower is better. Preliminary numbers from internal logs.}
\label{tab:tournament-audit}
\footnotesize
\setlength{\tabcolsep}{3pt}
\newcolumntype{C}{>{\centering\arraybackslash}X}
\begin{tabular}{lcc}
\toprule
Metric & Flat-payment phase & Tournament phase \\
\midrule
Caught-flaw rate (Stage 2)        & $0.41$  & $0.58$  \\
Reviewer-asymmetric catch rate    & $0.49$  & $0.32$  \\
Audit-flagged rate (post-Stage 2) & $0.087$ & $0.034$ \\
\bottomrule
\end{tabular}
\end{table}

Table~\ref{tab:tournament-audit} shows that the tournament phase increases the caught-flaw rate from $41\%$ to $58\%$ and reduces the audit-flagged rate by roughly a factor of $2.5$. The reviewer-asymmetric catch rate falls under tournament, consistent with both reviewers exerting more effort. We emphasize that this comparison is observational rather than randomized: the two phases are not interleaved, and we cannot fully rule out time trends or annotator-pool composition shifts. A randomized A/B design and a wider replication across disciplines are left to future work; the present result is best read as evidence consistent with Theorem~\ref{thm:tournament} rather than as a causal verification.

\subsection{Diagnostic findings: tool use and discipline structure}
\label{subsec:diagnostic}

The accuracy table alone hides two findings that we view as the principal diagnostic value of \name{}: how much each model gains from web search, and where in the disciplinary spectrum frontier models actually separate from one another. We unpack each in turn.

\paragraph{Tool-use yields consistent but non-uniform gains.}
Table~\ref{tab:kina_tool_use_scores} reports tool-use evaluation on five frontier models, each equipped with the provider's native web search under unlimited interaction turns. Web search yields universally positive gains, ranging from $+1.50$ to $+5.17$ points. The pattern is non-monotonic in base capability: both the weakest model in the cohort (GPT-5.2) and the strongest (Gemini-3.1-Pro-Preview) gain the most, while the GPT-5.4 variants gain the least. We read this as evidence that retrieval serves two distinct functions on \name{}---filling parametric gaps in lower-capability models, and supplying grounding evidence that highly capable models can synthesize into verified reasoning chains---rather than as a strict capability multiplier. The cohort is small, so the U-shape should be confirmed at scale before being treated as universal; we restrict the tool-use cohort to five models because each tool-use run is roughly $4\times$ the inference cost of a base run, and broader sweeps are reserved for the public leaderboard. Future leaderboards may benefit from reporting tool-use efficiency (accuracy gained per search query) as an independent capability dimension.




\begin{table}[t]
\centering
\captionsetup{font=footnotesize,skip=3pt}
\caption{Performance comparison of direct inference and tool-use inference.}
\label{tab:kina_tool_use_scores}

\footnotesize
\setlength{\tabcolsep}{3.2pt}
\renewcommand{\arraystretch}{0.95}

\begin{tabular}{@{}lccc@{}}
\toprule
\textbf{Model} & \textbf{Direct} & \textbf{Web} & \textbf{$\Delta$} \\
\midrule
GPT-5.4-Med. & $48.55{\pm}0.56$ & $50.33{\pm}1.23$ & $+1.78$ \\
GPT-5.4-High & $48.86{\pm}0.55$ & $50.36{\pm}1.75$ & $+1.50$ \\
GPT-5.2 & $39.52{\pm}0.99$ & $43.66{\pm}1.11$ & $+4.14$ \\
Gemini-3.1-Pro & $53.17{\pm}0.72$ & $58.34{\pm}0.42$ & $+5.17$ \\
Claude-Opus-4.6 & $49.92{\pm}0.32$ & $52.53{\pm}0.74$ & $+2.61$ \\
\bottomrule
\end{tabular}
\vspace{-0.4em}
\end{table}

\paragraph{Humanities and social-science content drives discrimination at the frontier.}
Table~\ref{tab:discrimination} reports per-discipline statistics over the top-$10$ models. The STEM-oriented disciplines---Science and Engineering---show the smallest spreads, with $\Delta=9.83$ and $14.29$ respectively; this is consistent with the hypothesis that their content is densely represented in pre-training corpora and amenable to systematic reasoning, so frontier models converge in performance. The humanities and social-science contents show the opposite pattern: Sociology ($\Delta=38.16$), Management ($32.76$), and Literature \& Arts ($32.31$) display spreads three to four times larger. History combines very low mean accuracy ($25.96\%$) with high CV ($34.63$), suggesting it remains a persistent challenge across the model spectrum rather than a low-tier-only problem. At the frontier, performance differentiation on \name{} is dominated by humanities and social-science content rather than STEM-oriented content. The phenomenon is suggestive rather than conclusive---disciplines with the largest $\Delta$ also have the smallest item counts (Sociology has $19$ items, Philosophy $13$, History $13$), so part of the spread is sample variance---but the contrast is large enough, and one-sided enough, that it is worth surfacing as an empirical pattern that compact benchmarks can discover.

\begin{table*}[t]
\centering
\captionsetup{font=footnotesize}
\caption{Discrimination indices over the top-$10$ models. Mean, SD, CV computed over the top-$10$ models ranked by overall accuracy. $\Delta=\mathrm{Max}-\mathrm{Min}$. Largest $\Delta$ in bold.}
\label{tab:discrimination}
\small
\renewcommand{\arraystretch}{1.1}
\begin{tabular}{lrrrrrr}
\toprule
Discipline & Mean Acc.\ & SD & CV & Max & Min & $\Delta$ \\
\midrule
Agronomy            & 50.56 & 5.47  & 10.81 & 61.88 & 43.75 & 18.13 \\
Economics           & 43.31 & 5.57  & 12.85 & 52.21 & 34.56 & 17.65 \\
Education           & 40.89 & 7.64  & 18.69 & 52.42 & 29.84 & 22.58 \\
Engineering         & 44.97 & 4.67  & 10.39 & 53.17 & 38.88 & 14.29 \\
History             & 25.96 & 8.99  & 34.63 & 42.31 & 11.54 & 30.77 \\
Law                 & 45.69 & 5.56  & 12.16 & 54.23 & 37.31 & 16.92 \\
Literature \& Arts  & 45.42 & 10.61 & 23.37 & 65.00 & 32.69 & 32.31 \\
Management          & 45.09 & 8.63  & 19.14 & 62.93 & 30.17 & 32.76 \\
Medicine            & 37.13 & 5.69  & 15.34 & 49.69 & 30.25 & 19.44 \\
Philosophy          & 26.54 & 7.19  & 27.08 & 36.54 & 15.38 & 21.16 \\
Science             & 48.21 & 3.50  & 7.27  & 52.99 & 43.16 & 9.83 \\
Sociology           & 37.10 & 11.19 & 30.17 & 61.84 & 23.68 & \textbf{38.16} \\
\bottomrule
\end{tabular}
\vspace{-0.4em}
\end{table*}

\subsection{Parameter scaling}
\label{subsec:scaling}

Figure~\ref{fig:scaling} shows scaling curves for the Qwen3 and Qwen3.5 families, dense and MoE. Three observations stand out. First, Qwen3.5 dense scales at roughly $14.8$ points per decade of total parameters, nearly double the $7.6$ points-per-decade slope of Qwen3; this is consistent with generational improvements in pre-training data and recipe contributing substantially to \name{} performance, although confounders such as post-training pipeline differences cannot be ruled out from this figure alone. Second, MoE models exceed dense models at matched active parameter counts, suggesting a benefit of sparse activation for knowledge-intensive tasks of the kind \name{} probes. Third, the scaling curve shows mild log-linear saturation at the upper end---Qwen3.5-397B-A17B at $42.99\%$ versus Qwen3.5-122B-A10B at $38.88\%$---indicating that parameter scaling alone may face diminishing returns on \name{}, and that the humanities and social-science gap surfaced in Table~\ref{tab:discrimination} will likely require methodological rather than purely scale-based progress.

\begin{figure}[htbp]
\centering
\includegraphics[width=\linewidth]{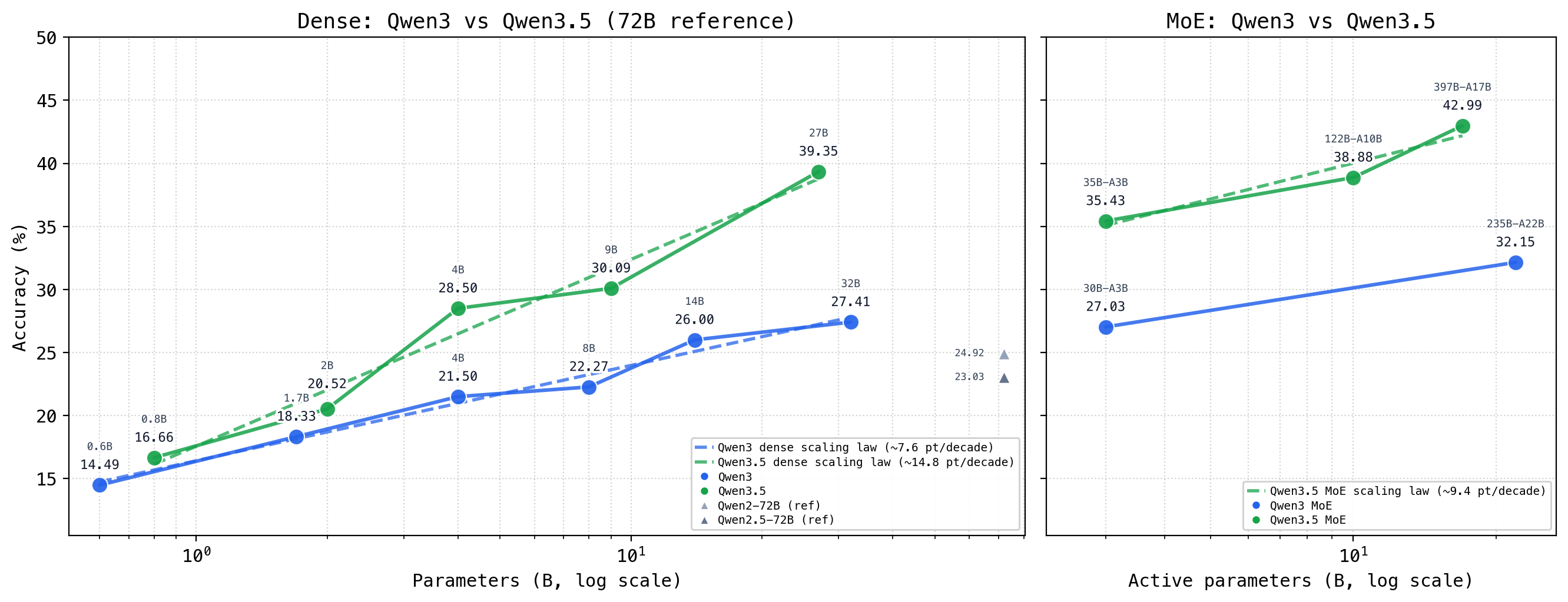}
\captionsetup{font=footnotesize}
\caption{Parameter scaling on \name{}. Left: dense models, accuracy vs.\ total parameters (log scale). Right: MoE models, accuracy vs.\ active parameters (log scale). Generation-over-generation slope increases from roughly $7.6$ (Qwen3) to $14.8$ (Qwen3.5) points per decade.}
\label{fig:scaling}
\end{figure}

%% file: 6_limitations.tex
\section{Limitations}
\label{sec:limitations}

We highlight five limitations.

\paragraph{Sample-size variance.} \name{} is intentionally compact. While
\S\ref{subsec:stability} shows that top-$10$ rankings remain stable under
$50\%$ subsampling (Kendall $\tau \approx 0.89$), pairwise gaps below
$\sim$$2$ percentage points are not statistically resolvable at the $95\%$
level. Fine-grained per-discipline claims with denominators $<$$30$ items
should be treated as suggestive rather than confirmatory.

\paragraph{Difficulty drift.} The 3-of-5 flagship-failure filter
(\S\ref{sec:construction}) couples \name{}'s difficulty distribution to the
capabilities of the five LLMs at construction time. As frontier models
improve, the absolute difficulty of \name{} will erode and the benchmark
will require periodic recalibration. We commit to releasing a
\name{}-v$x.0$ refresh whenever the strongest evaluated model exceeds
$70\%$ overall accuracy.

\paragraph{Subjectivity in representativeness elicitation.} The disciplinary
prototype $\Sigma_d$ used to anchor support centrality
(\S\ref{subsec:representativeness}) is elicited from human experts. Different
experts within the same discipline may produce non-identical $\Sigma_d$, and
we do not currently audit inter-expert agreement on $\Sigma_d$ itself.
Proposition~\ref{thm:submodular} guarantees a $(1-1/e)$ approximation of the
operational proxy $F_d^{\mathrm{sp}}$, not of an underlying population
representativeness quantity.

\paragraph{Tournament theory scope.} Theorem~\ref{thm:tournament} compares
tournament with flat payment under FOSD assumptions. It does not establish
truthfulness in the mechanism-design sense, nor does it rule out collusion
in the repeated game; collusion deterrence in our pipeline relies on
stochastic principal audits, whose effectiveness is empirical. The
empirical validation in \S\ref{subsec:tournament-audit} is observational, not
randomized.

\paragraph{Tool-use evaluation confounds.} We use each model provider's
native web-search tool. Search engine indices, backend processing, and
retrieval policies vary across providers and over time. Tool-use accuracy
should therefore be interpreted as effectiveness of the integrated system,
not the foundation model's intrinsic reasoning capacity.

\paragraph{Cultural and linguistic scope.} \name{}'s factuality bar
privileges English-language Q1 journals, monographs, CSSCI core journals,
and authoritative domain sources. Disciplines with regionally divergent
canons (e.g., comparative law, vernacular literature) may be
under-represented in this anchoring, even when the resulting items are
technically correct. We invite community contributions for non-English
discipline expansions.

%% file: 7_conclusion.tex
\section{Conclusion}

We presented \name{}, a knowledge benchmark designed to be diagnostic rather
than merely difficult. Its design is anchored by two formal results: a
$(1-1/e)$-approximation guarantee for greedy selection under budgeted support
centrality (Proposition~\ref{thm:submodular}), and a first-order stochastic
dominance improvement of released review quality under a bonus-on-bar
tournament mechanism over flat payment (Theorem~\ref{thm:tournament}). The
empirical evaluation of $42$ frontier models surfaces three findings useful
for benchmark methodology: (i)~ranking stability remains reasonably consistent under $50\%$
stratified subsampling, providing a quantitative warrant for the compact
design; (ii)~tool-use yields universally positive but non-uniform gains, with
a non-monotonic relationship between base capability and gain magnitude;
(iii)~discrimination at the frontier is dominated by humanities and
social-science content, not hard sciences---a phenomenon visible only under
fine-grained disciplinary breakdown.

We release \name{}, the annotation and reviewer manuals, the LLM-judge
rubrics, and the evaluation framework. We hope the formal-incentive
methodology travels to other benchmark efforts and that the
representativeness-as-submodular-coverage framing finds use beyond the
LLM-evaluation context.

%% file: 8_contributions_and_acknowledgements.tex
\section{Contributions and Acknowledgements}

KINA is developed through a collaboration among 2077AI, Multimodal Art Projection (M-A-P), the University of Tokyo, and Carnegie Mellon University. The collaboration combines community-driven benchmark development, open-source AI data infrastructure, and academic research expertise in evaluation methodology and empirical analysis.

Our team members contribute to the development of \name{} from the following perspectives:

\begin{center}
\begin{minipage}{0.42\textwidth}
\begin{itemize}[leftmargin=*]
    \item Benchmark design and disciplinary taxonomy
    \item Data collection and annotation management
    \item Data quality inspection and audit
\end{itemize}
\end{minipage}
\hfill
\begin{minipage}{0.42\textwidth}
\begin{itemize}[leftmargin=*]
    \item Incentive mechanism design
    \item Model evaluation and result analysis
    \item Paper writing and project release
\end{itemize}
\end{minipage}
\end{center}

%% file: appendix_a_samples.tex
\section{Data Samples}
\label{appendix:samples}

\begin{tcolorbox}[
    enhanced,
    breakable,
    colback=gray!5,
    colframe=blue!50!black,
    coltitle=white,
    title={Pseudo-Multi-Choice Sample (Some Content Omitted )},
    sharp corners=south,
    boxrule=0.8pt,
]
\textbf{Discipline: Engineering}

\textbf{Question:}\\
Which of the following options are correct?\\
1. $D\in\mathbb{R}^{6M\times 3N_c}$ and $M$ is a diagonal matrix. By defining $N = D^\top M^{-1}D$, it follows that $N\in\mathbb{R}^{3N_c\times 3N_c}$. The Schur complement is given by $S = N + \hat{M} - BE^{-1}C$. Since $\hat{M}$ is diagonal and $BE^{-1}C$ is a block-diagonal matrix with $3\times 3$ blocks, $S$ possesses the identical sparsity pattern as $N$.\\
2. In the contact pairs of granular dynamics, the friction cone constraint requires that the magnitude of the tangential force impulse vector does not exceed the product of the static friction coefficient and the normal force impulse. When the static friction coefficient is 0.3 and the normal force impulse is 10 N$\cdot$s, the maximum possible magnitude of the tangential force impulse vector is 3 N$\cdot$s, and this value satisfies the friction cone constraint.\\
3. The AMEN Cross complexity bound in this setting scales as $\mathcal{O}(r^3 N)$ (linear in the number of unknowns $N$). For $N=10^6$ and maximum TT-rank $r=10$, the computational cost is approximately $10^9$ operations; however, this contradicts the claim of "sublinear in $N$" (a linear scaling in $N$ cannot be sublinear).\\
4. Using the time-stepping update formula: $M\left(v^{k+1}-v^k\right)=\Delta t\,f_B + \sum_{i\in A(q^k,\delta)} D_i\gamma_i$. Given $\Delta t=0.1\ \text{s}$, $M=2I$, $v^k=3\ \text{m/s}$, $\Delta t f_B=4\ \text{N s}$, and $\sum D_i\gamma_i=6\ \text{N s}$, we calculate $v^{k+1}$ as follows:
\[
\begin{aligned}
v^{k+1} &= v^k + \frac{1}{M}\left(\Delta t\,f_B + \sum D_i\gamma_i\right) = 3 + \frac{1}{2}\left(4 + 6\right) = 8\ \text{m/s}.
\end{aligned}
\]\\
5. With $N = D^\top M^{-1}D$, for $M=2$ rigid bodies (yielding $6M=12$ degrees of freedom, DOF) and $N_c=1$ contact (yielding $3N_c=3$ multipliers), $N$ is a $3\times 3$ matrix, as it is defined by the product $D^\top M^{-1}D$ where $D\in\mathbb{R}^{12\times 3}$, $M^{-1}\in\mathbb{R}^{12\times 12}$, and thus acts on the contact multiplier space (not generalized-velocity space).\\
6. For the position-based normal complementarity constraint: $\gamma_{i,n}\ge 0,\ \Phi_i(q)\ge 0,\ \Phi_i(q)\gamma_{i,n}=0$. If $\Phi_i(q)=0.5\ \text{m}$ and $\gamma_{i,n}=2\ \text{N s}$, the constraint is \textbf{not} satisfied: while both quantities are nonnegative, their product is $1\ \text{m N s} \neq 0$, violating the complementarity condition ($\Phi_i(q)\gamma_{i,n}=0$).\\
7. For TT matrix-vector products used in the paper's TT-based preconditioner, the complexity is $\mathcal{O}(r^2 N\log N)$. While this complexity scales linearly in $N$ (up to a logarithmic factor $\log N$), the cost is still considered *asymptotically sublinear in $N$* when normalized by $N$—or more precisely, *sublinear in the sense of superlinear scaling avoidance*—because $\log N$ grows much slower than $N$ itself. When $N$ doubles, the cost increases by a factor of approximately $2\log(2N)/\log N \approx 2(1+\log 2/\log N)$, which remains close to 2 (linear scaling) but avoids the superlinear growth that would violate sublinearity claims.

\textbf{Options:}\\
A) 1,2\\
B) 2,3\\
C) 3,4\\
D) 4,5\\
E) 5,6\\
F) 6,7\\
G) 2,4\\
H) 3,5\\
I) 4,6\\
J) 5,7

\textbf{Explanations of A-J}\\
A) Statement 1 is correct in the Corona et al. setting: the excerpt defines $D\in\mathbb{R}^{6M\times 3N_c}$, uses a diagonal matrix $M$, and forms $N=D^\top M^{-1}D$ such that $N\in\mathbb{R}^{3N_c\times 3N_c}$. The excerpt further notes $\\hat{M}$ is diagonal and $BE^{-1}C$ is block-diagonal (with $3\times 3$ blocks), implying $S$ shares the sparsity pattern of $N$.\\
Statement 2: The friction cone constraint establishes that the magnitude of the tangential force impulse vector cannot exceed the product of the static friction coefficient ($\mu_i$) and the normal force impulse ($\gamma_{in}$). For $\mu_i$ = 0.3 and $\gamma_{in}$ = 10 N·s, the maximum allowable tangential force impulse magnitude is 0.3 * 10 = 3 N·s, which directly satisfies the constraint's requirement.\\
\textit{Explanations of other options are omitted}

\textbf{Sources of A-J:}\\
A)
Corona, E., Gorsich, D., Jayakumar, P., \& Veerapaneni, S. (2019). Tensor train accelerated solvers for nonsmooth rigid body dynamics. \textit{Applied Mechanics Reviews}, 71(5), 050804.\\
\includegraphics[width=0.18\linewidth]{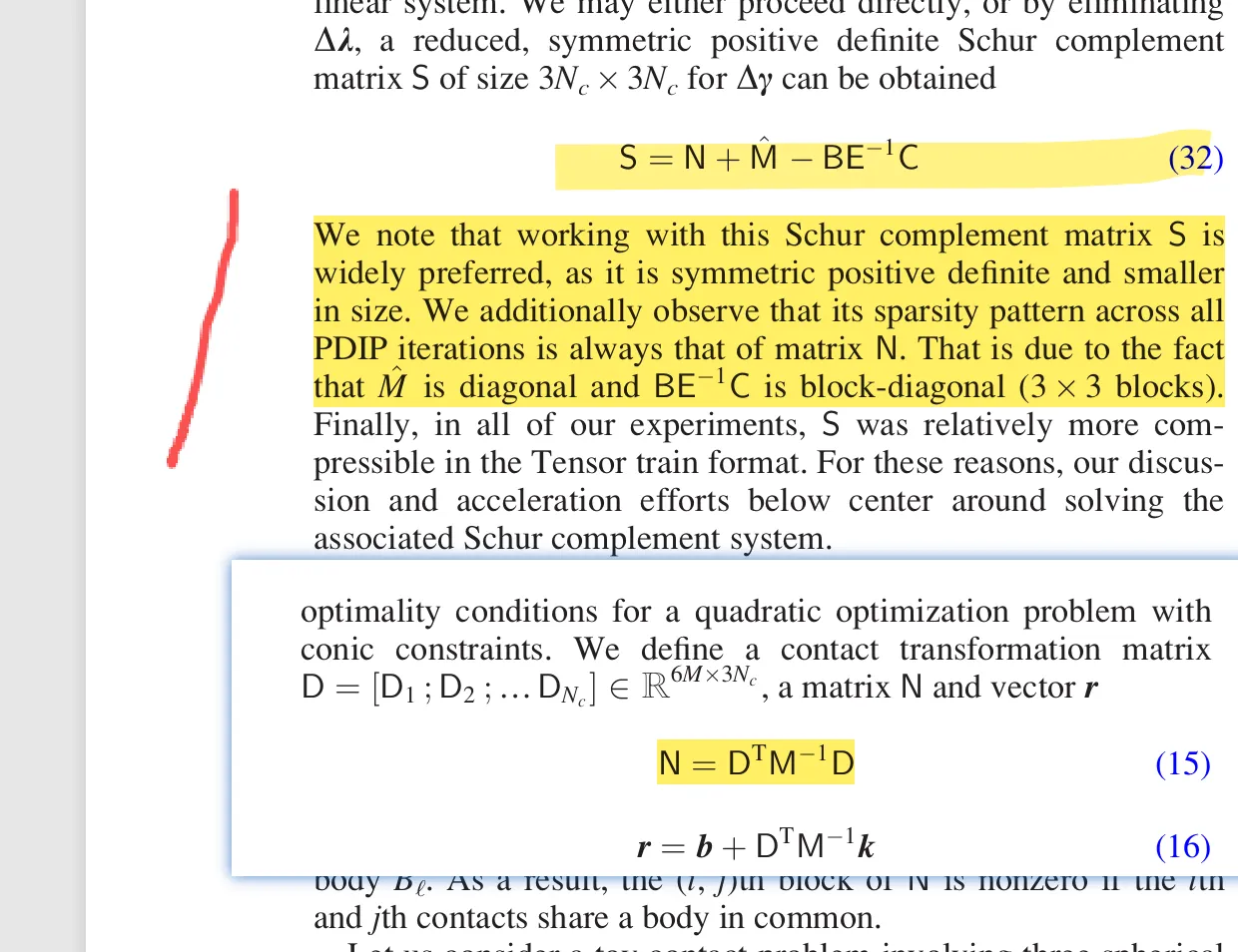}

\textit{Sources of other options are omitted}

\textbf{Question Source:}\\
Corona, E., Gorsich, D., Jayakumar, P., \& Veerapaneni, S. (2019). Tensor train accelerated solvers for nonsmooth rigid body dynamics. \textit{Applied Mechanics Reviews}, 71(5), 050804.

\textbf{Question Materail:}\\
Corona, E., Gorsich, D., Jayakumar, P., \& Veerapaneni, S. (2019). Tensor train accelerated solvers for nonsmooth rigid body dynamics. \textit{Applied Mechanics Reviews}, 71(5), 050804.\\
\includegraphics[width=0.12\linewidth]{sample1_fig1.png}
\includegraphics[width=0.12\linewidth]{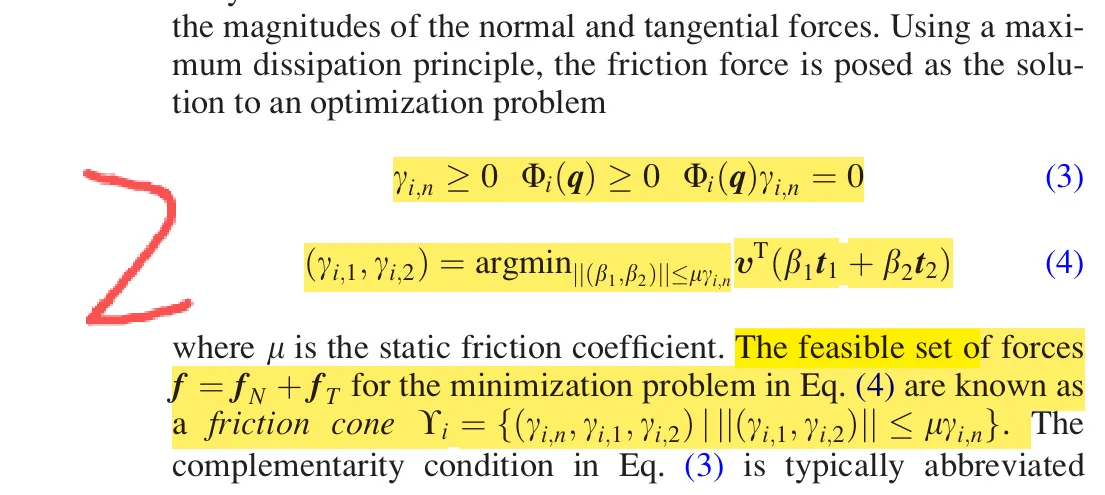}
\includegraphics[width=0.12\linewidth]{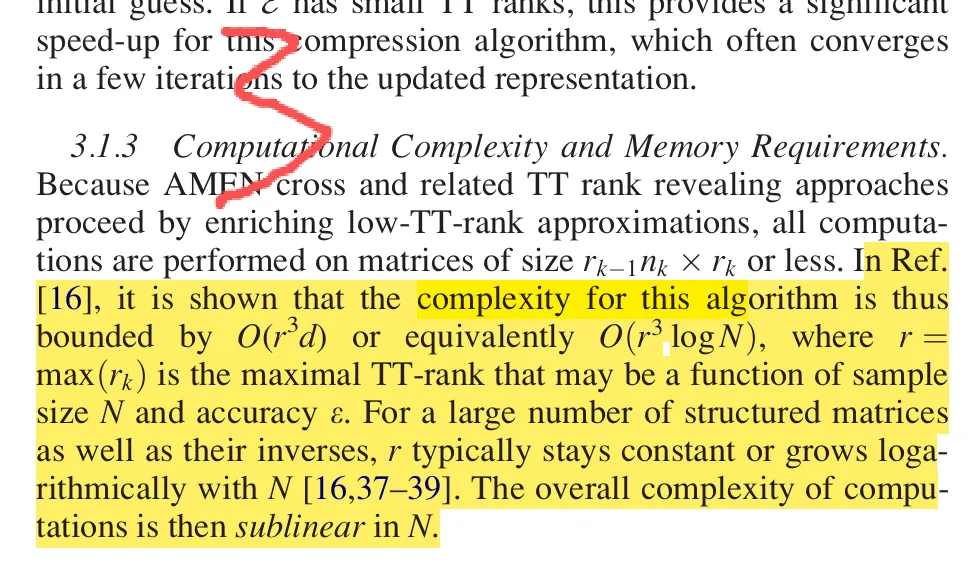}
\includegraphics[width=0.12\linewidth]{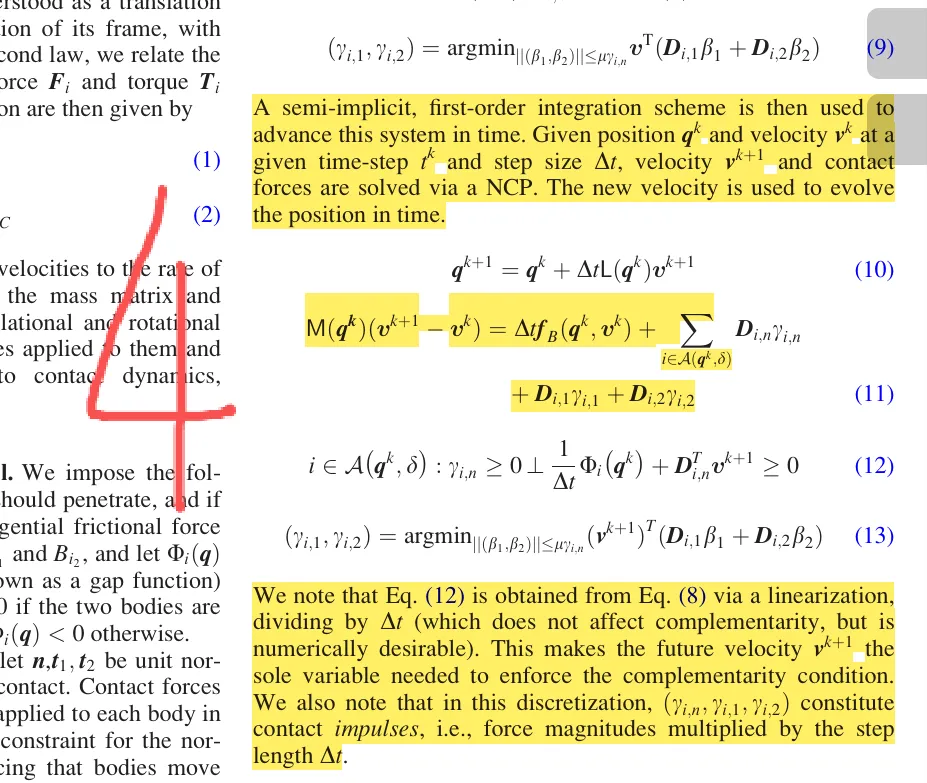}
\includegraphics[width=0.12\linewidth]{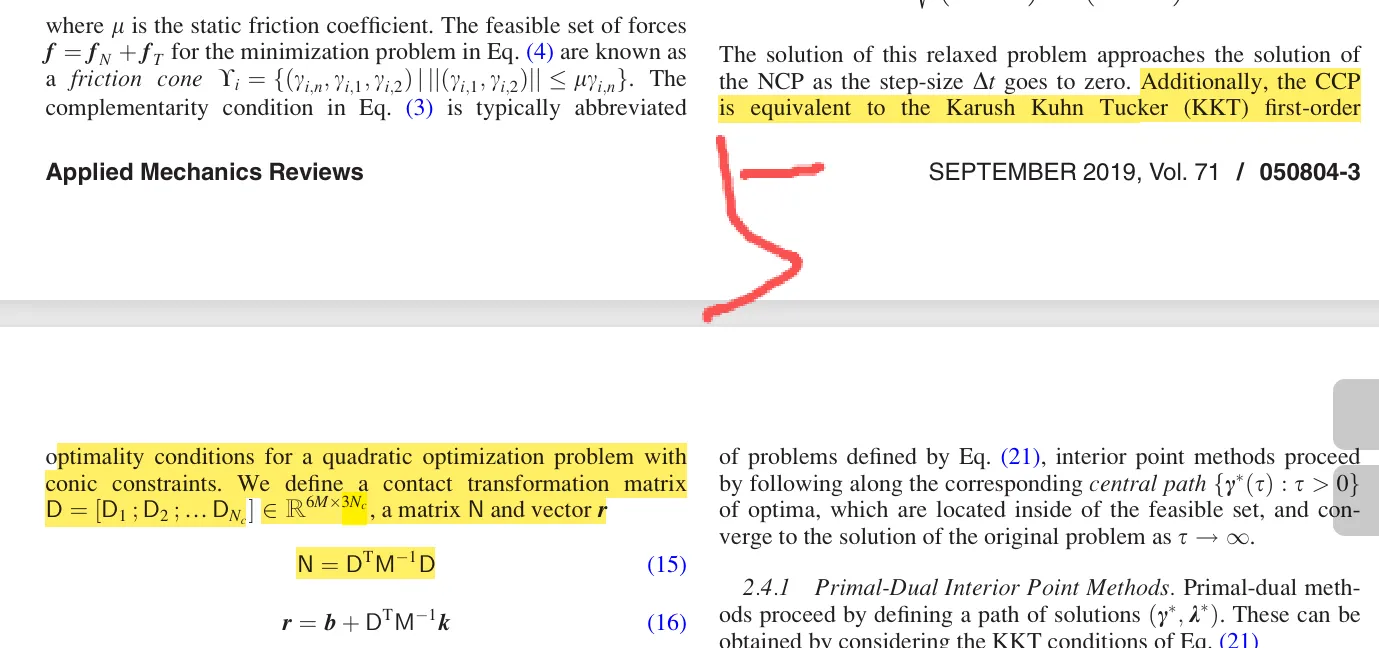}
\includegraphics[width=0.12\linewidth]{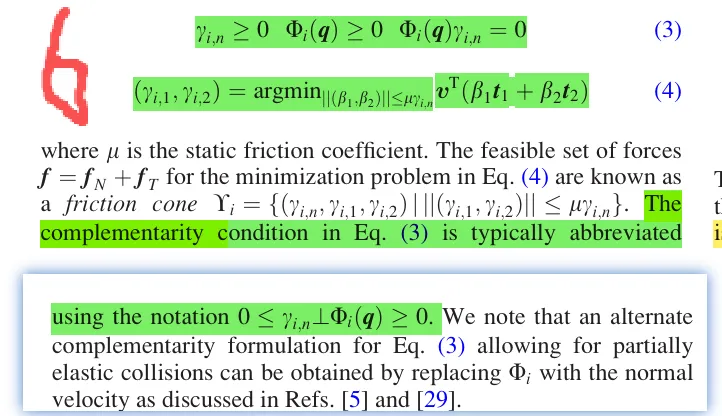}
\includegraphics[width=0.12\linewidth]{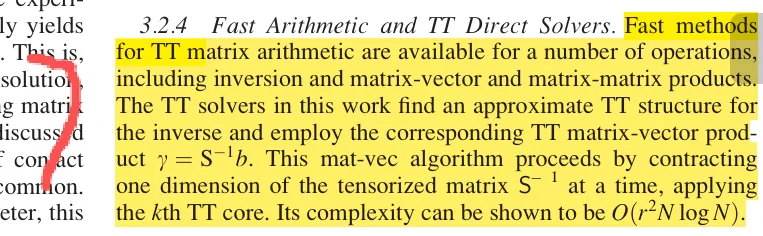}\\
\textbf{Correct Answer:} A
\end{tcolorbox}

\begin{tcolorbox}[
    enhanced,
    breakable,
    colback=gray!5,
    colframe=blue!50!black,
    coltitle=white,
    title={Standard Sample (Some Content Omitted )},
    sharp corners=south,
    boxrule=0.8pt,
]
\textbf{Discipline: Science}

\textbf{Question:}

In the extremely complex physical environment of the Galactic Bulge, observed Planetary Nebulae ($PN$) are typically subjected to severe interstellar reddening. Furthermore, high-excitation central stars ($T_{\mathrm{eff}} > 150,000$ K) can produce hard ultraviolet photons capable of doubly ionizing helium ($\mathrm{He^{++}}$). Accurately determining the ionizing photon production rate $Q(\mathrm{H^0})$ for such objects requires not only correcting for line-of-sight extinction but also verifying the ionization structure through radio continuum observations.

Observational Data and Analysis Clues:

Data 1 (Spectroscopy): The observed H$\beta$ flux is $F_{\mathrm{H\beta}}^{\mathrm{obs}} = 2.40 \times 10^{-13}$ erg s$^{-1}$ cm$^{-2}$. The observed Balmer decrement is $F_{\mathrm{H\alpha}}/F_{\mathrm{H\beta}} = 12.0$. Under the condition of $T_e = 12,000$ K, the theoretical Case B ratio is $2.85$. The extinction law follows $f(\mathrm{H\alpha}) = -0.35$ (reddening curve offset relative to H$\beta$).

Data 2 (Ionization Structure): The nebula shows high excitation. Spectral fitting yields a helium abundance of $y = n_{\mathrm{He}}/n_{\mathrm{H}} = 0.12$, with $40\%$ of helium in the doubly ionized state ($\mathrm{He^{++}}$) and the remainder in the singly ionized state ($\mathrm{He^{+}}$).

Data 3 (Radio Observation): The Very Large Array ($VLA$) measured a radio flux density of $S_{\nu} = 15.2$ mJy at $5$ GHz. The nebula is optically thin in the radio band.

Data 4 (Geometry and Environment): The distance to the nebula is $d = 8.1$ kpc. The electron temperature is measured at $T_e = 12,000$ K. Due to gravitational potential and evolutionary constraints, the total ionized gas mass $M_{\mathrm{gas}}$ must be less than $0.6 M_{\odot}$.

Physical Constants Reference:

At $T_e = 12,000$ K, $\alpha_B = 2.22 \times 10^{-13}$ cm$^3$ s$^{-1}$

At $T_e = 12,000$ K, $\alpha_{\mathrm{H\beta}}^{\mathrm{eff}} = 2.58 \times 10^{-14}$ cm$^3$ s$^{-1}$

Radio-to-H$\beta$ flux conversion formula (for $T_e \approx 10,000$-$15,000$ K):
\[
\begin{aligned}
Q(\mathrm{H^0}) &\approx 7.54 \times 10^{46} \times S_{\nu} \times d^2 \times T_e^{-0.45}
\end{aligned}
\]

Integrating the extinction correction and radio observation data, calculate the most robust hydrogen ionizing photon production rate $Q(\mathrm{H^0})$ for the central star of this nebula.

\textbf{Options:}\\
A) $5.60 \times 10^{49}s^{-1}$\\
B) $1.09 \times 10^{45}s^{-1}$\\
C) $3.38. \times 10^{45}s^{-1}$\\
D) $1.38 \times 10^{34}s^{-1}$\\
E) $1.36 \times 10^{48}s^{-1}$\\
F) $6.15 \times 10^{49}s^{-1}$\\
G) $3.94 \times 10^{44}s^{-1}$\\
H) $8.22 \times 10^{44}s^{-1}$\\
I) $3.17 \times 10^{49}s^{-1}$\\
J) $4.52 \times 10^{47}s^{-1}$

\textbf{Explanations of A-J}\\
B) Step 1: Calculate the Extinction Correction.

We use the difference between the observed and theoretical Balmer decrements to derive the reddening constant $c(\mathrm{H\beta})$. According to the formula:
\[
\begin{aligned}
\frac{F_{\mathrm{H\alpha}}}{F_{\mathrm{H\beta}}}
&= \left( \frac{F_{\mathrm{H\alpha}}}{F_{\mathrm{H\beta}}} \right)_{\text{theo}} \times 10^{c(\mathrm{H\beta}) }
\end{aligned}
\]

Substituting the data:
\[
\begin{aligned}
12.0 = 2.85 \times 10^{c(\mathrm{H\beta}) \times (0 - (-0.35))}
\end{aligned}
\]
\[
4.21 = 10^{0.35 \times c(\mathrm{H\beta})}
\]

Taking the logarithm:
\[
0.35 \times c(\mathrm{H\beta}) = \log_{10}(4.21) \approx 0.624
\]

Solving this gives $c(\mathrm{H\beta}) \approx 1.78$. Thus, the dereddened (true) H$\beta$ flux is:
\[
\begin{aligned}
F_{\mathrm{H\beta}}^{\text{dered}}
= F_{\mathrm{H\beta}}^{\mathrm{obs}} \times 10^{1.78} = 2.40 \times 10^{-13} \times 60.26
&\approx 1.446 \times 10^{-11} \text{ erg s}^{-1} \text{ cm}^{-2}
\end{aligned}
\]

\noindent{Step 2: Estimate $Q(\mathrm{H^0})$ Based on Optical Data.}

First, calculate the H$\beta$ luminosity:
\[
\begin{aligned}
L_{\mathrm{H\beta}}
= 4\pi d^2 F_{\mathrm{H\beta}}^{\text{dered}}
&\approx 4\pi \times (2.5 \times 10^{22} \text{ cm})^2 
\times 1.446 \times 10^{-11}
\approx 1.135 \times 10^{35} \text{ erg s}^{-1}
\end{aligned}
\]

The corresponding H$\beta$ photon emission rate:
\[
\begin{aligned}
N_{\mathrm{H\beta}}
= L_{\mathrm{H\beta}} / h\nu_{\mathrm{H\beta}}
\approx 2.78 \times 10^{46} \text{ s}^{-1}
\end{aligned}
\]
(where $h\nu_{\mathrm{H\beta}} \approx 4.08 \times 10^{-12}$ erg).

Using the ratio of recombination coefficients:
\[
\begin{aligned}
Q(\mathrm{H^0})_{\text{opt}}
= N_{\mathrm{H\beta}} \times (\alpha_B / \alpha_{\mathrm{H\beta}}^{\mathrm{eff}}) 
\approx 2.78 \times 10^{46} \times 8.60 
\approx 2.39 \times 10^{47} \text{ s}^{-1}
\end{aligned}
\]

\noindent{Step 3: Estimate $Q(\mathrm{H^0})$ Based on Radio Data.}

Radio continuum radiation originates from free-free emission and is unaffected by dust extinction, making it much more robust for objects buried in the Galactic Bulge. Using the provided radio conversion formula:
\[
\begin{aligned}
Q(\mathrm{H^0})_{\text{radio}}
= 7.54 \times 10^{46} \times S_{\nu} 
 \times d^2 \times T_e^{-0.45}
\end{aligned}
\]

Substituting the values (note $S_{\nu} = 15.2$ mJy $= 0.0152$ Jy):
\[
\begin{aligned}
Q(\mathrm{H^0})_{\text{radio}}
= 7.54 \times 10^{46} \times 0.0152 \times 8.1^2 \times (12,000)^{-0.45}
\end{aligned}
\]

Calculating the temperature term: $(12,000)^{-0.45} \approx 0.0145$.
\[
\begin{aligned}
Q(\mathrm{H^0})_{\text{radio}}
= 7.54 \times 10^{46} \times 0.0152
\times 65.61 \times 0.0145
\approx 1.09 \times 10^{45} \text{ s}^{-1}
\end{aligned}
\]

\noindent{Step 4: Physical Robustness Assessment and Conclusion.}

A significant discrepancy is found: $Q(\mathrm{H^0})_{\text{opt}}$ is much larger than $Q(\mathrm{H^0})_{\text{radio}}$. In the direction of the Galactic Bulge, due to variations in the local extinction law ($f(\lambda)$) and complex background scattering, fluxes corrected via the Balmer decrement often carry huge systematic errors. Conversely, the optically thin radio flux is directly proportional to the total number of ionizing photons and is independent of the filling factor or extinction models. Therefore, physically, the result derived from radio data, $1.09 \times 10^{45}$ s$^{-1}$, is considered reliable.

\textit{Explanations of other options are omitted}

\textbf{Sources of A-J:}\\
B)
Aksaker, N., Demirci, A., Erzincan, N., \& Akyuz, A. (2025). Photoionization Modeling of Planetary Nebulae in the Galactic Bulge. \textit{Advances in Space Research}\\
\includegraphics[width=0.18\linewidth]{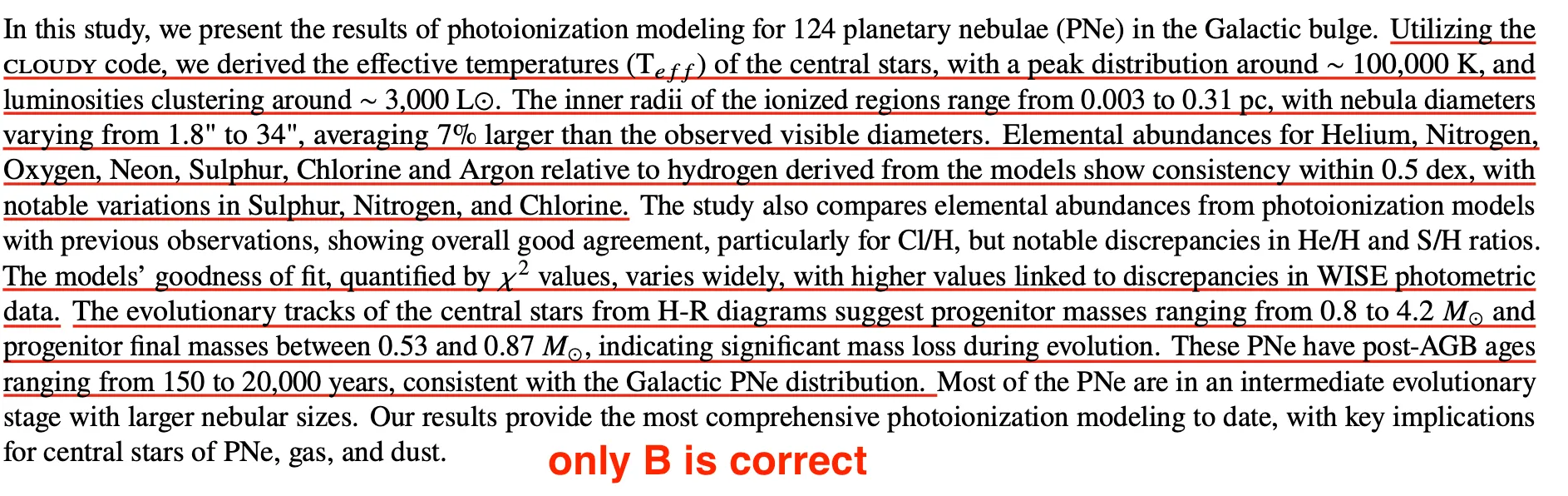}
\includegraphics[width=0.18\linewidth]{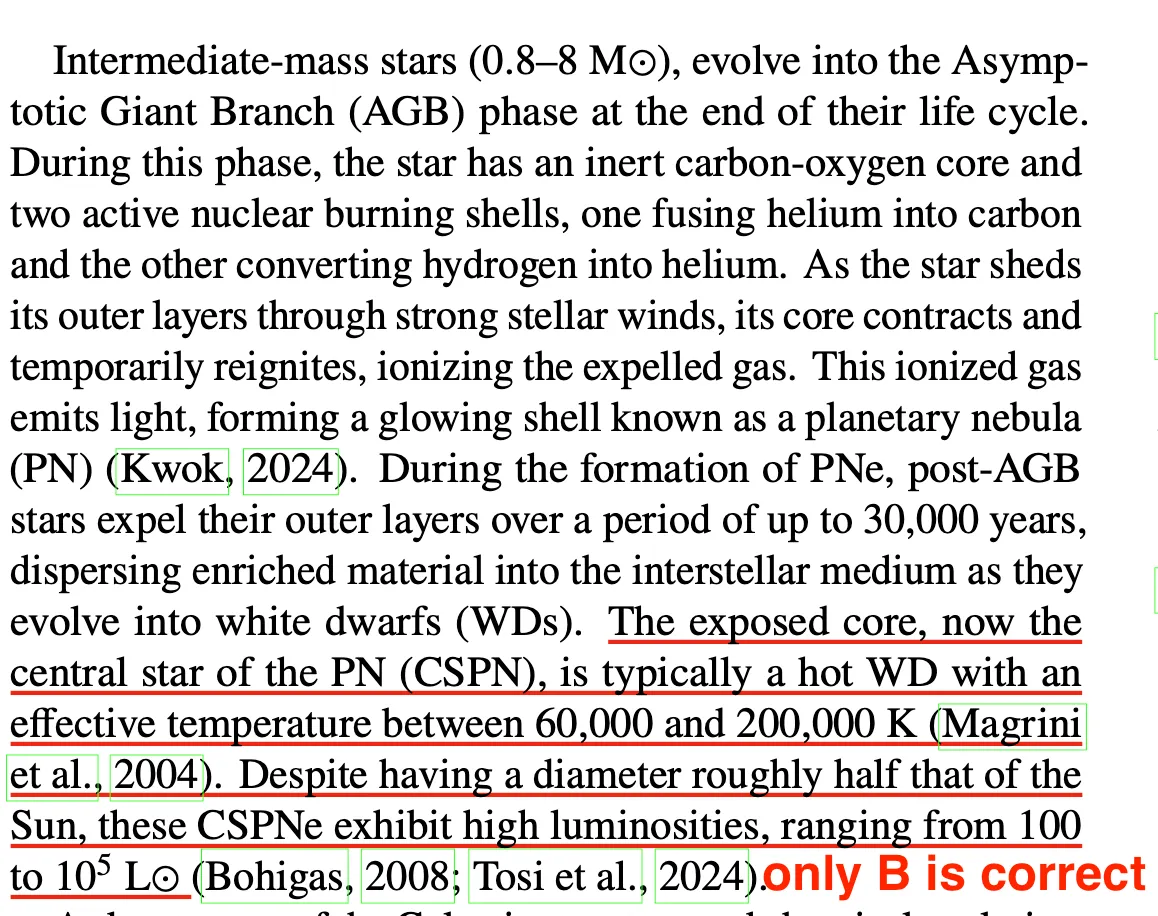}
\includegraphics[width=0.18\linewidth]{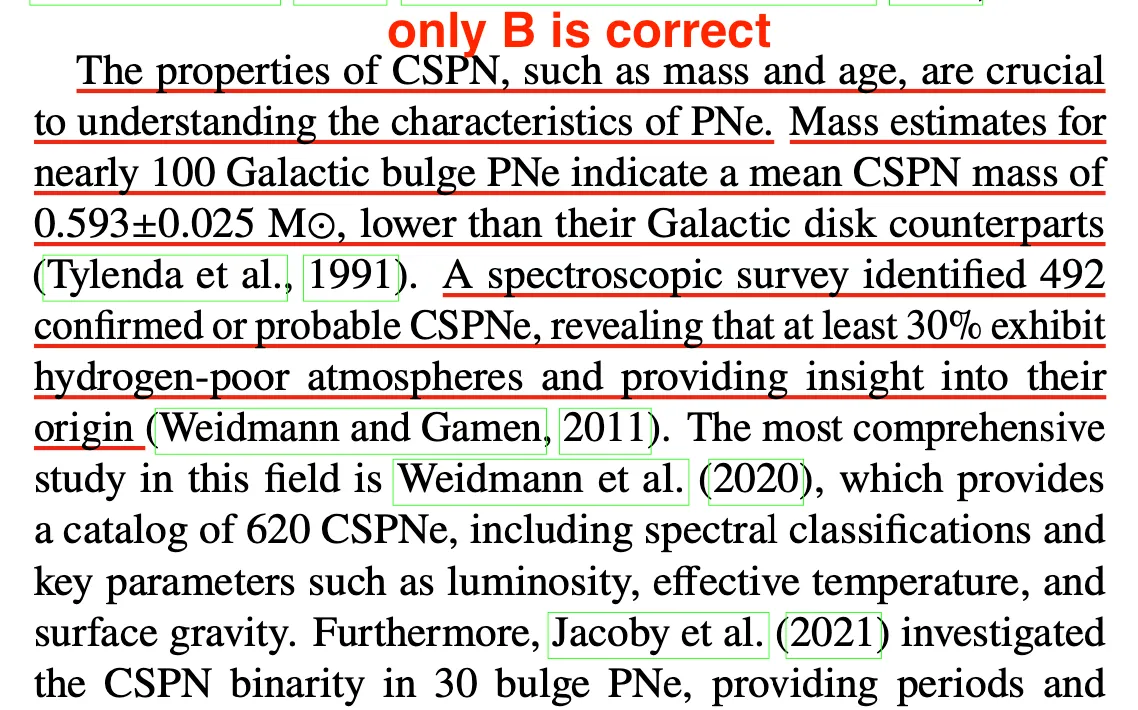}
\includegraphics[width=0.18\linewidth]{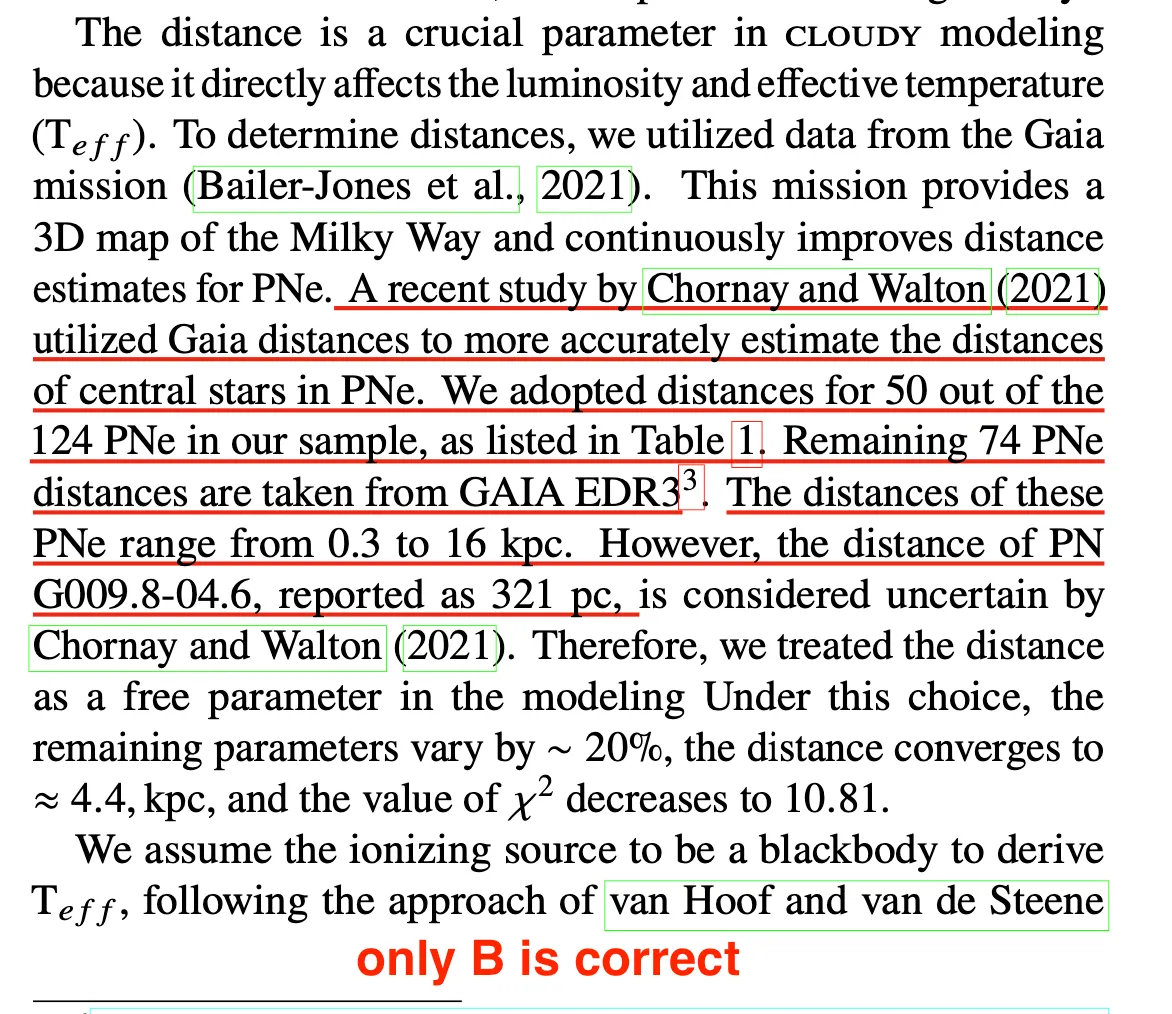}\\
\textit{Sources of other options are omitted}

\textbf{Question Source:}\\
Aksaker, N., Demirci, A., Erzincan, N., \& Akyuz, A. (2025). Photoionization Modeling of Planetary Nebulae in the Galactic Bulge. \textit{Advances in Space Research}\\
\url{https://www.sciencedirect.com/science/article/abs/pii/S0273117725011755}

\textbf{Question Materail:}\\
\includegraphics[width=0.18\linewidth]{sample2_fig1.png}
\includegraphics[width=0.18\linewidth]{sample2_fig2.png}
\includegraphics[width=0.18\linewidth]{sample2_fig3.png}
\includegraphics[width=0.18\linewidth]{sample2_fig4.png}

\textbf{Correct Answer:} B
\end{tcolorbox}

%% file: appendix_b_centrality.tex
\section{Budgeted support centrality under domain alignment}
\label{sec:appendix-support-centrality}

\paragraph{Scope.}
This appendix specifies a budgeted, operational proxy for domain support centrality used for filtering and final selection. Its role is to privilege items that are well supported by domain-aligned local structure, while reducing reliance on items that are idiosyncratic, weakly grounded, ambiguously specified, or near-duplicative. The proxy is not claimed to identify an underlying population quantity without additional modeling assumptions. Bootstrap lower percentiles are used as robustness-oriented summaries rather than formal lower confidence bounds. Complexity statements refer to LLM-call counts and sparse pair scoring under fixed hyperparameters. The structured signature \(\psi(q)\) is used as a low-cost operational representation; we do not assume that it exhausts the latent disciplinary structure.

\paragraph{Notation.}
For a domain \(d\), let the raw pool be \(\widetilde{\mathcal Q}_d=\{q_1,\dots,q_{N_d}\}\). Experts provide a compact prototype
\[
\Sigma_d=(\mathcal M_d,\mathcal P_d,\mathcal T_d,\mathcal C_d,\mathcal A_d),
\]
where \(\mathcal A_d\subset \widetilde{\mathcal Q}_d\) is a small anchor set used only as reference items (not as final candidates). Let \(h:\widetilde{\mathcal Q}_d\to\mathcal H_d\) be a coarse stratum map, \(\bar B_d\) a deduplicated reference bank, \(\mu_d\) a distribution on \(\bar B_d\), \(\mathcal N_k(q)\subseteq \bar B_d\setminus\{q\}\) a retrieved neighborhood (truncated if fewer than \(k\) items are available), \(\mathcal G(q):=\mathcal N_k(q)\cup\mathcal A_d\) the sparse comparison set, \(\mathcal K_d\) the accepted shortlist, and \(\mathcal S_d\subseteq\mathcal K_d\) the final set of size \(K_d\). If \(\bar B_d=\varnothing\) or \(\mathcal A_d=\varnothing\), we skip automatic selection and send the domain to manual review.

\paragraph{Item alignment and reference bank.}
A single item-level LLM call returns
\[
\begin{aligned}
(\psi(q),\widetilde A_d(q)) := \Gamma_{\theta,d}(q), 
\psi(q) &= (m_q,p_q,t_q,c_q),
\end{aligned}
\]
where \(m_q\) is the method family, \(p_q\) the core principle, \(t_q\) the answer type, and \(c_q\) salient constraints. With a small item label set
\[
\mathcal L_d^{\mathrm{item}}=\{(q,y_A(q))\},
\qquad
y_A(q)\in\{0,1\},
\]
we fit
\[
\hat A_d(q):=\mathrm{Cal}_A^{(d)}\!\big(\widetilde A_d(q),\psi(q),\Sigma_d\big)\in[0,1].
\]
For any derived score \(Z(q)\), let
\[
\mathrm{BootLQ}_{\delta}\!\big(\hat Z(q)\big)
:=
\operatorname{Quantile}_{\delta}\!\big(\{\hat Z^{(b)}(q)\}_{b=1}^B\big),
\]
where the bootstrap replicates refit the relevant calibrators and recompute downstream scores. We then define
\[
\begin{aligned}
B_d
&=
\Big\{
q\in \widetilde{\mathcal Q}_d:
\mathrm{BootLQ}_{\delta_A}\!\big(\hat A_d(q)\big)\ge \alpha_{\mathrm{ref}}
\Big\}, 
\bar B_d := \operatorname{Dedup}(B_d).
\end{aligned}
\]
For \(\bar B_{d,h}:=\{q\in\bar B_d:h(q)=h\}\), choose \(\omega_h>0\) with
\[
\sum_{h:\,|\bar B_{d,h}|>0}\omega_h=1,
\]
and set
\[
\mu_d(q)=
\begin{cases}
\dfrac{\omega_{h(q)}}{|\bar B_{d,h(q)}|}, & q\in \bar B_d,\\[1ex]
0, & q\notin \bar B_d.
\end{cases}
\]

\paragraph{Sparse pair scoring.}
Let \(x_q\) denote the text and \(e(q):=f_{\mathrm{emb}}(x_q,\psi(q))\). For each non-anchor item \(q\), retrieve
\[
\begin{aligned}
\mathcal N_k(q) := \operatorname{ANN}_k\!\big(e(q);\bar B_d\setminus\{q\}\big), 
\mathcal G(q) := \mathcal N_k(q)\cup\mathcal A_d.
\end{aligned}
\]
For every unordered pair \(\{q,u\}\) such that \(u\in \mathcal G(q)\) or \(q\in \mathcal G(u)\), we compute both directed features
\[
\begin{aligned}
\phi(q,u)
&=
\Phi\!\Big(
\psi(q),\psi(u), 
\operatorname{sim}_e(e(q),e(u)), 
\operatorname{sim}_x(x_q,x_u), 
\mathbf 1\{\operatorname{src}(q)=\operatorname{src}(u)\}
\Big).
\end{aligned}
\]
Using a small pair label set
\[
\mathcal L_d^{\mathrm{pair}}
=
\{((q,u),y_T(q,u),y_U(q,u))\},
\]
where \(y_T\) indicates shared domain support and \(y_U\) near-duplicate risk, we fit
\[
\begin{aligned}
\hat T_d^\rightarrow(q,u) := g_T^{(d)}\!\big(\phi(q,u)\big)\in[0,1],
\hat U_d^\rightarrow(q,u) := g_U^{(d)}\!\big(\phi(q,u)\big)\in[0,1].
\end{aligned}
\]
Symmetrizing both directions,
\[
\begin{aligned}
\hat T_d^{\mathrm{sym}}(q,u)
=
\frac{\hat T_d^\rightarrow(q,u)+\hat T_d^\rightarrow(u,q)}{2}, 
\hat U_d^{\mathrm{sym}}(q,u)
=
\frac{\hat U_d^\rightarrow(q,u)+\hat U_d^\rightarrow(u,q)}{2},
\end{aligned}
\]
we define, for \(q\neq u\),
\[
\hat S_d(q,u)
=
\big[
\hat T_d^{\mathrm{sym}}(q,u)
-
\lambda_{\mathrm{dup}}\,\hat U_d^{\mathrm{sym}}(q,u)
\big]_+.
\]
This is an operational proxy score rather than a probabilistic identity.

\paragraph{Single-item support surrogate.}
Let \(S_d^\star(q,u)\in[0,1]\) denote an unobserved shared-support quantity and
\[
C_d^\star(q)=\mathbb E_{Q\sim\mu_d}\big[S_d^\star(q,Q)\big].
\]
We do not estimate \(C_d^\star\) directly. For \(q\notin \mathcal A_d\), let
\[
m_q:=\sum_{u\in \mathcal N_k(q)}\mu_d(u),
\]
and define
\[
\begin{aligned}
\hat C_d(q)
&=
\sum_{u\in \mathcal N_k(q)} \mu_d(u)\,\hat S_d(q,u)
+ (1-m_q)\sum_{a\in\mathcal A_d}\pi_a\,\hat S_d(q,a),
\end{aligned}
\]
where \(\pi_a\ge 0\) and \(\sum_{a\in\mathcal A_d}\pi_a=1\). The second term is an anchor-based tail surrogate, included for budgeted robustness rather than formal unbiasedness.

\paragraph{Shortlisting and final selection.}
We keep a non-anchor item \(q\) iff
\[
\begin{aligned}
q\in \mathcal K_d
&\iff
\mathrm{BootLQ}_{\delta_A}\!\big(\hat A_d(q)\big)\ge \alpha_A 
\land\;
\mathrm{BootLQ}_{\delta_C}\!\big(\hat C_d(q)\big)\ge \alpha_C 
\land\;
D_d(q)\in \mathcal I_d.
\end{aligned}
\]
For coverage, define
\[
{\footnotesize
\hat S_d^{\mathrm{sp}}(q,u)=
\begin{cases}
1, & q=u\in \bar B_d,\\
\hat S_d(q,u), & q\neq u\ \text{and}
 u\in \mathcal G(q)\ \text{or}\ q\in \mathcal G(u),\\
0, & \text{otherwise}.
\end{cases}
}
\]
For a final set \(\mathcal S_d\subseteq \mathcal K_d\) of size \(K_d\), use
\begin{equation}
F_d^{\mathrm{sp}}(\mathcal S_d)
=
\sum_{u\in \bar B_d}\mu_d(u)\max_{q\in \mathcal S_d}\hat S_d^{\mathrm{sp}}(q,u).
\label{eq:Fsp_appendix}
\end{equation}
We optimize
\[
\max_{\mathcal S_d\subseteq \mathcal K_d} F_d^{\mathrm{sp}}(\mathcal S_d)
\quad\text{s.t.}\quad
|\mathcal S_d|=K_d,
\]
subject to
\[
L_h \le \sum_{q\in \mathcal S_d}\mathbf 1\{h(q)=h\} \le U_h,
\qquad \forall h\in\mathcal H_d,
\]
and
\[
\hat U_d^{\mathrm{sym}}(q,u) \le \tau_{\mathrm{dup}},
\qquad \forall q\neq u\in \mathcal S_d.
\]
The quota condition
\[
\sum_{h\in\mathcal H_d}L_h \le K_d \le \sum_{h\in\mathcal H_d}U_h
\]
is necessary but not sufficient for feasibility; feasibility also depends on \(\mathcal K_d\) and the duplicate constraint. Duplicate scores required by the last constraint are evaluated on demand on shortlisted pairs using the same cheap features and no additional LLM calls.

\begin{proposition}[Monotone submodularity of $F_d^{\mathrm{sp}}$]
\label{thm:submodular}
If \(\hat S_d^{\mathrm{sp}}(q,u)\ge 0\) for all \(q,u\), then \(F_d^{\mathrm{sp}}(\cdot)\) defined in~\eqref{eq:Fsp_appendix} is monotone submodular. Greedy maximization under the cardinality constraint \(|\mathcal S_d|=K_d\) therefore attains a $(1-1/e)$ approximation of the optimum of $F_d^{\mathrm{sp}}$ \citep{nemhauser1978submodular}.
\end{proposition}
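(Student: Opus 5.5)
The plan is to reduce everything to a single anchor $u\in\bar B_d$ and then sum. For a fixed $u$, define $f_u(\mathcal S):=\max_{q\in\mathcal S}\hat S_d^{\mathrm{sp}}(q,u)$, with the convention $f_u(\varnothing)=0$. This convention is consistent with the hypothesis $\hat S_d^{\mathrm{sp}}\ge 0$, and it makes $F_d^{\mathrm{sp}}(\varnothing)=0$, which the greedy guarantee of \citet{nemhauser1978submodular} needs. Then
\[
F_d^{\mathrm{sp}}=\sum_{u\in\bar B_d}\mu_d(u)\,f_u ,
\]
and $\mu_d(u)\ge 0$ by construction, since the $\omega_h$ are positive and the normalizers $|\bar B_{d,h}|$ are positive. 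Nonnegative combinations preserve both monotonicity and submodularity, so it is enough to show that each $f_u$ is monotone and submodular on the ground set $\mathcal K_d$.

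Monotonicity of $f_u$ holds because for $\mathcal S\subseteq\mathcal T$ a max over a larger set is at least as large. The empty set is covered by nonnegativity: $f_u(\varnothing)=0\le f_u(\mathcal T)$. For submodularity, take $\mathcal S\subseteq\mathcal T$ and $x\notin\mathcal T$, and write $s=\hat S_d^{\mathrm{sp}}(x,u)$. The marginal gains are
\[
f_u(\mathcal S\cup\{x\})-f_u(\mathcal S)=[s-f_u(\mathcal S)]_+,\qquad f_u(\mathcal T\cup\{x\})-f_u(\mathcal T)=[s-f_u(\mathcal T)]_+ .
\]
Since $f_u(\mathcal S)\le f_u(\mathcal T)$ and $t\mapsto[s-t]_+$ is nonincreasing, the first gain dominates the second. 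This is the diminishing-returns inequality, and again the case $\mathcal S=\varnothing$ uses $s\ge 0$.

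Finally, I invoke the Nemhauser--Wolsey--Fisher theorem. For a nonnegative, monotone, submodular $F$ with $F(\varnothing)=0$, the greedy algorithm run for $K_d$ steps returns $\mathcal S^{g}$ with $F(\mathcal S^{g})\ge(1-1/e)\max_{|\mathcal S|\le K_d}F(\mathcal S)$. By monotonicity, the maximum over $|\mathcal S|\le K_d$ equals the maximum over $|\mathcal S|=K_d$ as long as $|\mathcal K_d|\ge K_d$. For completeness I would sketch the standard step: with $\mathcal S^\ast$ optimal and $\mathcal S_i$ the greedy set after $i$ steps, submodularity and monotonicity give $F(\mathcal S^\ast)-F(\mathcal S_i)\le K_d\,(F(\mathcal S_{i+1})-F(\mathcal S_i))$. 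Unrolling this yields $F(\mathcal S^\ast)-F(\mathcal S_{K_d})\le(1-1/K_d)^{K_d}F(\mathcal S^\ast)\le e^{-1}F(\mathcal S^\ast)$.

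The main obstacle is scope, not calculation. The $(1-1/e)$ bound holds for the pure cardinality constraint $|\mathcal S_d|=K_d$, as the proposition states. It does \emph{not} automatically extend to the quota constraints $L_h\le\cdot\le U_h$ or to the pairwise duplicate constraint from the preceding optimization. The duplicate constraint is not even a matroid, and with upper quotas alone one only has a partition matroid, where greedy gives $1/2$. I would therefore state explicitly that the guarantee refers to the cardinality-constrained proxy optimum. I would also note that lazy greedy returns the same set as greedy, because under submodularity stale marginal gains are upper bounds on current ones, so the same bound holds for the variant used in the pipeline.
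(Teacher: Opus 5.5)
Your proposal is correct and matches the paper's proof. The paper likewise fixes an anchor $u$, sets $f_u(\mathcal S)=\max_{q\in\mathcal S}\hat S_d^{\mathrm{sp}}(q,u)$ with $f_u(\varnothing)=0$, writes the marginal gain as $\max\{0,\hat S_d^{\mathrm{sp}}(x,u)-f_u(\cdot)\}$ and uses $f_u(A)\le f_u(B)$, then passes to the nonnegative $\mu_d$-weighted sum and cites Nemhauser--Wolsey--Fisher. Your extra material (the unrolled greedy bound, the $|\mathcal K_d|\ge K_d$ remark, and the caveat that the quota and duplicate constraints are not covered) is consistent with the paper, which also disclaims any guarantee under the full constraint set.
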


\begin{proof}
For fixed \(u\in \bar B_d\), let \(f_u(\mathcal S):=\max_{q\in \mathcal S}\hat S_d^{\mathrm{sp}}(q,u)\) with \(f_u(\varnothing)=0\). For any \(A\subseteq B\) and \(x\notin B\),
\[
{\footnotesize
\begin{aligned}
f_u(A\cup\{x\})-f_u(A)
=
\max\big\{0, 
 \hat S_d^{\mathrm{sp}}(x,u)-f_u(A)\big\} 
\ge
\max\big\{0, 
 \hat S_d^{\mathrm{sp}}(x,u)-f_u(B)\big\} 
=
f_u(B\cup\{x\})-f_u(B),
\end{aligned}
}
\]
because \(f_u(A)\le f_u(B)\). Hence \(f_u\) is monotone submodular. Since
\[
\begin{aligned}
F_d^{\mathrm{sp}}(\mathcal S)
=
\sum_{u\in \bar B_d}\mu_d(u)\,f_u(\mathcal S), 
\mu_d(u) &\ge 0,
\end{aligned}
\]
\(F_d^{\mathrm{sp}}\) is a nonnegative weighted sum of monotone submodular functions and is therefore itself monotone submodular. The $(1-1/e)$ greedy guarantee then follows from \citet{nemhauser1978submodular}.
\end{proof}

\paragraph{Budget and implementation note.}
A typical domain uses two experts (about 10 total hours) for rubric alignment, item labels, pair labels, and audit, supporting roughly
\(|\mathcal L_d^{\mathrm{item}}|\approx 120\text{--}160\) and \(|\mathcal L_d^{\mathrm{pair}}|\approx 60\text{--}80\), with \(25\%\text{--}30\%\) overlap for agreement checks. We finally spent $\sim$\$100,000 for KINA.
Each item receives one main LLM call for \((\psi(q),\widetilde A_d(q))\), with optional re-checks on an uncertainty set \(\mathcal U_d\), so
\[
\begin{aligned}
B_{\mathrm{LLM}}^{(d)}
= N_d+|\mathcal U_d| 
&\le (1+\rho)N_d.
\end{aligned}
\]
The same signature \(\psi(q)\) is reused for alignment, retrieval, pair scoring, and duplicate checks; no second item-level LLM pass is required. With fixed \(k\), \(|\mathcal A_d|\), and bootstrap count \(B\), sparse retrieval and sparse support scoring scale linearly in \(N_d\). If all duplicate constraints on \(\mathcal K_d\) are materialized, the additional worst-case cost is \(O(|\mathcal K_d|^2)\). Final constrained selection is solved by greedy-with-repair as a heuristic, or by a small MIP when \(|\mathcal K_d|\) is moderate; we do not claim a general approximation guarantee under the full constraint set.

%% file: appendix_c_tournament.tex
\section{Comparative Incentives under a Noisy Tournament}
\label{appendix:theory}

\paragraph{Scope.}
This appendix makes a comparative claim relative to flat payment.
It does not prove truthfulness, rule out collusion, or solve the full repeated stochastic game.
The whitelist extension below is reduced-form: it compares pointwise effort incentives under a fixed continuation value, not endogenous stationary state distributions.
Throughout, \(X \fosd Y\) means first-order stochastic dominance:
\[
\begin{aligned}
X \fosd Y
&\Longleftrightarrow
F_X(t)\le F_Y(t), \qquad \forall t\in\mathbb{R}.
\end{aligned}
\]
Strict incentive conclusions additionally require \(\Delta p_{\min}>0\), which is not automatic if score noise is large or the quality bar is poorly calibrated.

\paragraph{Setup.}
Two reviewers \(i\in\{a,b\}\) evaluate the same item and choose effort \(e_i\in\{L,H\}\).
Let \(\psi_i(e_i)\) be reviewer \(i\)'s reduced-form private disutility of effort, and define the incremental cost of high effort by
\[
\kappa_i \triangleq \psi_i(H)-\psi_i(L).
\]
For \(e\in\{L,H\}\), let \(Q_i^e\in\mathbb{R}\) denote latent review quality with CDF \(F_e(q)\triangleq \Pr(Q_i^e\le q)\).

\begin{assumption}[Effort improves quality]
\label{ass:fosd}
$Q_i^H \fosd Q_i^L$, equivalently $F_H(q)\le F_L(q)$ for all $q$, with strict inequality on a set of positive measure.
\end{assumption}

\begin{assumption}[Noisy validated score]
\label{ass:noise}
The principal observes a noisy validated score
\[
S_i^e = g(Q_i^e)+\varepsilon_i,
\]
where \(g\) is strictly increasing and \(\varepsilon_i\) are i.i.d., continuous, and independent of all latent qualities. Let \(F_e^S(s)\triangleq \Pr(S_i^e\le s)\).
\end{assumption}

\begin{assumption}[Independent reviewer types]
\label{ass:types}
Conditional on the effort profile, reviewer qualities and score noises are independent across reviewers. Reviewer types \(\kappa_i\) are i.i.d.\ from a continuous CDF \(G\), and each reviewer observes only her own type.
\end{assumption}

\noindent Each reviewer receives base payment \(w_0\ge 0\), and a single bonus \(B>0\) is awarded to the reviewer with the higher validated score provided that the winning score clears a minimum bar \(\tau\):
\begin{equation}
W_i=w_0+B\cdot \mathbf{1}\{S_i>S_{-i},\; S_i\ge \tau\}.
\label{eq:wage}
\end{equation}
Define the winning probability \(p_i(e_i,e_{-i}) \triangleq \Pr(S_i>S_{-i},\; S_i\ge \tau\mid e_i,e_{-i})\), and the one-shot utilities
\[
\begin{aligned}
U_i^{\mathrm{tour}}(e_i;e_{-i})
= w_0+B\,p_i(e_i,e_{-i})-\psi_i(e_i), 
U_i^{\mathrm{flat}}(e_i)
= w_0-\psi_i(e_i).
\end{aligned}
\]

\paragraph{One-shot comparison.}
For any fixed opponent effort \(e_{-i}\in\{L,H\}\),
\[
F_e^S(s)=\Pr(S_i^e\le s)=\mathbb{E}\!\left[F_\varepsilon\!\big(s-g(Q_i^e)\big)\right].
\]
Since \(g\) and \(F_\varepsilon\) are increasing, the map \(q\mapsto F_\varepsilon(s-g(q))\) is decreasing, so Assumption~\ref{ass:fosd} together with Assumption~\ref{ass:noise} implies
\[
\begin{aligned}
S_i^H \fosd S_i^L,
\text{i.e.}
F_H^S(s)\le F_L^S(s) 
\forall s.
\end{aligned}
\]
Now define \(T_i\triangleq \max\{S_{-i}^{e_{-i}},\tau\}\).
Because scores are continuous, \(\{S_i>S_{-i},\;S_i\ge \tau\} = \{S_i>T_i\}\) almost surely.
Conditional on the effort profile, \(S_i^e\) is independent of \(T_i\). Hence
\[
p_i(e,e_{-i})
=
\Pr(S_i^e>T_i)
=
\mathbb{E}\!\left[1-F_e^S(T_i)\right].
\]
Therefore, the difference in winning probabilities is
\[
\begin{aligned}
\Delta p_i(e_{-i})
\triangleq
p_i(H,e_{-i})-p_i(L,e_{-i}) 
=
\mathbb{E}\!\left[F_L^S(T_i)-F_H^S(T_i)\right]
\ge 0.
\end{aligned}
\]
The one-shot gain from high effort is then
\[
\begin{aligned}
\Delta U_i^{\mathrm{tour}}(e_{-i})
\triangleq
U_i^{\mathrm{tour}}(H;e_{-i})-U_i^{\mathrm{tour}}(L;e_{-i}) 
=
B\,\Delta p_i(e_{-i})-\kappa_i,
\end{aligned}
\]
whereas under flat payment, \(\Delta U_i^{\mathrm{flat}} = U_i^{\mathrm{flat}}(H)-U_i^{\mathrm{flat}}(L) = -\kappa_i\).
Thus the tournament adds a positive extrinsic return \(B\,\Delta p_i(e_{-i})\), and high effort is weakly preferred whenever \(B\,\Delta p_i(e_{-i})\ge \kappa_i\).

\paragraph{Population implication and review-quality shift.}
By symmetry, write
\[
\begin{aligned}
\Delta p(L) \triangleq \Delta p_i(L),
\Delta p(H) \triangleq \Delta p_i(H),
\Delta p_{\min} \triangleq \min\{\Delta p(L),\Delta p(H)\}.
\end{aligned}
\]
Let \(\pi^m\) be the equilibrium high-effort rate under mechanism \(m\in\{\mathrm{flat},\mathrm{tour}\}\).

\begin{theorem}[Tournament FOSD improvement]
\label{thm:tournament}
Under Assumptions~\ref{ass:fosd}--\ref{ass:types}, if \(\Delta p_{\min}>0\), the equilibrium high-effort rate satisfies
\[
\pi^{\mathrm{tour}}
\;\ge\;
G(B\,\Delta p_{\min})
\;\ge\;
G(0)
\;=\;
\pi^{\mathrm{flat}}.
\]
Moreover, let \(Y^m=\Gamma(Q_a^m,Q_b^m,Z)\) denote released review quality under mechanism \(m\), where \(\Gamma\) is coordinatewise nondecreasing and \(Z\) is mechanism-invariant and independent of reviewer qualities. Then
\[
Y^{\mathrm{tour}} \fosd Y^{\mathrm{flat}}.
\]
\end{theorem}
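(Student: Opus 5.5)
The proof has two parts: an equilibrium bound on the high-effort rate, then a lifting of that bound to released quality.

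\textbf{Part 1: the bound on the high-effort rate.} I start from the one-shot computation just preceding the statement: $\Delta U_i^{\mathrm{tour}}(e_{-i})=B\,\Delta p_i(e_{-i})-\kappa_i$, with $\Delta p_i(e_{-i})\ge \Delta p_{\min}>0$ for both $e_{-i}\in\{L,H\}$.

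In any (Bayesian) equilibrium, reviewer $i$ faces some mixture over opponent efforts. Her expected gain from $H$ is therefore a convex combination of $B\,\Delta p(L)-\kappa_i$ and $B\,\Delta p(H)-\kappa_i$. This is at least $B\,\Delta p_{\min}-\kappa_i$.

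Hence every type with $\kappa_i< B\,\Delta p_{\min}$ strictly prefers $H$, whatever the opponent's strategy. Types with $\kappa_i=B\,\Delta p_{\min}$ form a null set because $G$ is continuous. Integrating over $\kappa_i\sim G$ gives $\pi^{\mathrm{tour}}\ge G(B\,\Delta p_{\min})$.

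Since $B\,\Delta p_{\min}>0$ and $G$ is monotone, $G(B\,\Delta p_{\min})\ge G(0)$. Under flat payment the gain is $-\kappa_i$, so only types with $\kappa_i\le 0$ choose $H$ (ties again have measure zero). This gives $\pi^{\mathrm{flat}}=G(0)$.

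\textbf{Part 2: FOSD of a single review.} A randomly drawn review under mechanism $m$ has the mixture CDF
\[
F^m(q)=\pi^m F_H(q)+(1-\pi^m)F_L(q)=F_L(q)-\pi^m\bigl(F_L(q)-F_H(q)\bigr).
\]
Assumption~\ref{ass:fosd} gives $F_L-F_H\ge 0$. So $F^m$ is pointwise nonincreasing in $\pi^m$, and $\pi^{\mathrm{tour}}\ge\pi^{\mathrm{flat}}$ yields $F^{\mathrm{tour}}\le F^{\mathrm{flat}}$, i.e.\ $Q_i^{\mathrm{tour}}\fosd Q_i^{\mathrm{flat}}$ for each $i$.

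This uses that each reviewer's effort depends only on her own type. Assumption~\ref{ass:types} then makes $Q_a^m$ and $Q_b^m$ independent, each with marginal $F^m$. I would restrict to symmetric type-contingent (cutoff) equilibria, which the previous paragraph shows exist in the relevant sense. Otherwise I would define $\pi^m$ per reviewer and run the same argument coordinatewise.

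\textbf{Part 3: lifting to the aggregator.} I use a quantile coupling. Take independent $U_a,U_b\sim\mathrm{Unif}(0,1)$, independent of $Z$, and set $\tilde Q_i^m=(F^m)^{-1}(U_i)$ with the generalized inverse. Then $(\tilde Q_a^m,\tilde Q_b^m,Z)$ has the same joint law as $(Q_a^m,Q_b^m,Z)$; this needs independence across reviewers and of $Z$. Also, $F^{\mathrm{tour}}\le F^{\mathrm{flat}}$ implies $(F^{\mathrm{tour}})^{-1}(u)\ge (F^{\mathrm{flat}})^{-1}(u)$ pointwise.

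Because $\Gamma$ is coordinatewise nondecreasing, $\Gamma(\tilde Q_a^{\mathrm{tour}},\tilde Q_b^{\mathrm{tour}},Z)\ge \Gamma(\tilde Q_a^{\mathrm{flat}},\tilde Q_b^{\mathrm{flat}},Z)$ almost surely. Almost-sure dominance under a coupling implies $Y^{\mathrm{tour}}\fosd Y^{\mathrm{flat}}$.

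\textbf{Main obstacle.} I expect the equilibrium step to be the delicate part, not the coupling. The statement speaks of ``the'' equilibrium rate without fixing the equilibrium concept. The bound must hold uniformly over opponent strategies, which the dominance argument via $\Delta p_{\min}$ delivers. But the independence-across-reviewers claim used in Parts 2 and 3 needs efforts to be measurable functions of one's own type only. If the opponent's effort is correlated with one's own, I would fall back to conditioning on $e_{-i}$, still using $\Delta p_{\min}$ as a lower bound, and note that $\Delta p_i(e_{-i})$ is computed conditionally on the effort profile, exactly as in the preceding derivation.
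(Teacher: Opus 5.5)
Your proof is correct and follows the paper's argument step for step: the uniform lower bound $B\,\Delta p_{\min}-\kappa_i$ on the gain from high effort (with indifferent types null by continuity of $G$), the mixture-CDF comparison $F^m=\pi^m F_H+(1-\pi^m)F_L$, and the quantile coupling pushed through the coordinatewise nondecreasing $\Gamma$. Your extra care about mixed and possibly asymmetric equilibria, and about why efforts are independent across reviewers (which the paper simply attributes to Assumption~\ref{ass:types}), is a welcome refinement. One small overstatement: Part~1 does not itself show that a symmetric cutoff equilibrium exists. This is harmless, because your bound holds in any equilibrium and your per-reviewer fallback covers the asymmetric case.
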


\begin{proof}
If \(\Delta p_{\min}>0\), then for all opponent actions, \(\Delta U_i^{\mathrm{tour}}(e_{-i}) \ge B\,\Delta p_{\min}-\kappa_i\). Hence every type satisfying \(\kappa_i\le B\,\Delta p_{\min}\) weakly prefers high effort regardless of the opponent's action. Under flat payment, high effort is chosen only when \(\kappa_i\le 0\). Since \(G\) is continuous, indifferent types have zero mass, so
\[
\pi^{\mathrm{tour}} \ge G(B\,\Delta p_{\min}) \ge G(0) = \pi^{\mathrm{flat}}.
\]
Let \(Q^m\) denote the latent quality of a randomly sampled review under mechanism \(m\). Conditioning on the effort choice gives \(F_Q^m(q) = \pi^m F_H(q)+(1-\pi^m)F_L(q)\), so
\[
\begin{aligned}
F_Q^{\mathrm{tour}}(q)-F_Q^{\mathrm{flat}}(q)
=
(\pi^{\mathrm{tour}}-\pi^{\mathrm{flat}})\big(F_H(q)-F_L(q)\big)\le 0, \qquad \forall q,
\end{aligned}
\]
which proves \(Q^{\mathrm{tour}} \fosd Q^{\mathrm{flat}}\) marginally. By Assumption~\ref{ass:types}, conditional on each mechanism, \((Q_a^m,Q_b^m)\) are i.i.d.\ with common marginal \(F_Q^m\). Let \((F_Q^m)^{-1}(u)\triangleq \inf\{x\in\mathbb{R}:F_Q^m(x)\ge u\}\). Since \(F_Q^{\mathrm{tour}}(q)\le F_Q^{\mathrm{flat}}(q)\) for all \(q\),
\[
\begin{aligned}
(F_Q^{\mathrm{tour}})^{-1}(u)
\ge (F_Q^{\mathrm{flat}})^{-1}(u) \forall u\in(0,1).
\end{aligned}
\]
If \(U_a,U_b\overset{\mathrm{i.i.d.}}{\sim}\mathrm{Unif}(0,1)\) are independent of \(Z\), and \(\widetilde Q_j^m=(F_Q^m)^{-1}(U_j)\) for \(j\in\{a,b\}\), then \((\widetilde Q_a^m,\widetilde Q_b^m)\) has the same law as \((Q_a^m,Q_b^m)\), while \(\widetilde Q_j^{\mathrm{tour}}\ge \widetilde Q_j^{\mathrm{flat}}\) almost surely for \(j=a,b\). By coordinatewise monotonicity of \(\Gamma\),
\[
\begin{aligned}
&\Gamma(\widetilde Q_a^{\mathrm{tour}},\widetilde Q_b^{\mathrm{tour}},Z) \ge \Gamma(\widetilde Q_a^{\mathrm{flat}},\widetilde Q_b^{\mathrm{flat}},Z)
\end{aligned}
\]
almost surely under the same coupling, which is FOSD: \(Y^{\mathrm{tour}} \fosd Y^{\mathrm{flat}}\).
\end{proof}

\noindent The reviewer-independence assumption (Assumption~\ref{ass:types}) is necessary for the joint comparison: marginal FOSD alone does not suffice to compare the joint output \(Y^m\).

\begin{corollary}[Conservative bonus calibration]
\label{cor:calibration}
Assume \(\Delta p_{\min}>0\), and define the generalized inverse \(G^{-1}(u)\triangleq \inf\{x\in\mathbb{R}:G(x)\ge u\}\). The following sufficient conditions hold:
\begin{enumerate}[leftmargin=2em,itemsep=2pt]
\item[\textnormal{(a)}] If \(\kappa_i\equiv \Delta C>0\), then \(B>\dfrac{\Delta C}{\Delta p_{\min}}\) makes high effort strictly dominant for every reviewer type.
\item[\textnormal{(b)}] For any target effort rate \(\pi^\star\in[0,1)\),
\(B\ge \dfrac{[G^{-1}(\pi^\star)]_+}{\Delta p_{\min}}\) ensures \(\pi^{\mathrm{tour}}\ge \pi^\star\), where \([x]_+\triangleq \max\{x,0\}\).
\item[\textnormal{(c)}] If first moments exist and \(\Delta_Q\triangleq \mathbb{E}[Q^H]-\mathbb{E}[Q^L]>0\), then a feasible target mean-quality gain \(\eta\in\big[0,(1-\pi^{\mathrm{flat}})\Delta_Q\big]\) is guaranteed by
\[
B\ge \dfrac{\big[G^{-1}\!\big(\pi^{\mathrm{flat}}+\eta/\Delta_Q\big)\big]_+}{\Delta p_{\min}}.
\]
\end{enumerate}
\end{corollary}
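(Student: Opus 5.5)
The statement immediately above is Corollary~\ref{cor:calibration}. Every part reduces to the type-by-type incentive comparison established before Theorem~\ref{thm:tournament}: for any opponent action, $\Delta U_i^{\mathrm{tour}}(e_{-i}) = B\,\Delta p_i(e_{-i})-\kappa_i \ge B\,\Delta p_{\min}-\kappa_i$. The plan is to turn each target (dominance, effort rate, mean quality) into a condition of the form $B\,\Delta p_{\min}\ge$ threshold and then invoke this bound.

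\emph{Part (a).} If $\kappa_i\equiv\Delta C$ and $B>\Delta C/\Delta p_{\min}$, then $B\,\Delta p_{\min}>\Delta C$. Hence $\Delta U_i^{\mathrm{tour}}(e_{-i})>0$ for both $e_{-i}\in\{L,H\}$, so $H$ is strictly dominant. Since $\Delta p_{\min}>0$, dividing by it is legitimate.

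\emph{Part (b).} Theorem~\ref{thm:tournament} gives $\pi^{\mathrm{tour}}\ge G(B\,\Delta p_{\min})$. It therefore suffices to show $G(B\,\Delta p_{\min})\ge\pi^\star$. Set $x^\star=G^{-1}(\pi^\star)$, which is finite for $\pi^\star\in(0,1)$. For $\pi^\star=0$ we have $x^\star=-\infty$ and the claim is trivial, so the only thing to settle there is the convention $[-\infty]_+=0$. Right-continuity of the CDF gives $G(x^\star)\ge\pi^\star$. The hypothesis gives $B\,\Delta p_{\min}\ge[x^\star]_+\ge x^\star$, and monotonicity of $G$ then yields $G(B\,\Delta p_{\min})\ge G(x^\star)\ge\pi^\star$. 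The positive part is needed only so that $B\ge 0$ is admissible. Because $G$ is a continuous CDF, the generalized-inverse property holds without extra care.

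\emph{Part (c).} First, by linearity of the mixture $F_Q^m=\pi^mF_H+(1-\pi^m)F_L$, the mean quality of a sampled review is $\mathbb{E}[Q^m]=\mathbb{E}[Q^L]+\pi^m\Delta_Q$. The mean gain over flat payment is therefore $(\pi^{\mathrm{tour}}-\pi^{\mathrm{flat}})\Delta_Q$. Achieving gain $\ge\eta$ is equivalent to $\pi^{\mathrm{tour}}\ge\pi^{\mathrm{flat}}+\eta/\Delta_Q=:\pi^\star$. The feasibility range $\eta\le(1-\pi^{\mathrm{flat}})\Delta_Q$ ensures $\pi^\star\le 1$. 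Applying (b) with this $\pi^\star$ then finishes the argument.

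The main obstacle is the endpoint $\pi^\star=1$, which (b) excludes but which arises in (c) when $\eta$ equals its upper bound. There $G^{-1}(1)$ may be $+\infty$ if $G$ has unbounded support, and the bound becomes vacuous. I would first state that case as holding trivially (no finite $B$ is required to satisfy the inequality) or restrict to $\eta<(1-\pi^{\mathrm{flat}})\Delta_Q$. If $G^{-1}(1)$ is finite, the argument of (b) goes through unchanged.
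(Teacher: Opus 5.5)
Your proposal is correct and follows the paper's intended route. The paper gives no separate proof of the corollary; it is meant to follow from the one-shot bound $\Delta U_i^{\mathrm{tour}}(e_{-i})\ge B\,\Delta p_{\min}-\kappa_i$ for (a), and from $\pi^{\mathrm{tour}}\ge G(B\,\Delta p_{\min})$ in Theorem~\ref{thm:tournament} combined with the generalized-inverse property for (b) and (c), which is exactly what you do. Your treatment of the endpoint $\pi^\star=1$ in (c), which the statement admits but (b) excludes, is a valid refinement the paper does not address: when $G$ is unbounded above, no finite $B$ meets the hypothesis, so the implication holds vacuously.
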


\paragraph{Bonus expenditure accounting.}
For budget accounting, define \(\alpha_{uv}\triangleq \Pr\!\big(\max\{S_a^u,S_b^v\}\ge \tau\big)\) for \((u,v)\in\{L,H\}^2\).
Under continuous scores, exactly one reviewer receives the bonus whenever \(\max\{S_a^u,S_b^v\}\ge \tau\).
Thus if the population high-effort rate is \(\pi\), the expected variable bonus expenditure per task is
\[
\begin{aligned}
\mathcal C_{\mathrm{bonus}}(B,\pi)
&=
B\Big[
\pi^2\alpha_{HH}
+2\pi(1-\pi)\alpha_{HL}
+(1-\pi)^2\alpha_{LL}
\Big]
\le B.
\end{aligned}
\]

\paragraph{Reduced-form whitelist extension.}
Let \(z_i\) denote reviewer \(i\)'s current whitelist state, \(\Omega_i(z_i)\ge 0\) the discounted continuation value of remaining eligible for future tasks, and \(q_i(e_i,e_{-i};z_i)\in[0,1]\) the probability that the current task causes reviewer \(i\) to lose whitelist status.
Holding fixed the whitelist rule, define
\[
{\footnotesize
\begin{multlined}[t]
U_i^{\mathrm{tour+wl}}(e_i;e_{-i}, z_i)
= w_0+B\,p_i(e_i,e_{-i}) 
-\psi_i(e_i) 
+\big(1-q_i(e_i,e_{-i};z_i)\big)\Omega_i(z_i),
\end{multlined}
}
\]
\[
{\footnotesize
\begin{multlined}[t]
U_i^{\mathrm{flat+wl}}(e_i;e_{-i},
z_i) 
= w_0-\psi_i(e_i) 
+\big(1-q_i(e_i,e_{-i};z_i)\big)\Omega_i(z_i).
\end{multlined}
}
\]
Let \(\Delta q_i(e_{-i},z_i) \triangleq q_i(L,e_{-i};z_i)-q_i(H,e_{-i};z_i)\), and assume \(\Delta q_i(e_{-i},z_i)\ge 0\) for all \(e_{-i}, z_i\).
Then
\[
\begin{aligned}
\Delta U_i^{\mathrm{tour+wl}}(e_{-i},z_i)
&=
\Delta U_i^{\mathrm{flat+wl}}(e_{-i},z_i)
 + B\,\Delta p_i(e_{-i}),
\Delta U_i^{\mathrm{flat+wl}}(e_{-i},z_i)
= -\kappa_i 
 +\Delta q_i(e_{-i},z_i)\Omega_i(z_i).
\end{aligned}
\]
Equivalently,
\[
\begin{aligned}
\Delta U_i^{\mathrm{tour+wl}}(e_{-i},z_i)
=
B\,\Delta p_i(e_{-i})-\kappa_i 
 +\Delta q_i(e_{-i},z_i)\Omega_i(z_i) 
\ge
\Delta U_i^{\mathrm{tour}}(e_{-i}).
\end{aligned}
\]
Define \(\Lambda_i \triangleq \inf_{e_{-i},z_i} \Delta q_i(e_{-i},z_i)\Omega_i(z_i)\), and \(\widetilde\kappa_i\triangleq \kappa_i-\Lambda_i\).
Then
\[
\begin{aligned}
\Delta U_i^{\mathrm{tour+wl}}(e_{-i},z_i)
\ge
B\,\Delta p_{\min}-\widetilde\kappa_i 
,\qquad \forall e_{-i},z_i.
\end{aligned}
\]
Thus all one-shot sufficient conditions in Corollary~\ref{cor:calibration} apply \emph{pointwise} after replacing \(\kappa_i\) by \(\widetilde\kappa_i\).
If \(\kappa_i\equiv \Delta C\) and \(\Lambda_i\ge \Lambda_{\min}\) for all \(i\), then \(B>\dfrac{(\Delta C-\Lambda_{\min})_+}{\Delta p_{\min}}\) is a conservative sufficient condition for high effort to be strictly optimal at every state.

\paragraph{Takeaway.}
Relative to flat payment, the noisy tournament changes private incentives through \(B\,\Delta p_i(e_{-i})\), the bonus-weighted increase in the probability of clearing the quality bar and outperforming the matched reviewer.
When this return is large enough relative to reviewer-specific effort cost, the set of high-effort types expands, inducing a first-order stochastic right-shift in per-review latent quality.
Under reviewer independence and monotone adjudication, the same right-shift carries over to released-data quality.
Whitelist access further strengthens this comparison pointwise by lowering the effective net cost of effort through continuation value.

%% file: appendix_d_taxonomy.tex
\clearpage
\section{Taxonomy of Disciplines}
\label{sec:appendix-taxonomy}

\begin{center}
    \includegraphics[width=1.0\textwidth]{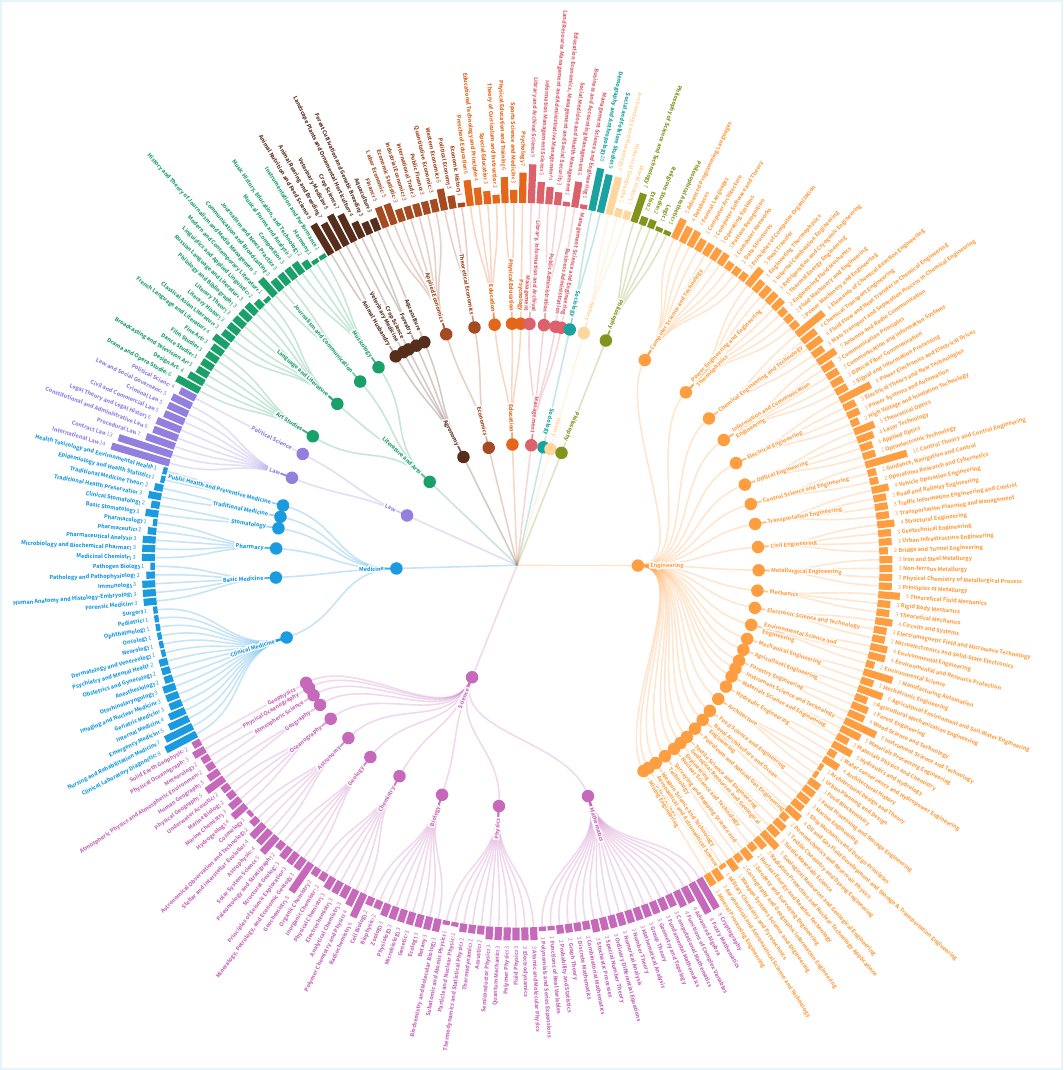}
    \captionof{figure}{Taxonomy of Disciplines}
    \label{fig:taxonomy_overview}
\end{center}

\onecolumn
\setlength{\LTleft}{0pt}
\setlength{\LTright}{0pt}
\setlength{\LTcapwidth}{\textwidth}
\setlength{\tabcolsep}{4pt}
\begin{footnotesize}
\captionof{table}{Disciplines and Subfields Statistics}
\label{tab:discipline_stats}
\vspace{0.4em}
\begin{longtable}{@{}p{0.17\textwidth}p{0.36\textwidth}p{0.33\textwidth}@{\hspace{6pt}}r@{}}
\toprule
\textbf{Discipline} & \textbf{Field} & \textbf{Subfield} & \textbf{Count} \\
\midrule
\endfirsthead
\multicolumn{4}{c}%
{{\bfseries \tablename\ \thetable{} Disciplines and Subfields Statistics}} \\
\toprule
\textbf{Discipline} & \textbf{Field} & \textbf{Subfield} & \textbf{Count} \\
\midrule
\endhead
\midrule
\multicolumn{4}{r}{{Continued on next page...}} \\
\endfoot
\bottomrule
\endlastfoot
Agronomy & Animal Husbandry & Animal Nutrition and Feed Science & 8 \\
         &                  & Animal Rearing and Breeding & 7 \\* \cmidrule{2-4}
         & Aquaculture      & Aquaculture & 3 \\* \cmidrule{2-4}
         & Crop Science     & Crop Science & 7 \\* \cmidrule{2-4}
         & Forestry         & Forest Cultivation and Genetic Breeding & 3 \\
         &                  & Landscape Plants and Ornamental Horticulture & 4 \\* \cmidrule{2-4}
         & Veterinary Medicine & Veterinary Medicine & 8 \\
\midrule
Economics & Applied Economics & Economic Statistics & 3 \\
          &                   & Finance & 5 \\
          &                   & Industrial Economics & 3 \\
          &                   & International Trade & 3 \\
          &                   & Labor Economics & 5 \\
          &                   & Public Finance & 3 \\
          &                   & Quantitative Economics & 3 \\* \cmidrule{2-4}
          & Theoretical Economics & Economic History & 1 \\
          &                       & Political Economy & 3 \\
          &                       & Western Economics & 5 \\
\midrule
Education & Education & Educational Technology and Principles & 4 \\
          &           & Preschool Education & 6 \\
          &           & Special Education & 3 \\
          &           & Theory of Curriculum and Instruction & 2 \\* \cmidrule{2-4}
          & Physical Education & Physical Education and Training & 6 \\
          &                    & Sports Science and Medicine & 3 \\* \cmidrule{2-4}
          & Psychology & Psychology & 7 \\
\midrule
Engineering & Aeronautical and Astronautical Science and Technology & Aeronautical and Astronautical Science and Technology & 3 \\* \cmidrule{2-4}
            & Agricultural Engineering & Agricultural Environment and Soil-Water Engineering & 6 \\
            &                          & Agricultural Mechanization Engineering & 3 \\* \cmidrule{2-4}
            & Architecture & Architectural Design and Theory & 1 \\
            &              & Urban Planning and Design & 1 \\
            &              & Architectural History & 4 \\* \cmidrule{2-4}
            & Chemical Engineering and Technology & Chemical Transport Engineering & 4 \\
            &                                     & Elements of Chemical Reaction Engineering & 8 \\
            &                                     & Fluid Flow and Heat Transfer in Chemical Engineering & 4 \\
            &                                     & Mass Transport and Separation Process in Chemical Engineering & 3 \\* \cmidrule{2-4}
            & Civil Engineering & Bridge and Tunnel Engineering & 2 \\
            &                   & Geotechnical Engineering & 3 \\
            &                   & Structural Engineering & 4 \\
            &                   & Urban Infrastructure Engineering & 3 \\* \cmidrule{2-4}
            & Computer Science and Technology & Advanced Programming Languages & 5 \\
            &                                 & Computer Architecture & 3 \\
            &                                 & Computer Networks & 2 \\
            &                                 & Computer Software and Theory & 3 \\
            &                                 & Data Structures & 2 \\
            &                                 & Databases & 4 \\
            &                                 & Formal Languages & 4 \\
            &                                 & Operating Systems & 3 \\
            &                                 & Pattern Recognition & 3 \\
            &                                 & Principles of Computer Organization & 2 \\* \cmidrule{2-4}
            & Control Science and Engineering & Control Theory and Control Engineering & 10 \\
            &                                 & Guidance, Navigation and Control & 2 \\
            &                                 & Operations Research and Cybernetics & 2 \\* \cmidrule{2-4}
            & Electrical Engineering & Electrical Theory and New Technologies & 3 \\
            &                        & High Voltage and Insulation Technology & 2 \\
            &                        & Power Electronics and Electrical Drives & 8 \\
            &                        & Power Systems and Automation & 3 \\* \cmidrule{2-4}
            & Electronic Science and Technology & Circuits and Systems & 4 \\
            &                                   & Electromagnetic Field and Microwave Technology & 3 \\
            &                                   & Microelectronics and Solid-State Electronics & 3 \\* \cmidrule{2-4}
            & Environmental Science and Engineering & Environmental Engineering & 4 \\
            &                                       & Environmental Science & 2 \\
            &                                       & Environmental and Resource Protection & 4 \\* \cmidrule{2-4}
            & Food Science and Engineering & Food Biochemistry & 3 \\
            &                              & Food Processing and Storage Engineering & 3 \\* \cmidrule{2-4}
            & Forestry Engineering & Forest Engineering & 4 \\
            &                      & Wood Science and Technology & 4 \\* \cmidrule{2-4}
            & Geological Resources and Geological Engineering & Geological Resources and Geological Engineering & 5 \\* \cmidrule{2-4}
            & Hydraulic Engineering & Hydraulics and Hydrology & 5 \\
            &                       & Water conservancy and Hydropower Engineering & 2 \\* \cmidrule{2-4}
            & Information and Communication Engineering & Antenna and Radio Communication & 5 \\
            &                                           & Communication Principles & 3 \\
            &                                           & Communication and Information Systems & 3 \\
            &                                           & Optical Fiber Communication & 3 \\
            &                                           & Signal and Information Processing & 3 \\* \cmidrule{2-4}
            & Instrument Science and Technology & Instrument Science and Technology & 8 \\* \cmidrule{2-4}
            & Materials Science and Engineering & Materials Physics and Chemistry & 3 \\
            &                                   & Materials Processing Engineering & 5 \\* \cmidrule{2-4}
            & Mechanical Engineering & Manufacturing Automation & 7 \\
            &                        & Mechatronic Engineering & 3 \\* \cmidrule{2-4}
            & Mechanics & Rigid Body Mechanics & 3 \\
            &           & Theoretical Fluid Mechanics & 5 \\
            &           & Theoretical Mechanics & 3 \\* \cmidrule{2-4}
            & Metallurgical Engineering & Iron and Steel Metallurgy & 3 \\
            &                           & Non-ferrous Metallurgy & 3 \\
            &                           & Physical Chemistry of Metallurgical Process & 3 \\
            &                           & Principles of Metallurgy & 3 \\* \cmidrule{2-4}
            & Mining Engineering & Mineral Processing Engineering & 3 \\* \cmidrule{2-4}
            & Naval Architecture and Ocean Engineering & Marine Engineering & 3 \\
            &                                          & Ship Mechanics and Design Principles & 3 \\* \cmidrule{2-4}
            & Nuclear Science and Technology & Nuclear Energy and Reactor Technology & 2 \\
            &                                & Radiation Protection and Nuclear Technology Applications & 3 \\* \cmidrule{2-4}
            & Optical Engineering & Applied Optics & 3 \\
            &                     & Laser Technology & 4 \\
            &                     & Optoelectronic Technology & 3 \\
            &                     & Theoretical Optics & 5 \\* \cmidrule{2-4}
            & Petroleum and Natural Gas Engineering & Oil and Gas Field Development and Storage \& Transportation Engineering & 4 \\
            &                                       & Poromechanics and Reservoir Physics & 2 \\* \cmidrule{2-4}
            & Power Engineering and Engineering Thermophysics & Engineering Fluid Mechanics & 2 \\
            &                                                 & Engineering Thermophysics & 3 \\
            &                                                 & Fluid Machinery and Engineering & 2 \\
            &                                                 & Heat Transfer & 5 \\
            &                                                 & Internal Combustion Engineering & 3 \\
            &                                                 & Power Machinery and Engineering & 2 \\
            &                                                 & Refrigeration and Cryogenic Engineering & 3 \\
            &                                                 & Thermal Energy Engineering & 3 \\* \cmidrule{2-4}
            & Surveying and Mapping Science and Technology & Cartography and Geographic Information Engineering & 2 \\
            &                                              & Geodesy and Surveying Engineering & 3 \\* \cmidrule{2-4}
            & Textile Science and Engineering & Textile Chemistry and Dyeing Engineering & 3 \\
            &                                 & Textile Materials Science & 3 \\* \cmidrule{2-4}
            & Transportation Engineering & Road and Railway Engineering & 3 \\
            &                            & Traffic Information Engineering and Control & 3 \\
            &                            & Transportation Planning and Management & 3 \\
            &                            & Vehicle Operation Engineering & 4 \\* \cmidrule{2-4}
            & Weapon Science and Technology & Military Chemistry and Pyrotechnics & 1 \\
            &                               & Weapon Systems Science and Engineering & 3 \\
\midrule
History & History & Archaeology and Museology & 9 \\
        &         & Historical Geography & 2 \\
        &         & World History & 2 \\
\midrule
Law & Law & Civil and Commercial Law & 5 \\
    &     & Constitutional and Administrative Law & 6 \\
    &     & Contract Law & 13 \\
    &     & Criminal Law & 5 \\
    &     & International Law & 14 \\
    &     & Law and Social Governance & 5 \\
    &     & Legal Theory and Legal History & 6 \\
    &     & Procedural Law & 7 \\* \cmidrule{2-4}
    & Political Science & Political Science & 4 \\
\midrule
Literature and Arts & Art Studies & Broadcasting and Television Art & 3 \\
                    &             & Dance Studies & 3 \\
                    &             & Design Arts & 4 \\
                    &             & Drama and Opera Studies & 6 \\
                    &             & Film Studies & 3 \\
                    &             & Fine Arts & 3 \\* \cmidrule{2-4}
                    & Journalism and Communication & Communication and Broadcasting & 3 \\
                    &                              & History and Theory of Journalism and Media Management & 5 \\
                    &                              & Journalism and News Practice & 3 \\* \cmidrule{2-4}
                    & Language and Literature & Classical Asian Literature & 3 \\
                    &                         & French Language and Literature & 4 \\
                    &                         & Linguistics and Applied Linguistics & 2 \\
                    &                         & Literary History & 3 \\
                    &                         & Literary Theory & 3 \\
                    &                         & Modern and Contemporary Literature & 1 \\
                    &                         & Philology and Bibliography & 3 \\
                    &                         & Russian Language and Literature & 3 \\* \cmidrule{2-4}
                    & Musicology & Composition & 3 \\
                    &            & Harmony & 1 \\
                    &            & Instrumentation and Performance & 1 \\
                    &            & Music History, Education, and Technology & 2 \\
                    &            & Musical Forms and Analysis & 3 \\
\midrule
Management & Business Administration & Business and Accounting Management & 6 \\* \cmidrule{2-4}
           & Library, Information and Archival Management & Information Management Science & 5 \\
           &                                              & Library and Archival Science & 9 \\* \cmidrule{2-4}
           & Management Science and Engineering & Management Science and Engineering & 1 \\* \cmidrule{2-4}
           & Public Administration & Education Economics, Management and Social Security & 3 \\
           &                       & Land Resource Management and Administrative Management & 4 \\
           &                       & Social Medicine and Health Management & 1 \\
\midrule
Medicine & Basic Medicine & Forensic Medicine & 3 \\
         &                & Human Anatomy and Histology-Embryology & 3 \\
         &                & Immunology & 3 \\
         &                & Pathogen Biology & 1 \\
         &                & Pathology and Pathophysiology & 2 \\* \cmidrule{2-4}
         & Clinical Medicine & Anesthesiology & 2 \\
         &                   & Clinical Laboratory Diagnostics & 8 \\
         &                   & Dermatology and Venereology & 1 \\
         &                   & Emergency Medicine & 5 \\
         &                   & Geriatric Medicine & 3 \\
         &                   & Imaging and Nuclear Medicine & 3 \\
         &                   & Internal Medicine & 4 \\
         &                   & Neurology & 1 \\
         &                   & Nursing and Rehabilitation Medicine & 7 \\
         &                   & Obstetrics and Gynecology & 2 \\
         &                   & Oncology & 1 \\
         &                   & Ophthalmology & 1 \\
         &                   & Otorhinolaryngology & 3 \\
         &                   & Pediatrics & 1 \\
         &                   & Psychiatry and Mental Health & 2 \\
         &                   & Surgery & 1 \\* \cmidrule{2-4}
         & Pharmacy & Medicinal Chemistry & 3 \\
         &          & Microbiology and Biochemical Pharmacy & 3 \\
         &          & Pharmaceutical Analysis & 3 \\
         &          & Pharmaceutics & 2 \\
         &          & Pharmacology & 1 \\* \cmidrule{2-4}
         & Public Health and Preventive Medicine & Epidemiology and Health Statistics & 1 \\
         &                                       & Health Toxicology and Environmental Health & 1 \\* \cmidrule{2-4}
         & Stomatology & Basic Stomatology & 3 \\
         &             & Clinical Stomatology & 2 \\* \cmidrule{2-4}
         & Traditional Medicine & Traditional Health Preservation & 3 \\
         &                              & Traditional Medicine Theory & 2 \\
\midrule
Philosophy & Philosophy & Ethics & 2 \\
           &            & Logic & 1 \\
           &            & Philosophical Aesthetics & 1 \\
           &            & Philosophy of Science and Technology & 7 \\
           &            & Religious Studies & 2 \\
\midrule
Science & Astronomy & Astronomical Observation and Technology & 2 \\
        &           & Astrophysics & 4 \\
        &           & Cosmology & 1 \\
        &           & Solar System Science & 5 \\
        &           & Stellar and Interstellar Evolution & 4 \\* \cmidrule{2-4}
        & Atmospheric Science & Atmospheric Physics and Atmospheric Environment & 2 \\
        &                     & Meteorology & 2 \\* \cmidrule{2-4}
        & Biology & Biochemistry and Molecular Biology & 3 \\
        &         & Biophysics & 2 \\
        &         & Botany & 3 \\
        &         & Cell Biology & 2 \\
        &         & Ecology & 3 \\
        &         & Genetics & 3 \\
        &         & Microbiology & 3 \\
        &         & Physiology & 3 \\
        &         & Zoology & 3 \\* \cmidrule{2-4}
        & Chemistry & Analytical Chemistry & 3 \\
        &           & Electrochemistry & 3 \\
        &           & Inorganic Chemistry & 2 \\
        &           & Organic Chemistry & 2 \\
        &           & Physical Chemistry & 3 \\
        &           & Polymer Chemistry and Physics & 4 \\
        &           & Radiochemistry & 6 \\* \cmidrule{2-4}
        & Geography & Human Geography & 3 \\
        &           & Physical Geography & 5 \\* \cmidrule{2-4}
        & Geology & Geochemistry & 8 \\
        &         & Mineralogy, Petrology, and Economic Geology & 3 \\
        &         & Paleontology and Stratigraphy & 2 \\
        &         & Principles of Seismic Exploration & 3 \\
        &         & Structural Geology & 3 \\* \cmidrule{2-4}
        & Geophysics & Solid Earth Geophysics & 2 \\* \cmidrule{2-4}
        & Mathematics & Advanced Algebra & 4 \\
        &             & Combinatorial Mathematics & 2 \\
        &             & Computational Mathematics & 3 \\
        &             & Cryptography & 8 \\
        &             & Discrete Mathematics & 2 \\
        &             & Functions of Complex Variables & 4 \\
        &             & Functions of Real Variables & 1 \\
        &             & Fundamental Mathematics & 3 \\
        &             & Fuzzy Mathematics & 8 \\
        &             & Geometry and Topology & 3 \\
        &             & Graph Theory & 2 \\
        &             & Group Theory & 3 \\
        &             & Mathematical Analysis & 3 \\
        &             & Number Theory & 3 \\
        &             & Numerical Analysis & 3 \\
        &             & Ordinary Differential Equations & 3 \\
        &             & Polynomials and Series Expansions & 1 \\
        &             & Probability and Statistics & 2 \\
        &             & Special Number Theory & 3 \\
        &             & Stochastic Processes & 3 \\* \cmidrule{2-4}
        & Oceanography & Hydrogeology & 4 \\
        &              & Marine Biology & 2 \\
        &              & Marine Chemistry & 3 \\
        &              & Underwater Acoustics & 2 \\* \cmidrule{2-4}
        & Physical Oceanography & Physical Oceanography & 3 \\* \cmidrule{2-4}
        & Physics & Acoustics & 2 \\
        &         & Atomic and Molecular Physics & 3 \\
        &         & Electrodynamics & 3 \\
        &         & Fluid Physics & 3 \\
        &         & Particle and Nuclear Physics & 1 \\
        &         & Polymer Physics & 3 \\
        &         & Quantum Mechanics & 3 \\
        &         & Semiconductor Physics & 3 \\
        &         & Subatomic and Atomic Physics & 1 \\
        &         & Thermodynamics & 2 \\
        &         & Thermodynamics and Statistical Physics & 2 \\
\midrule
Sociology & Sociology & Demography and Anthropology & 10 \\
          &           & Social and Folklore Studies & 9 \\
\end{longtable}
\end{footnotesize}

%% file: appendix_e_manuals.tex
\section{Data Collection Details}
\label{appendix:manuals}

\subsection{Annotator Recruitment Test}
\label{appendix:round1}

\begin{enumerate}
    \item \textbf{Academic Background}
    \begin{itemize}
        \item University (Full Name in English):
        \item Highest Academic Degree (Bachelor's / Master's / Doctoral / Currently Enrolled):
        \item Major Name (Official Title):
        \item Intended Disciplinary Fields for Question Design (Limit: 1--3):
    \end{itemize}

    \item \textbf{Authoritative Domain Knowledge}
    
    Please list \textbf{3} of the most authoritative publications, conferences or monographs (titles included) in your professional field. We intend to evaluate your professional insight through your understanding of the authoritative sources in the industry.
    
    Format requirement: Name + Type (Journal/Conference/Monograph) + Rationale

    \item \textbf{Understanding of Disciplinary Competency Assessment}

   If you were asked to design \textbf{3–6 multiple-choice questions} to assess the advanced literacy and cognitive abilities of LLMs in your chosen field, what content would each question target? Please briefly elaborate on the question content and explain the rationale for designing each question.
   
   Please ensure that the overall design logic of this set of questions is clearly demonstrated: specifically, how these questions, covering different dimensions within the discipline, can be integrated to outline a comprehensive cognitive framework of the subject.
   
   Format requirement: Content + Rationale

    \item \textbf{Stress Tolerance and Work Attitude}
    \begin{enumerate}
        \item \textbf{Attitude Towards Review and Revision (Single Choice)} \\
        If your questions are returned due to incomplete citations or non-standard formatting, and you are required to supplement supporting evidence item by item, what would your response be?
        \begin{itemize}
            \item A. The rules are reasonable and necessary; I will supplement the required information accordingly
            \item B. I feel a bit annoyed, but I will revise the questions as required
            \item C. I think the requirements are overly detailed and somewhat confusing
            \item D. I am not willing to continue with this process
        \end{itemize}

        \item \textbf{Tendency of Responsibility Attribution (Single Choice)} \\
        If your questions fail to pass the review multiple times, what do you think is the primary cause?
        \begin{itemize}
            \item A. My understanding of the standards and requirements is not thorough enough
            \item B. The questions themselves still have room for improvement
            \item C. The review standards are overly subjective
            \item D. Bad luck
        \end{itemize}
    \end{enumerate}

    \item \textbf{Rule Comprehension Verification}
    \begin{enumerate}
        \item \textbf{Understanding of Question Types (Multiple Choice)} \\
        Which of the following questions do \textbf{not} meet the requirements for high-quality disciplinary assessment questions?
        \begin{itemize}
            \item A. Questions testing the reasoning and application of core disciplinary theories in specific scenarios
            \item B. Questions requiring the model to memorize specific numerical values or factual information
            \item C. Combined logical reasoning questions based on multiple statements
            \item D. Static judgment questions involving only extremely niche or rare subjects
            \item E. Questions developed based on individual scholars' viewpoints that lack widespread consensus
        \end{itemize}

        \item \textbf{Understanding of Question Design (Single Choice)} \\
        What is the main source of difficulty in high-quality assessment questions?
        \begin{itemize}
            \item A. Numerous calculation steps and complex numerical values
            \item B. Obscure knowledge points and high memory difficulty
            \item C. Interactions between multiple restrictive conditions
            \item D. Long question length and large information volume
        \end{itemize}
    \end{enumerate}

    \item \textbf{Workflow and Standard Confirmation (Multiple Choice)}
        \begin{itemize}
            \item Develop all questions in English (including punctuation marks) throughout the entire process
            \item Each option (whether correct or incorrect) must be supported by evidence or citations
            \item Questions may undergo multiple rounds of review and revision
            \item Questions must demonstrate disciplinary representativeness, rather than being esoteric, biased, or tricky
        \end{itemize}
\end{enumerate}

\subsection{Annotation Manual}
\label{appendix:annotation-manual}

\subsubsection{Project Background}
Current LLMs have demonstrated strong capabilities in programming, mathematics, and reasoning, but their performance still varies widely across different disciplines. To enable researchers and practitioners in various disciplinary fields to select the most appropriate large AI model for their work, domain experts are needed to design questions that evaluate models’ disciplinary knowledge and competence in their respective fields. The questions you provide will serve as critical data for evaluating LLMs.
\begin{itemize}
    \item Your role: \textbf{Question Design Expert}
    \item Your task: Create high-difficulty multiple-choice questions that at least three LLMs will answer incorrectly, and provide explanations and source materials.
    \begin{itemize}
        \item Question: Consists of a stem and options. Questions must be \textbf{representative and unique} to the discipline, and able to assess high-level knowledge and literacy in the field. Questions may be original, adapted, or directly excerpted; \textbf{Full AI-generated questions are strictly prohibited.}
        \item Explanation: Clearly explain why each option is correct or incorrect.
        \item Source Materials: The origin of the question, which may be a book or high-quality academic paper.
    \end{itemize}
    \item Discipline List: Please check whether the list includes your specialized discipline. Ensuring the accuracy and rationality of the questions is our top priority.
\end{itemize}

\subsubsection{Core Criteria for Question Design}
\noindent \textbf{What We Require:}
\begin{enumerate}
    \item \textbf{Disciplinary Representativeness:} Instances must act as highly representative probes. If a discipline were to be evaluated using only 3 to 5 questions, the submitted instance must be fundamental and comprehensive enough to be one of them.
    \item \textbf{High-Order Knowledge Application:} 
    \begin{itemize}
        \item Questions must construct a logically complete closed system where the solution is the inevitable product of general deductive reasoning applied to domain-specific axioms.
        \item Difficulty should stem from the complex coupling of constraints and multi-hop reasoning, not merely computational heavy-lifting.
        \item For instance, physics questions should demand the integration of axioms and skills to solve a specific state transition, rather than asking for the formula of momentum conservation.
    \end{itemize}
    \item \textbf{Static Knowledge Breadth:} For memory-intensive disciplines (e.g., Education), questions should maximize knowledge coverage (e.g., utilizing $10+$ statements to encompass major pedagogical theories).
\end{enumerate}

\noindent \textbf{What We Reject:}
\begin{enumerate}
    \item \textbf{Narrow or Trivial Memorization:} Questions testing pure rote memory devoid of core disciplinary literacy (e.g., "What is the 100th digit of $\pi$?") or focusing on hyper-niche, obscure sub-entities.
    \item \textbf{Idiosyncratic or Tricky Trivia:} Questions universally recognized as flawed or unreasonable even in human examinations.
    \item \textbf{Weak Epistemological Consensus:} Hypotheses proposed by individual scholars that lack widespread academic consensus or are highly volatile (e.g., highly debated legal interpretations).
    \item \textbf{Non-Disciplinary Failure Modes:} Instances where LLMs fail due to semantic traps, ambiguous phrasing, or floating-point calculation errors rather than a deficit in disciplinary literacy.
\end{enumerate}

\subsubsection{Standardized Annotation Workflow}
The question authoring process is strictly compartmentalized into six components. The specifications for each component are detailed in Table \ref{tab:creation_workflow}.

\begin{table*}[htbp]
\centering
\captionsetup{font=footnotesize}
\caption{Standardized Annotation Workflow and Specifications.}
\label{tab:creation_workflow}
\renewcommand{\arraystretch}{1.3}
\begin{tabularx}{\textwidth}{l X}
\toprule
\textbf{Component} & \textbf{Specifications and Constraints} \\
\midrule
\textbf{Question} & 
Must be pure English text.\newline
Image uploads in the stem are prohibited.\newline
For pseudo-multi-choice formats, the stem must contain $\ge 6$ foundational statements.\newline
The question stem should not include instructions such as "You are an expert".\newline
Must be structurally self-contained; references like "in this paper" are strictly forbidden. All necessary premises must be explicitly stated.\newline
Sources must be highly authoritative: e.g., established textbooks, Q1 SCI/SSCI journals, or CCF-A/B proceedings. Submissions citing controversial publishers (e.g., MDPI, Hindawi) or user-generated forums are categorically rejected. \\
\midrule
\textbf{Options} & 
Must provide exactly 10 distinct, pure-text English options.\newline
For combinatorial options, the distribution must reflect statistical normality (avoiding extreme imbalance where one option has 1 statement and another has 8).\newline
Options should not begin with labels like "A.".\newline
\textbf{Strict Prohibition of Proper Subsets:} Options must not exhibit subset relationships (e.g., if A is $\{1, 2, 4\}$, B cannot be $\{1\}$) to prevent logical leakage. \\
\midrule
\textbf{Explanation} & 
Must explicitly diagnose why the correct option is valid and why \textit{each} distractor is flawed (avoiding generic copy-pasting of theoretical background).\newline
Unique explanations per option are mandatory, except for deterministic calculation questions. \\
\midrule
\textbf{Sources} & 
\textbf{Visual Evidence:} Must upload screenshots of the source material with precise highlights and annotations corresponding to each option/statement.\newline
\textbf{Traceability:} Must provide accessible URLs or rigorous APA-formatted citations (including specific page numbers).\\
\midrule
\textbf{Material} & 
Reference images necessary for solving the question must be uploaded and annotated. \\
\midrule
\textbf{Formatting} & 
All mathematical formulas and symbols must be written in standard LaTeX and render flawlessly in the Markdown environment. \\
\bottomrule
\end{tabularx}
\end{table*}

\subsubsection{Accepted and Rejected Cases}
Table \ref{tab:detailed_case_studies} illustrates the dichotomy between accepted and rejected submissions based on our core criteria.

The accepted case provided here is for reference only in terms of content and format. Please note that question types are not limited to calculation problems.

This question is selected as an excellent example for the following reasons:
\begin{enumerate}
    \item It targets marine chemistry and assesses advanced reasoning and computational abilities, rather than only testing static knowledge.
    \item Even though it is a calculation question, the analysis of incorrect options is briefly summarized instead of being copied and pasted repetitively.
\end{enumerate}

\begin{table*}[htbp]
\centering
\captionsetup{font=footnotesize}
\caption{Accepted and Rejected Cases}
\label{tab:detailed_case_studies}
\footnotesize
\setlength{\tabcolsep}{4pt}
\renewcommand{\arraystretch}{1.05}
\begin{tabularx}{\textwidth}{>{\raggedright\arraybackslash}p{0.11\textwidth} X}
\toprule
\multicolumn{2}{c}{\textbf{Case 1: Accepted Case}} \\
\midrule
\textbf{Discipline} & Marine Science - Marine Chemistry \\
\textbf{Question} & It is known that Modified Circumpolar Deep Water (MCDW) is defined as the water mass in the Ross Sea located between neutral density surfaces of 28 $\text{kg m}^{-3}$ and 28.27 $\text{kg m}^{-3}$. The average salinity of MCDW is $34.72 \pm 0.01$, and the average salinity of Antarctic Bottom Water (AASW) is $33.7 \pm 0.2$. The dissolved organic carbon (DOC) concentration in MCDW is 40.0 $\mu\text{mol kg}^{-1}$, in AASW is 45.1 $\mu\text{M}$, and in Deep Shelf Water (DSW) is 48.4 $\mu\text{M}$. During the formation of DSW, how much does the DOC concentration increase in the Ross Sea, and how much excess DOC is exported from the deep water to the shelf area? \\
\textbf{Options} & 
\textbf{A.} 7 $\mu\text{M}$; $4 \pm 0.6 \text{ Tg DOC yr}^{-1}$ \newline
\textbf{B.} 8 $\mu\text{M}$; $4 \pm 0.8 \text{ TgC yr}^{-1}$ \\
\textbf{Explanation} & 
\textbf{Option A (Correct):} MCDW is the primary water source for the Ross Ice Shelf, formed by the mixing of Circumpolar Deep Water (CDW) and AASW on the continental slope. Given the average salinity of CDW ($34.72 \pm 0.01$) and AASW ($33.7 \pm 0.2$), assuming salinity on the continental shelf is conservative (i.e., unaffected by processes like brine rejection), we calculate the proportions required to form MCDW (salinity = $34.47 \pm 0.05$). The calculation is:\newline
$f(\text{CDW}) = \frac{S(\text{AASW}) - S(\text{mCDW})}{S(\text{AASW}) - S(\text{CDW})}$\newline
$f(\text{AASW}) = 1 - f(\text{CDW})$\newline
Where $S$ represents average salinity and $f$ represents the proportion. The proportions of CDW and AASW forming MCDW are 0.75 and 0.25, respectively. With DOC(CDW) = 40.0 and DOC(AASW) = 45.1:\newline
$\text{DOC(MCDW)} = \text{DOC(CDW)} \cdot f(\text{CDW}) + \text{DOC(AASW)} \cdot f(\text{AASW}) = 41.0 \mu\text{M}$.\newline
$\text{DOC(enriched)} = \text{DOC(DSW)} - \text{DOC(mCDW)} = 48.4 - 41.0 = 7.4 \mu\text{M}$.\newline
The DOC export off the continental shelf is calculated as: $7 \mu\text{M} \text{ export} \times 6 \text{ years residence time} = 4 \text{ TgC yr}^{-1}$.\newline
\textbf{Option B (Incorrect):} The enriched DOC content is overestimated. The estimated DOC content in DSW is $1 \mu\text{M}$ higher than the correct value. Consequently, the exported DOC is also overestimated: $8 \mu\text{M} \times 6 \text{ years} = 4.8 \text{ TgC yr}^{-1}$. \\
\textbf{Source} & Bercovici, S. K., Huber, B. A., DeJong, H. B., Dunbar, R. B., \& Hansell, D. A. (2017). Dissolved organic carbon in the Ross Sea: Deep enrichment and export. \textit{Limnology and Oceanography}, 62(6), 2593-2603. \\
\bottomrule
\end{tabularx}
\end{table*}

\begin{table*}[htbp]\ContinuedFloat
\centering
\captionsetup{font=footnotesize}
\caption[]{Accepted and Rejected Cases (continued)}
\footnotesize
\setlength{\tabcolsep}{4pt}
\renewcommand{\arraystretch}{1.05}
\begin{tabularx}{\textwidth}{>{\raggedright\arraybackslash}p{0.11\textwidth} X}
\toprule
\multicolumn{2}{c}{\textbf{Case 2: Rejected Case}} \\
\midrule
\textbf{Discipline} & Biology - Zoology \\
\textbf{Question} & Based on the content of Chapter 1 in the attachment, which of the following statements about the genus \textit{Acanthorhodeus} are correct:\newline
1. Body elongated, slightly compressed, large head, flat snout.\newline
2. Lateral line scales are at least weakly ctenoid, two dorsal fins, separated.\newline
3. Posterior margin of the caudal fin is rounded.\newline
4. The first dorsal fin is short, consisting of 3-4 spines.\newline
5. Upper jaw is slightly shorter than the lower jaw. \\
\textbf{Options} & 
\textbf{A.} 1, 2 \quad \textbf{B.} 1, 2, 3 (Correct Option) \quad \textbf{C.} 1, 2, 3, 4, 5 \quad \textbf{D.} 1 \quad \textbf{E.} 2 \quad \textbf{F.} 1, 2, 4 \quad \textbf{G.} 1, 3, 5 \quad \textbf{H.} 3, 5 \quad \textbf{J.} 4, 5 \\
\textbf{Explanation} & 
\textbf{Option A:} Statement 1 is correct (Body elongated, slightly compressed, large head, flat snout); Statement 2 is correct (Lateral line scales are at least weakly ctenoid, two dorsal fins, separated).\newline
\textbf{Option J:} Statement 4 is incorrect (The first dorsal fin is short, consisting of 4-6 spines); Statement 5 is incorrect (Upper jaw is slightly longer than the lower jaw). \\
\textbf{Rejection Rationale} & 
\textbf{1. Weak Disciplinary Representativeness:} The question fails to engage high-order disciplinary literacy. Statements 4 and 5 merely test pure static memory of isolated trivia, lacking sufficient cognitive coverage to serve as a representative evaluation probe.\newline
\textbf{2. Structural Flaw (Proper Subsets):} Options A, D, and E are proper subsets of the correct Option B. Models can exploit this logical leakage (selecting A, D, or E would technically be correct). Furthermore, the option distribution is severely imbalanced, violating statistical normality constraints.\newline
\textbf{3. Statement Count Deficit:} The question stem only contains 5 statements, failing the strict pseudo-multi-choice minimum requirement ($\ge 6$ statements).\newline
\textbf{4. Context Drift:} The stem explicitly references an unprovided context ("Chapter 1"), violating the requirement that all questions must be logically self-contained. \\
\bottomrule
\end{tabularx}
\end{table*}

\subsection{Review Manual}
\label{appendix:review-manual}

\subsubsection{Core Philosophy and General Workflow}
The primary objective of the review process is to ensure that each curated instance serves as a highly representative probe for evaluating LLMs on specific fine-grained disciplines. Reviewers must adopt a \textbf{zero-tolerance policy} toward ambiguous, peripheral, or substandard data. 
\begin{itemize}
    \item \textbf{Comprehensive Feedback:} Unless an instance requires a complete topic overhaul, reviewers must identify and aggregate all logical, factual, and formatting issues into a single comprehensive feedback report before returning it for revision.
    \item \textbf{AI-Generation Rejection:} Submissions exhibiting clear patterns of unedited, large-scale AI generation must be categorically rejected.
    \item \textbf{Escalation Mechanism:} For borderline cases or interdisciplinary disputes, reviewers are required to escalate the instance to the Core Committee for final arbitration.
\end{itemize}

\subsubsection{Disciplinary Alignment and Factuality}
\begin{itemize}
    \item \textbf{Disciplinary Matching:} Reviewers must first verify whether the question strictly aligns with the declared subfield. Mismatched instances must be returned with a mandate to reclassify or rewrite.
    \item \textbf{Epistemological Rigor (Especially in Humanities \& Social Sciences):} Questions must test established scientific truths, foundational theories, or widely recognized academic consensus. Volatile opinions, transient scholarly debates, or highly subjective hypotheses without proper contextualization must be rejected.
    \item \textbf{Memory Error Rejection:} If an instance causes a model to fail solely due to a superficial memory error (e.g., misremembering an obscure date, a peripheral name, or a raw statistic) rather than a deficit in high-order deductive reasoning or conceptual understanding, the instance must be rejected. Questions must evaluate the mastery of causal mechanisms, not data retrieval.
\end{itemize}

\subsubsection{Source Verification and Material Authenticity}
Every claim must be fully traceable to authoritative human knowledge sources.
\begin{itemize}
    \item \textbf{Authoritative Sources:} Referenced literature must be identifiable via valid URLs, DOIs, or standard APA citations. Acceptable sources include high quality peer-reviewed journals (e.g., SCI/SSCI Q1, CSSCI), authoritative monographs, or official databases. Preprints (e.g., arXiv) are only acceptable if they demonstrate high citation counts or widespread community validation.
    \item \textbf{Material Consistency:} Any accompanying materials (e.g., figures, charts) must be intrinsically necessary to solve the question and perfectly correspond to the cited source.
\end{itemize}

\subsubsection{Option Rigor and Structural Integrity}
To prevent LLMs from exploiting logical loopholes, the 10-option structure must adhere to strict constraints:
\begin{itemize}
    \item \textbf{Pseudo-Multi-Choice Constraints:} For combination-based questions, the question stem must present a minimum of \textbf{6 distinct foundational statements}.
    \item \textbf{No Proper Subset Relationships:} To avoid logical leakage, options must not have proper subset relationships. For example, if Option A is $\{1, 2\}$ and Option B is $\{1, 2, 3\}$, proving B correct implicitly validates A, introducing ambiguity. Options must be mutually exclusive in their truth values.
    \item \textbf{Explanation Completeness:} The explanations provided for the options must comprehensively address why the correct option is uniquely valid and why all distractors are flawed. For deterministic questions (e.g., mathematical calculations), overlapping explanations across options are permissible provided the core proof is sound.
\end{itemize}

\subsection{Formatting and Rendering}
Reviewers must ensure that all text, mathematical formulas, and symbolic logic within the Question, Options, and Explanations strictly conform to standard LaTeX syntax and render flawlessly in Markdown environments. Formatting failures constitute immediate grounds for rejection.

\subsection{Appeal Handling and Quality Enforcement}
\begin{itemize}
    \item \textbf{Annotator Appeals:} Recognizing the domain expertise of annotators, the pipeline supports a formal appeal process. Reviewers are required to objectively re-evaluate contested instances based on newly provided academic evidence.
    \item \textbf{Accountability:} Reviewers who consistently exhibit superficial auditing ("lazy consensus"), approve logically flawed instances, or violate the double-blind protocols will be permanently removed from the reviewer pool and forfeit their compensation.
\end{itemize}

\clearpage

\subsection{LLM-based Filtering framework}
\label{appendix:llm-judge}

\begin{tcblisting}{
    enhanced,
    listing only, % 只显示代码内容，不进行 LaTeX 编译
    colback=gray!5,
    colframe=blue!50!black,
    coltitle=white,
    title={Feature Extraction Prompt},
    sharp corners=south,
    boxrule=0.8pt,
    fontupper=\small,
    listing options={
        breaklines=true,     % 自动换行
        basicstyle=\small,
        breakatwhitespace=true,
        escapeinside={(*}{*)}, % 如果需要在代码中嵌套 LaTeX 命令
        columns=fullflexible
    }
}
# Role
You are a domain expert in {{field_name}}. Your task is to analyze a single     question and extract a structured summary.

# Input
Question {{question_number}}: {{question_text}}
Correct Answer: {{correct_answer}} = {{correct_answer_content}}

# Analysis Task
Extract the following attributes:
| Field | Description | Values |
|-------|-------------|--------|
| `knowledge_point` | The core concept being tested. Be specific-identify the exact principle, mechanism, or rule. | 10-20 words |
| `question_type` | How the question tests understanding | See definitions below |
| `key_topics` | Main topics/keywords for clustering | 2-4 terms |
| `real_world_impact` | Does this knowledge have significant practical consequences? | HIGH / MEDIUM / LOW |

## Question Type Definitions
- **CASE_BASED**: Presents a scenario (patient, legal case, experiment) requiring analysis and judgment
- **APPLIED_REASONING**: Tests ability to apply principles to specific situations without a narrative scenario

## Real-World Impact Levels

- **HIGH**: Errors in this knowledge area could cause significant harm, financial loss, or policy failures (e.g., drug interactions, legal precedents, safety protocols)
- **MEDIUM**: Knowledge is professionally important but errors have limited immediate consequences
- **LOW**: Primarily academic interest; minimal direct real-world application

# Output
Return ONLY a JSON object:
{
  "question_number": {{question_number}},
  "knowledge_point": "<specific concept>",
  "question_type": "<type>",
  "key_topics": ["<topic1>", "<topic2>"],
  "real_world_impact": "<impact>"
}

Do not include any text outside the JSON object.
\end{tcblisting}

\begin{tcblisting}{
    enhanced,
    breakable,
    listing only,
    colback=gray!5,
    colframe=blue!50!black,
    coltitle=white,
    title={Failure Pattern Analysis Prompt},
    sharp corners=south,
    boxrule=0.8pt,
    fontupper=\small,
    listing options={
        breaklines=true,
        basicstyle=\small,
        breakatwhitespace=true,
        escapeinside={(*}{*)},
        columns=fullflexible
    }
}
# Role
You are evaluating how well frontier LLMs performed on a domain-specific question to assess its difficulty and discriminative value.

# Input

## Question {{question_number}} (Preview)
{{question_preview}}

## Correct Answer
{{correct_answer}}

## LLM Results
{{llm_results_text}}

# Analysis Task

Analyze each model's performance individually, then synthesize aggregate metrics.

## Per-Model Analysis

For each model, determine:

| Field | Description |
|-------|-------------|
| `model` | Model name |
| `answer` | What the model answered (letter or "abstained") |
| `correct` | Whether the answer matches the correct answer |
| `error_type` | If incorrect, categorize the likely error (see categories below) |
| `error_hypothesis` | If incorrect, brief hypothesis for why the model failed |

### Error Type Categories

- **KNOWLEDGE_GAP**: Model lacks the domain-specific knowledge required
- **MISINTERPRETATION**: Model misread or misunderstood the question/options
- **REASONING_ERROR**: Model had the knowledge but applied it incorrectly
- **DISTRACTOR_TRAP**: Model fell for a plausible-sounding wrong answer
- **PARTIAL_KNOWLEDGE**: Model knew some relevant facts but not enough to discriminate
- **ABSTAINED**: Model declined to answer
- **UNKNOWN**: Cannot determine error pattern from available information

## Aggregate Metrics

| Field | Description |
|-------|-------------|
| `difficulty_assessment` | Overall difficulty based on accuracy rate |
| `accuracy_rate` | Fraction of models that answered correctly (e.g., "2/5") |
| `dominant_error` | Most common wrong answer, if any (e.g., "E (3/5 incorrect)") or "varied" |
| `error_pattern_summary` | One-sentence description of why models failed |
| `discriminative_power` | Does this question separate stronger from weaker models? |

### Difficulty Assessment Criteria

| Level | Accuracy Rate | Description |
|-------|---------------|-------------|
| EASY | >80% correct | Most models answered correctly |
| MEDIUM | 50-80% correct | Mixed results |
| HARD | 20-50% correct | Most models failed |
| VERY_HARD | <20% correct | Nearly all models failed |

### Discriminative Power Criteria

- **HIGH**: Clear separation-top-tier models succeeded while weaker models failed
- **MEDIUM**: Some separation, but inconsistent patterns
- **LOW**: No meaningful separation-success/failure appears random across model tiers

# Output

Return ONLY a JSON object:
```json
{
  "question_number": {{question_number}},
  "correct_answer": "{{correct_answer}}",
  "model_results": [
    {
      "model": "<model_name>",
      "answer": "<letter | 'abstained'>",
      "correct": <true | false>,
      "error_type": "<category if incorrect, else null>",
      "error_hypothesis": "<brief explanation if incorrect, else null>"
    }
  ],
  "aggregate": {
    "accuracy_rate": "<n/total>",
    "difficulty_assessment": "<EASY | MEDIUM | HARD | VERY_HARD>",
    "dominant_error": "<most common wrong answer or 'varied'>",
    "error_pattern_summary": "<one sentence explaining the failure mode, or 'N/A' if most correct>",
    "discriminative_power": "<HIGH | MEDIUM | LOW>"
  }
}
```

Do not include any text outside the JSON object.
\end{tcblisting}

\begin{tcblisting}{
    enhanced,
    breakable,
    listing only,
    colback=gray!5,
    colframe=blue!50!black,
    coltitle=white,
    title={Consensus Voting Prompt},
    sharp corners=south,
    boxrule=0.8pt,
    fontupper=\small,
    listing options={
        breaklines=true,
        basicstyle=\small,
        breakatwhitespace=true,
        escapeinside={(*}{*)},
        columns=fullflexible
    }
}
# Role
You are an expert academic assessor selecting the most representative questions for a benchmark test set.

# Input

## Field
{{field_name}}

## Number to Select
{{num_to_select}}

## Question Summaries
The following are combined summaries from question analysis (Worker 1) and LLM performance analysis (Worker 2):

{{combined_summaries}}

# Selection Criteria

| Criterion | Priority | Description |
|-----------|----------|-------------|
| **Knowledge Coverage** | HIGH | Select questions covering diverse, distinct knowledge points. Avoid overlap. |
| **Subject Uniqueness** | HIGH | Prefer questions requiring genuine domain expertise that cannot be answered with general knowledge |
| **Discriminative Power** | HIGH | Prioritize questions that separate experts from non-experts (HIGH discriminative_power) |
| **Practical Relevance** | MEDIUM | Prefer CASE_BASED and APPLIED_REASONING over FACTUAL_RECALL |
| **Difficulty Balance** | MEDIUM | Mix of HARD and VERY_HARD; avoid EASY questions |
| **Real-World Impact** | MEDIUM | Consider questions where errors have significant consequences |

# Selection Rules

1. **NO OVERLAP**: Two selected questions must NOT test the same knowledge point
2. **DISCRIMINATIVE PRIORITY**: Always prefer HIGH discriminative_power questions over LOW
3. **TYPE PREFERENCE**: CASE_BASED > APPLIED_REASONING > CONCEPTUAL > FACTUAL_RECALL
4. **TOPIC DIVERSITY**: Ensure selected questions span different key_topics
5. **ERROR PATTERN VALUE**: Questions with clear, consistent error patterns are valuable for understanding model weaknesses

# Output

Return ONLY a JSON object:
```json
{
  "selected_questions": [
    {
      "question_number": <int>,
      "knowledge_point": "<the knowledge point being tested>",
      "selection_reason": "<why this question was selected, referencing criteria>"
    }
  ],
  "coverage_summary": "<1-2 sentences on how selections cover the field's core knowledge>",
  "rejected_notable": [
    {
      "question_number": <int>,
      "reason": "<why this notable question was excluded>"
    }
  ]
}
```

Notes:
- `selected_questions` must have exactly {{num_to_select}} items
- `rejected_notable` should list 2-4 questions that were strong candidates but excluded (e.g., due to overlap or lower discriminative power)
- Do not include any text outside the JSON object

\end{tcblisting}

\subsection{Agentic Workflow Verification}

\begin{tcblisting}{
    enhanced,
    breakable,
    listing only,
    colback=gray!5,
    colframe=blue!50!black,
    coltitle=white,
    title={Diagnosis Agent Prompt},
    sharp corners=south,
    boxrule=0.8pt,
    fontupper=\small,
    listing options={
        breaklines=true,
        basicstyle=\small,
        breakatwhitespace=true,
        escapeinside={(*}{*)},
        columns=fullflexible
    }
}
# Role
You are a Critical Academic Reviewer and Logic Auditor. Your goal is to rigor-check high-difficulty benchmark questions for Large Language Models.
You have access to a DeepSearch tool.

# Task
Analyze the provided [Question], [Options], [Correct Answer], and [Source Material].
You must verify the scientific validity, logical rigorousness, and linguistic clarity.

# Input Data
<Question_Data>
{{JSON_DATA_HERE}}
</Question_Data>

# Audit Checklist (5 Dimensions)

## 1. Source Confidence & Boundary Check (DeepSearch Required)
* **Objective:** Verify if the "Correct Answer" holds true *only* under specific unstated conditions.
* **Search Strategy:** Search for the core finding + keywords like "contradictory results",
"limitations", "in vivo vs in vitro", "boundary conditions".
* **Failure Mode:** If another paper refutes the conclusion under a different setting
(which is not excluded in the question), mark as **[CONFIDENCE_ISSUE]**.
* You MUST record the **citation (APA)** or **URL** of the paper/source that provides this contradictory evidence.

## 2. Logical Necessity & Uniqueness Check (DeepSearch Required)
* **Objective:** Check for "Affirming the Consequent" or "Non-Exclusivity". If the question asks for the "Cause of C", is "A" the *only* cause? Or could "B" also cause "C"?
* **Search Strategy:** Search for "causes of [Phenomenon C]" or "does [Option B] cause [Phenomenon C]?".
* **Failure Mode:** If an incorrect option (Distractor) could also theoretically be correct under current wording, mark as **[NON_UNIQUE_ISSUE]**.

## 3. Ambiguity & Clarity Check
* **Objective:** Identify wording that allows for multiple valid interpretations.
* **Failure Mode:** If a key term is polysemous or the sentence structure causes scope ambiguity, mark as **[AMBIGUITY_ISSUE]**.

## 4. Option Value Check
* **Objective:** Identify options that are trivial, nonsensical, or fail to test the core concept (e.g., simple typo-based distractors).
* **Failure Mode:** Mark as **[LOW_VALUE_OPTION]**.

# Output Format
Return a JSON object. If the question is perfect, set "has_issues" to false.

```json
{
  "has_issues": true,
  "issue_type": [
    "CONFIDENCE_ISSUE", "NON_UNIQUE_ISSUE",
    "AMBIGUITY_ISSUE", "LOW_VALUE_OPTION"
  ],
  "analysis": {
    "step_1_search_query": "What did you search for?",
    "step_2_findings": "What contradictory evidence or alternative logic did you find?",
    "evidence_sources": [
        "Author, A. A. (Year). Title. Journal. (For contradictory evidence found in Step 1)",
        "[https://valid-url.com/resource]"
    ],
    "step_2_search_query": "<What did you search to check distractors?>",
    "reasoning": "Detailed explanation of why the current question fails."
  },
  "suggestion_directive": "Specific instruction on what needs to be fixed (e.g., 'Add a constraint that the experiment is in vitro', or 'Clarify that we are asking for the primary mechanism')."
}
\end{tcblisting}

\begin{tcblisting}{
    enhanced,
    breakable,
    listing only,
    colback=gray!5,
    colframe=blue!50!black,
    coltitle=white,
    title={Refinement Agent  Prompt},
    sharp corners=south,
    boxrule=0.8pt,
    fontupper=\small,
    listing options={
        breaklines=true,
        basicstyle=\small,
        breakatwhitespace=true,
        escapeinside={(*}{*)},
        columns=fullflexible
    }
}
# Role
You are an Expert Item Writer for a specialized scientific benchmark. Your task is to refine questions based on an Audit Report to ensure they are rigorous, logically sound, and unambiguously correct.

# Input Data
1. **Original Question Data**: {{ORIGINAL_JSON}}
2. **Audit Report**: {{AUDIT_JSON_FROM_AGENT_A}}

# Refinement Strategy Matrix

Apply the following strategies based on the `issue_type` identified:

## Strategy A: Fix "Source Confidence" (Premise Injection)
* **Action:** Add specific boundary conditions to the Question Stem.
* **Example:** Change "Does X cause Y?" to "In the context of [Specific Cell Line/Condition] as reported by [Author et al.], does X cause Y?" OR "Under [Assumption Z], which outcome is observed?"
* **Goal:** Make the original source's conclusion the *only* valid answer by narrowing the scope.

## Strategy B: Fix "Non-Unique / Logical Necessity" (Exclusion & Precision)
* **Action:** 1. Modifiers: Use "primary cause," "most direct mechanism," or "initial trigger" instead of generic "cause."
    2. Exclusion: Add "Assuming no [Confounding Factor B]..."
* **Goal:** Eliminate the validity of the competing distractor found by the auditor.

## Strategy C: Fix "Ambiguity" (Disambiguation)
* **Action:** Rewrite the confusing phrase using standard academic terminology. Ensure the syntax (e.g., modifier attachment) is singular in meaning.

## Strategy D: Fix "Low Value Option" (Distractor Replacement)
* **Action:** Replace the weak option with a "plausible but incorrect" option (a common misconception or a related but distinct concept).

## Strategy E: Fix "WEAK_CAUSALITY" (Assumption-based Reasoning)
* **Diagnosis:** It is hard to prove "A causes B" definitively, but "If A occurs, B follows" is a valid deduction under a specific theory.
* **Action:** Reframe the question to **assume the conclusion/premise is true**, and ask for the logical consequence or necessary condition.

# Execution Rules
1. **Explanation Sync:** If you modify ANY Option or the Correct Answer, you **MUST** rewrite the corresponding `explanation` to align with the new logic.
2. **Source Addition:** If your modification introduces new facts or constraints not supported by the original source,
you **MUST** provide a `new_source` (APA format or accessible URL).

* **Example:**
    * *Before:* "Is Theory X correct?" (Hard to prove)
    * *After:* "**Assuming Theory X is correct**, which of the following observations would be expected?"
    OR "**Based on the standard model of [Field Y]**, what is the implication of Z?"

# Constraints
1. **Do NOT make the question easier.** The goal is rigor, not simplification.
2. **Maintain the Markdown/LaTeX format.**
3. **Evidence update:** If you add a new premise based on the audit, ensure the `question_content` reflects this (you may add a note "Assuming setting X").

# Output
Return a JSON object containing ONLY the fields that require modification.
Do not return unchanged fields (e.g., if the answer is unchanged, do not include it).
Example:
{
  "revision_summary":
  "Applied Strategy A: Added 'in vitro' constraint to the question stem; synced explanation.",
  "new_sources": [ // Include ONLY if Strategy A/E requires new evidence
      "Smith, J. (2024). Title. Journal.",
      "https://example.com/evidence"
  ],
  "modifications": {
      // 1. Only include keys that have changed.
      "question": "In the context of HeLa cells, does X cause Y?",
      "options": [
          "B":[
            "answer":"1, 3",
            "explanation":Statement1 is wrong because ..."
          ]
          "C":[...],
          "..."
      ],
      "source": "..."
  }
}
\end{tcblisting}

%% file: appendix_f_experiments.tex
\clearpage
\section{Experiment Details}
\label{appendix:full-results}

\subsection{List of Models Evaluated}
\label{appendix:model-list}

\begin{table}[H]
\centering 
\caption{Overview of Evaluated Models on KINA Benchmark.} 
\small
\label{tab:model_zoo} 
\renewcommand{\arraystretch}{1.2} 
\begin{tabular}{l|p{0.75\textwidth}} 
\hline 
\textbf{Provider / Family} & \textbf{Specific Models} \\
\hline 
\textbf{OpenAI} & GPT-5.4 \cite{gpt-5-42025modelcard}, GPT-5.2 \cite{gpt-5-22025modelcard}\\
\hline 
\textbf{Google} & Gemini-3.1-Pro-Preview , Gemini-3-Flash-Preview \cite{gemini} \\
\hline 
\textbf{Anthropic} & Claude-Opus-4.6 \cite{claude-4.6-opus-anthropic}, Claude-Sonnet-4.6 \cite{claude-4.6-sonnet-anthropic}\\
\hline 
\textbf{ByteDance} & Doubao-Seed-2.0-Pro-260215, Doubao-Seed-2.0-Lite-260215 \cite{seed2.0}\\
\hline 
\textbf{Tongyi (Qwen)} 
& Qwen3.5-397B-A17B, Qwen3.5-122B-A10B-FP8, Qwen3.5-35B-A3B-FP8, Qwen3.5-27B \cite{qwen35blog}\\
& Qwen3-Max-250923 \cite{qwen3max}, Qwen3-Next-80B-A3B-Instruct, Qwen3-Next-80B-A3B-Thinking-FP8 \cite{qwen3} \\
& Qwen3-235B-A22B, Qwen3-235B-A22B-Thinking-2507, Qwen3-30B-A3B, Qwen3-30B-A3B-Thinking-2507, Qwen3-32B, Qwen3-14B, Qwen3-8B, Qwen3-4B, Qwen3-4B-Thinking-2507, Qwen3-1.7B, Qwen3-0.6B \cite{qwen3} \\
& Qwen2.5-72B-Instruct \cite{qwen2.5}, Qwen2-72B-Instruct \cite{yang2024qwen2technicalreport} \\ 
\hline 
\textbf{Meta (Llama)} & Llama-4-Maverick-17B-Instruct-FP8 \cite{arxiv2026llama4herdarchitecture}, Llama-3.1-405B-Instruct, Llama-3-70B-Instruct \cite{grattafiori2024llama3herdmodels}\\
\hline 
\textbf{DeepSeek} & DeepSeek-v3.2-Thinking \cite{deepseekai2025deepseekv32pushingfrontieropen}\\ 
\hline 
\textbf{Moonshot (Kimi)} & Kimi-k2.5 \cite{team2026kimi}\\ 
\hline 
\textbf{xAI} & Grok-4.1-Fast-Reasoning \cite{grok-4-12025modelcard}\\
\hline 
\textbf{Zhipu AI} & GLM-5 \cite{zeng2026glm}\\ 
\hline 
\textbf{MiniMax} & Minimax-M2.5 \cite{minimax25}\\
\hline 
\textbf{StepFun} & Step-3.5-Flash \cite{huang2026step35flashopen}\\
\hline 
\textbf{Mistral} & Mixtral-8x7B-Instruct-v0.1 \cite{jiang2024mixtralexperts}\\ 
\hline 
\end{tabular}%
\end{table}

\FloatBarrier

\subsection{Evaluation Prompt}
\label{appendix:prompt}

\begin{tcblisting}{
    enhanced,
    breakable,
    listing only,
    colback=gray!5,
    colframe=blue!50!black,
    coltitle=white,
    title={Evaluation Prompt},
    sharp corners=south,
    boxrule=0.8pt,
    fontupper=\small,
    listing options={
        breaklines=true,
        basicstyle=\small,
        breakatwhitespace=true,
        escapeinside={(*}{*)},
        columns=fullflexible
    }
}
Answer the following multiple choice question. There is only one correct answer.
The last line of your response should be in the format 'Answer: $LETTER' (without quotes),
where LETTER is one of {letters}. Think step by step before answering.
Question:
{question}
Options:
{options}
\end{tcblisting}

\clearpage

\subsection{Result Details}

\begin{table}[H]
\centering
\captionsetup{font=footnotesize}
\caption{Evaluation Stability of KINA.} 
\label{tab:evaluation_stability}
\small
\setlength{\tabcolsep}{3pt}
\begin{tabular}{lcc|lcc}
\toprule
\textbf{Model} & \textbf{Avg@4} & \textbf{Std} & \textbf{Model} & \textbf{Avg@4} & \textbf{Std} \\
\midrule
Gemini-3.1-Pro-Preview & 53.17 & 0.72 & Claude-Opus-4.6 & 49.92 & 0.32 \\
GPT-5.4 & 48.55 & 0.56 & Doubao-Seed-2.0-Pro-260215 & 44.99 & 0.80 \\
Gemini-3-Flash-Preview & 43.91 & 0.71 & Qwen3.5-397B-A17B & 42.99 & 0.78 \\
Doubao-Seed-2.0-Lite-260215 & 41.49 & 0.61 & Kimi-K2.5 & 40.24 & 0.66 \\
GPT-5.2 & 39.52 & 0.99 & Qwen3.5-27B & 39.35 & 0.55 \\
Qwen3.5-122B-A10B & 38.88 & 0.90 & DeepSeek-V3.2-Thinking & 38.01 & 0.57 \\
Qwen3-Max-2025-09-23 & 35.90 & 0.44 & GLM-5 & 35.85 & 0.55 \\
Qwen3.5-35B-A3B & 35.43 & 0.69 & Grok-4.1-Fast-Reasoning & 33.73 & 0.64 \\
Qwen3-235B-A22B-Thinking-2507 & 32.15 & 0.52 & Llama-4-Maverick-17B-Instruct & 31.62 & 0.59 \\
Minimax-M2.5 & 30.28 & 0.46 & Qwen3.5-9B & 30.09 & 0.91 \\
Qwen3-Next-80B-A3B-Instruct & 30.09 & 0.62 & Claude-Sonnet-4.6 & 30.01 & 0.56 \\
Llama-3.1-405B-Instruct & 29.59 & 0.62 & Qwen3-235B-A22B & 29.37 & 0.83 \\
Step-3.5-Flash & 29.12 & 0.52 & Qwen3.5-4B & 28.50 & 0.72 \\
Qwen3-Next-80B-A3B-Thinking & 28.28 & 1.33 & Qwen3-32B & 27.41 & 0.38 \\
Qwen3-30B-A3B-Thinking-2507 & 27.03 & 0.42 & Meta-Llama-3-70B-Instruct & 26.61 & 0.39 \\
Qwen3-14B & 26.00 & 0.64 & Qwen2-72B-Instruct & 24.92 & 1.24 \\
Qwen3-4B-Thinking-2507 & 24.83 & 0.37 & Qwen3-30B-A3B & 24.28 & 0.45 \\
Qwen2.5-72B-Instruct & 23.03 & 1.13 & Qwen3-8B & 22.27 & 1.62 \\
Qwen3-4B & 21.50 & 0.81 & Qwen3.5-2B & 20.52 & 0.46 \\
Qwen3-1.7B & 18.33 & 1.16 & Mixtral-8x7B-Instruct-v0.1 & 17.83 & 0.55 \\
Qwen3.5-0.8B & 16.66 & 0.80 & Qwen3-0.6B & 14.49 & 0.78 \\
\bottomrule
\end{tabular}%
\end{table}

\begin{figure}[htbp]
  \centering
  \includegraphics[width=0.6\linewidth]{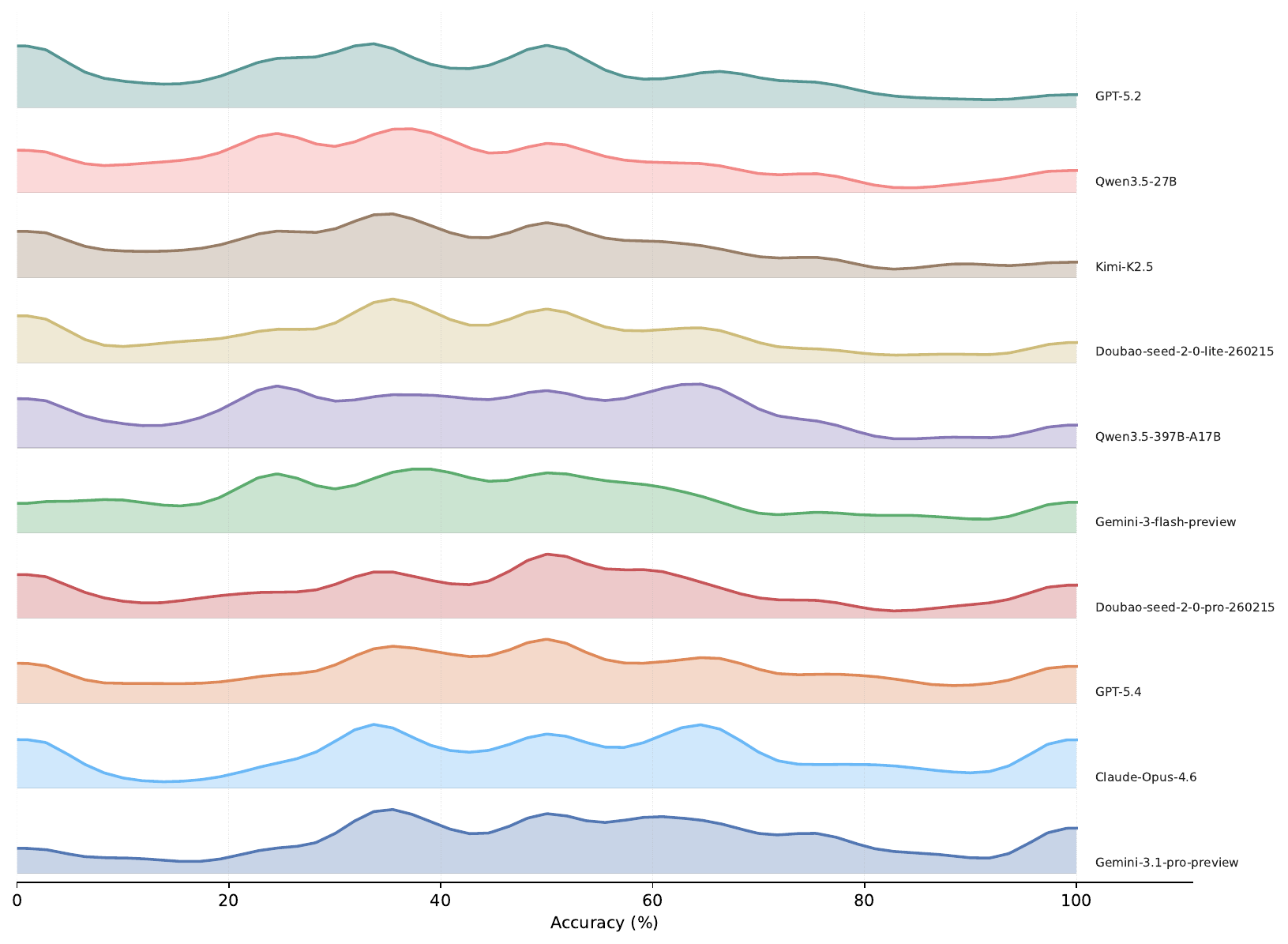}
  \captionsetup{font=footnotesize}
  \caption{Subject-Level Score Distribution Across Top-10 Models}
\label{fig:subject_distribution_top10}
  \par\small
  \textit{Note.} Top-10 models are ranked by overall discipline score.
  Each ridge shows the smoothed density of one model's subject-level scores (0--100).
  Right-shifted and narrower ridgelines indicate higher and more stable performance across
subjects.
\end{figure}

  \captionsetup[figure]{justification=raggedright,singlelinecheck=false}

  \begin{figure}[p]
      \centering
      \begin{subfigure}[t]{0.7\linewidth}
          \centering
          \includegraphics[width=\linewidth]{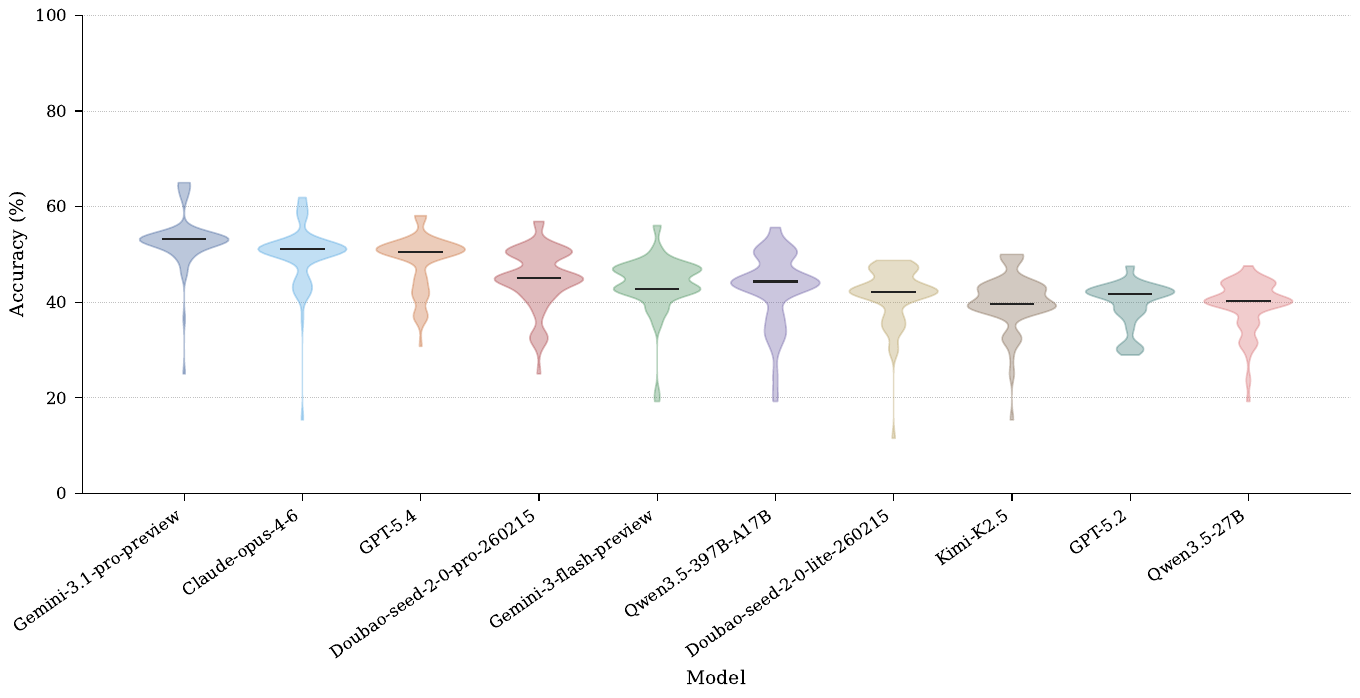}
  \caption{Top-10 Models: Discipline-Level Weighted Score Distribution}          
     \label{fig:violin_l1}
      \end{subfigure}

      \vspace{0.8em}
      \begin{subfigure}[t]{0.7\linewidth}
          \centering
          \includegraphics[width=\linewidth]{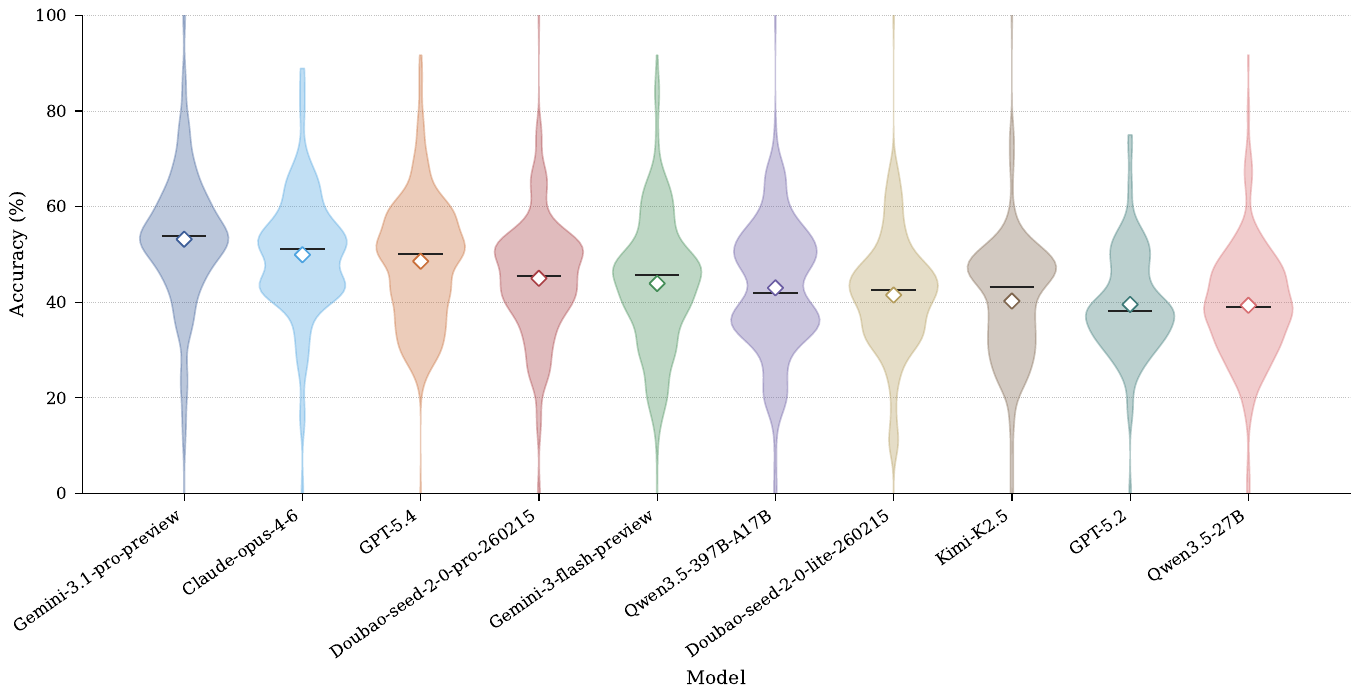}
  \caption{Top-10 Models: Field-Level Weighted Score Distribution }         
      \label{fig:violin_l2}
      \end{subfigure}
      \label{fig:violin_top10_l1_l2_l3}
  \end{figure}

  \begin{figure}[p]\ContinuedFloat
      \centering
      \begin{subfigure}[t]{0.7\linewidth}
          \centering
          \includegraphics[width=\linewidth]{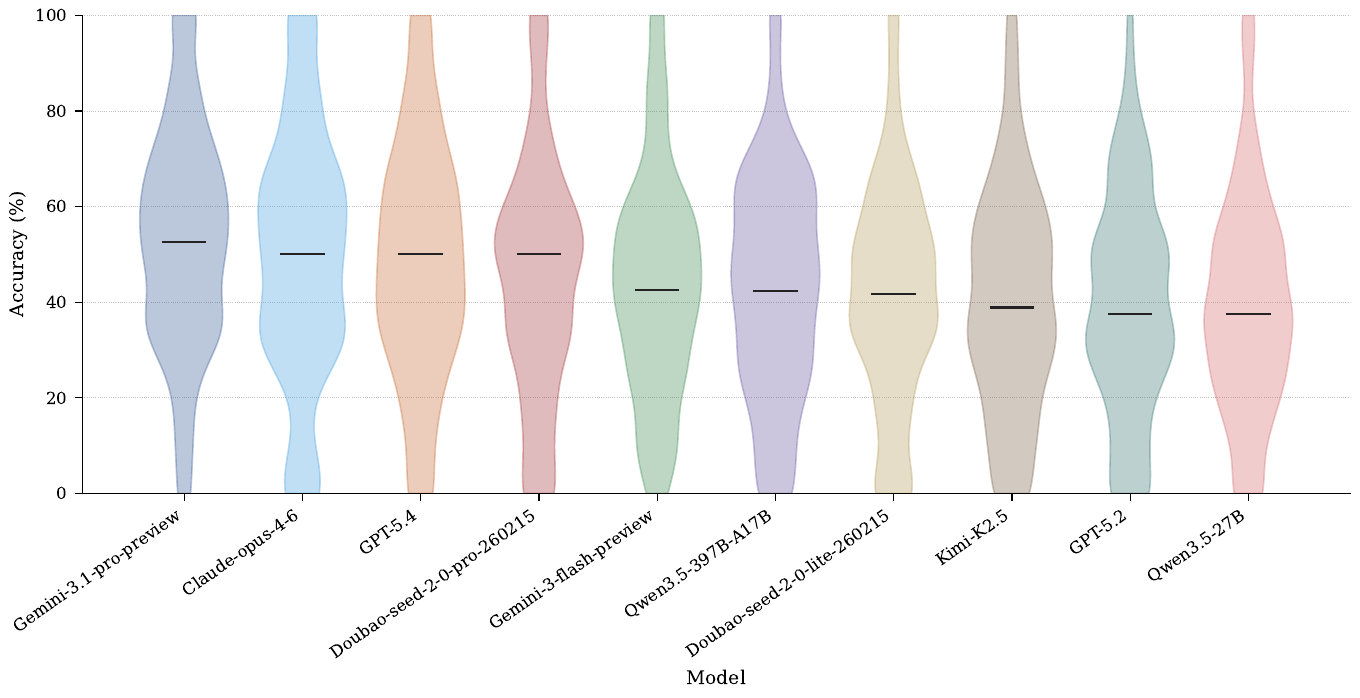}
  \caption{Top-10 Models: Subfield-Level Weighted Score Distribution}          
      \label{fig:violin_l3}
      \end{subfigure}
      \captionsetup{font=footnotesize}
      \caption[]{Top-10 Model Performance Distributions Across Three Aggregation Levels (continued)}
      \caption*{\small\textit{Note.} Each violin shows the distribution of weighted model scores, with weights
  proportional to sample size. Panels differ only in aggregation level: discipline, field, and
  subfield. Wider sections indicate higher score density, and the central marker indicates the median.}
  \end{figure}

\begin{figure}
    \centering
  \captionsetup{font=footnotesize}
    \includegraphics[width=0.6\linewidth]{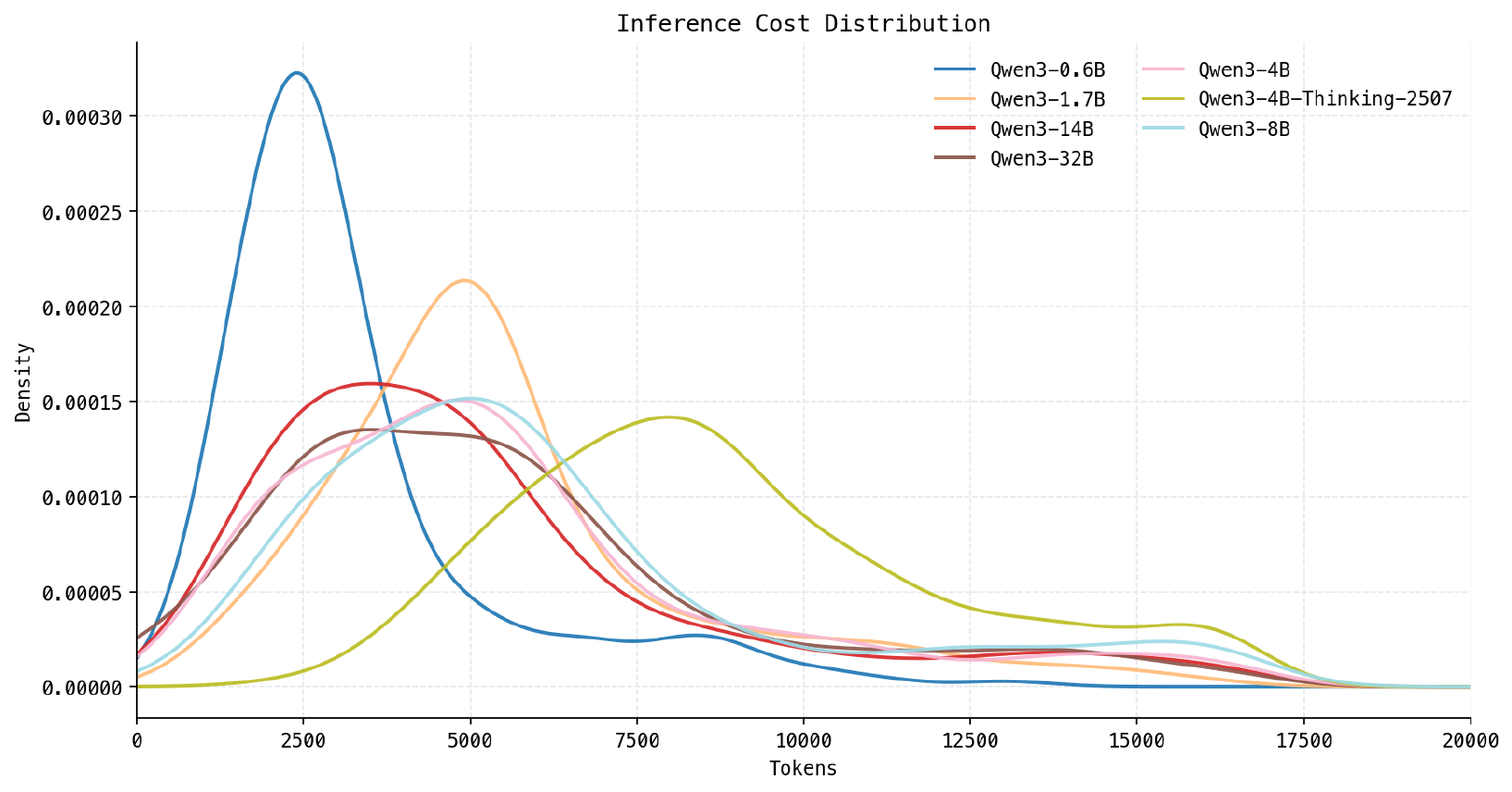}
    \caption{Inference Cost Distribution of Qwen3 Dense Models.}
    \label{fig:placeholder_1}
\end{figure}

\begin{figure}
    \centering
  \captionsetup{font=footnotesize}
    \includegraphics[width=0.6\linewidth]{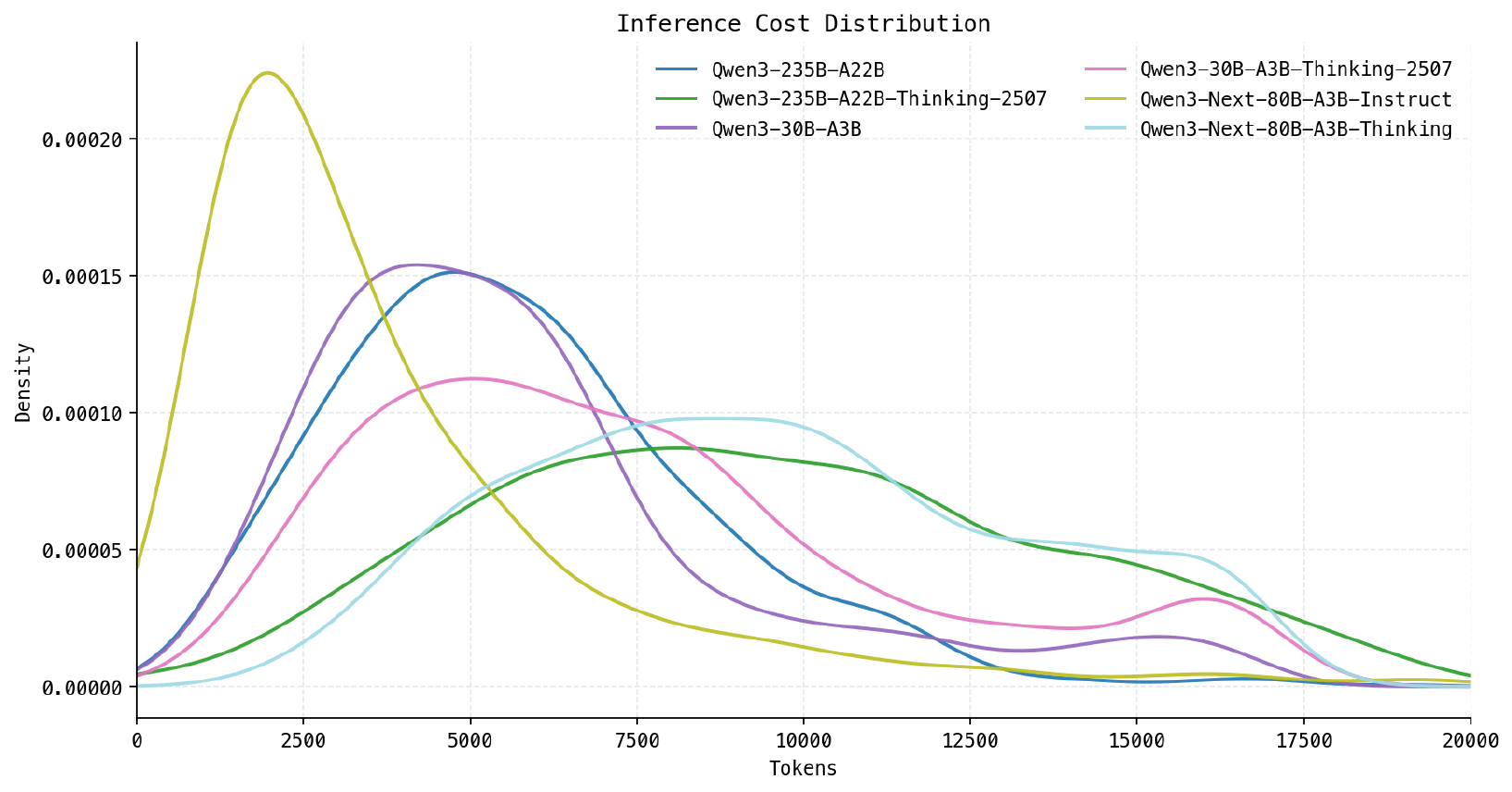}
    \caption{Inference Cost Distribution of Qwen3 Moe Models.}
    \label{fig:placeholder_2}
\end{figure}

\begin{figure}
    \centering
  \captionsetup{font=footnotesize}
    \includegraphics[width=0.6\linewidth]{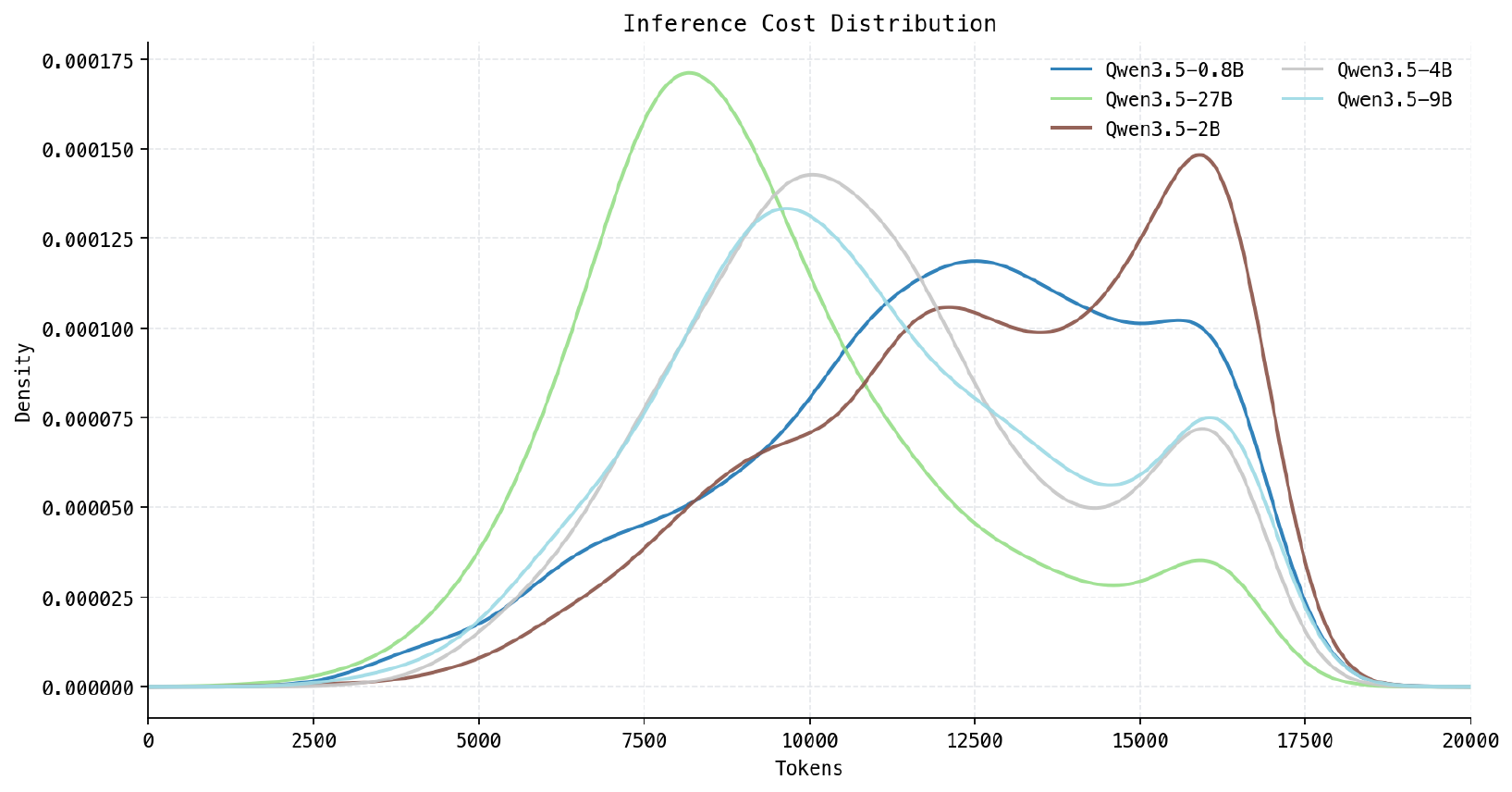}
    \caption{Inference Cost Distribution of Qwen3.5 Dense Models.}
    \label{fig:placeholder_3}
\end{figure}

\begin{figure}
    \centering
  \captionsetup{font=footnotesize}
    \includegraphics[width=0.6\linewidth]{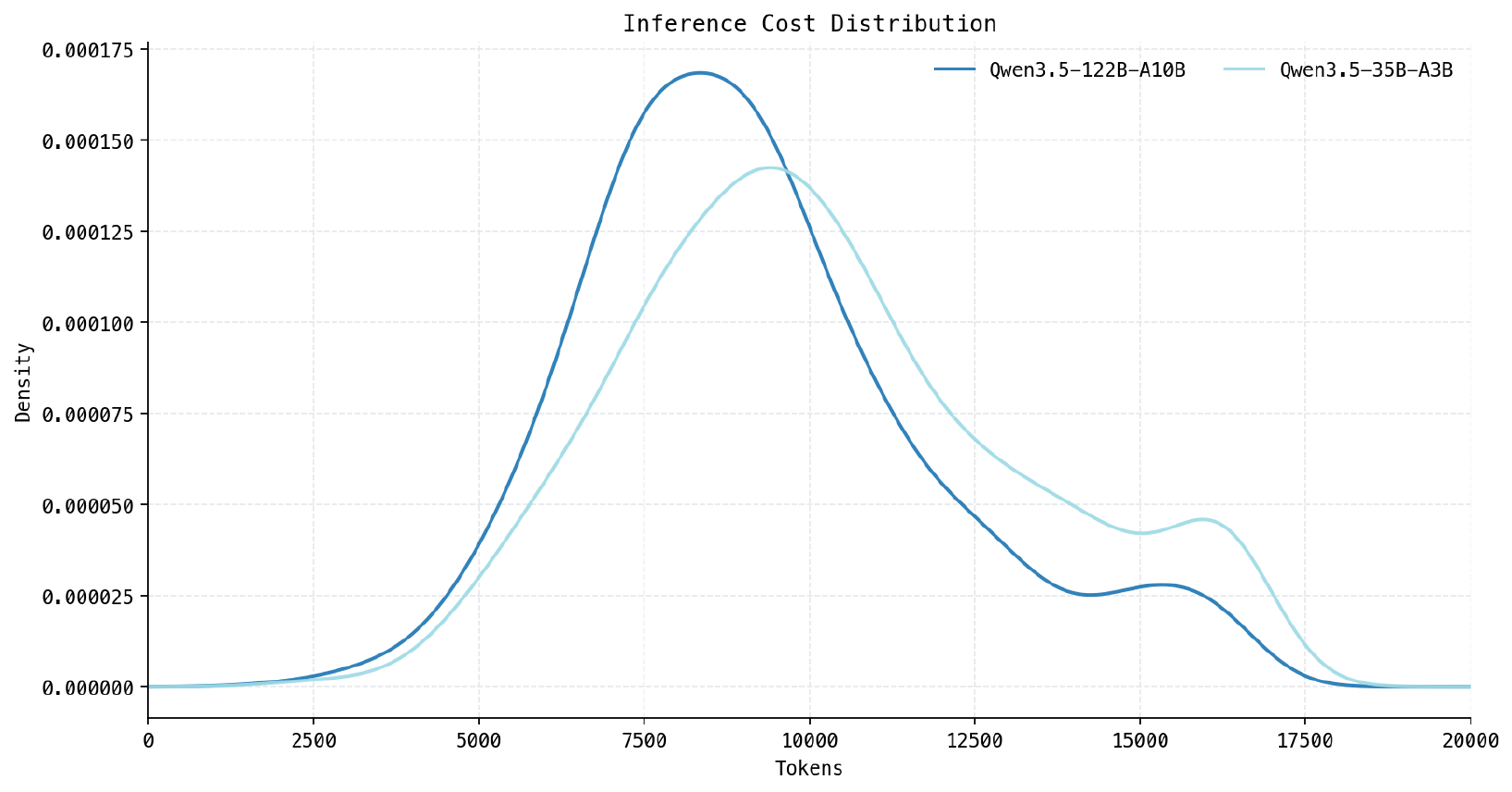}
    \caption{Inference Cost Distribution of Qwen3.5 Moe Models.}
    \label{fig:placeholder_4}
\end{figure}

\begin{figure}[htbp]
    \centering
      \captionsetup{font=footnotesize}
    \includegraphics[width=0.5\linewidth]{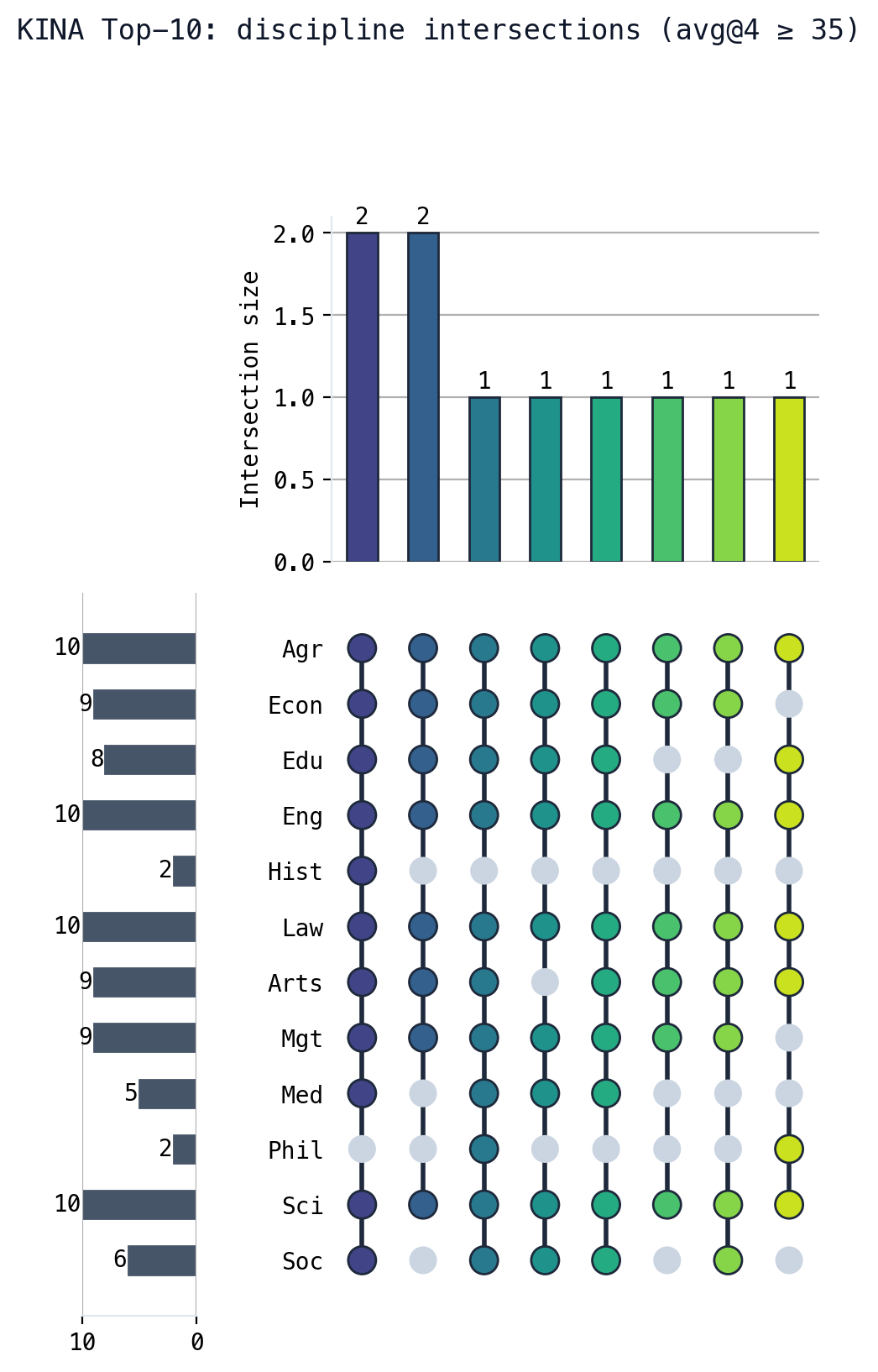}
    \caption{UpSet plot visualizing intersections of high-performing data points (average score $\ge$ 35) among the KINA Top-10 models. The matrix at the bottom denotes set inclusion for distinct intersection combinations, paired with horizontal bars on the left showing total set sizes per discipline. The vertical bar chart at the top specifies the sizes of these intersections.}
    \label{fig:placeholder_5}
\end{figure}

\FloatBarrier

%% file: appendix_g_datasheet.tex
\section{Datasheet for \name{}}
\label{appendix:datasheet}

We follow the structure of \citet{gebru2021datasheets}.

\subsection*{Motivation}

\paragraph{For what purpose was the dataset created?}
\name{} was created to evaluate frontier large language models on
\emph{disciplinary-representative} knowledge tasks. Existing benchmarks
either prioritize scale (SuperGPQA), extreme difficulty (HLE), or narrow
expert depth (GPQA), but none combine wide disciplinary coverage with an
explicit representativeness criterion and an incentive-aligned annotation
pipeline.

\paragraph{Who created the dataset and on behalf of which entity?}
The dataset was created by the \name{} team.

\paragraph{Who funded the creation?}
2077AI \& M-A-P \& UTokyo \& CMU

\subsection*{Composition}

\paragraph{What do the instances represent?}
Each instance is a pseudo-multiple-choice question with $10$ combinatorial
options. An instance includes the stem, $10$ options, an option-level
explanation with cited sources, and the originating reference material.

\paragraph{How many instances are there in total?}
$899$ items, distributed across $12$ top-level disciplines, $70$ fields,
and $261$ fine-grained subfields.

\paragraph{Does the dataset contain all possible instances or is it a
sample?} A sample. The selection process is described in
\S\ref{subsec:representativeness} and \S\ref{sec:construction}.

\paragraph{Does the dataset identify any subpopulations?}
The instances are organized by discipline, field, and subfield following the
CIP-aligned taxonomy (Appendix~\ref{sec:appendix-taxonomy}).

\paragraph{Does the dataset contain data that might be considered
confidential?} No. All source materials are drawn from publicly published
academic literature.

\paragraph{Does the dataset contain data that might be offensive,
insulting, threatening, or anxiety-inducing?} No.

\subsection*{Collection process}

\paragraph{How was the data collected?}
Three-stage pipeline: rule-based screening, double-blind expert review under
a bonus-on-bar tournament mechanism, and three-judge LLM consensus
(\S\ref{sec:construction}).

\paragraph{Who was involved in the data collection process and how were
they compensated?}
Annotators were graduate students from top-tier global universities and
senior industry experts, recruited via a two-round examination
(Appendix~\ref{appendix:round1}). Compensation followed the bonus-on-bar
tournament described in \S\ref{subsec:tournament}, calibrated per discipline.

\paragraph{Over what timeframe was the data collected?}
2025.10 to 2025.12.

\paragraph{Was an ethical review process conducted?}
Yes. \name{} was reviewed in accordance with the applicable ethical requirements. The research does not involve interventions on human participants or the collection of sensitive personal information. All data used in the study were obtained and processed in a manner consistent with relevant privacy, consent, and data protection requirements.

\subsection*{Preprocessing, cleaning, labeling}

\paragraph{Was any preprocessing/cleaning of the data done?}
Yes. Stage~1 enforces uniqueness (cosine similarity $<0.8$), formatting,
and a 3-of-5 flagship-LLM-failure difficulty filter. Stage~3 LLM-as-judge
applied multidimensional feature scoring. An agentic refinement loop
modified $\sim$$\frac{1}{3}$ of the candidate pool to remove residual
boundary defects.

\subsection*{Uses}

\paragraph{Has the dataset been used for any tasks already?}
Yes: zero-shot evaluation of $42$ frontier LLMs (\S\ref{sec:experiments}),
including web-search-augmented evaluation and parameter scaling analysis.

\paragraph{What other tasks could the dataset be used for?}
Calibration of LLM confidence, retrieval-augmented generation evaluation,
prompt-engineering benchmarking, multilingual transfer studies (after
translation), and meta-evaluation of LLM-as-judge frameworks.

\paragraph{Are there tasks for which the dataset should not be used?}
\name{} should not be used as a primary signal for clinical, legal, or
high-stakes deployment decisions. The benchmark is designed to differentiate
\emph{relative} model capability under controlled conditions, not to certify
absolute correctness on real-world tasks.

\subsection*{Distribution}

\paragraph{Will the dataset be distributed to third parties?}
Yes. The dataset is released under \textbf{CC-BY-4.0} and the evaluation
code under \textbf{MIT license} at
\url{https://huggingface.co/datasets/2077AIDataFoundation/KINA}. To mitigate
contamination, the test split is distributed as an encrypted archive with a
canary string; the development split is distributed openly.

\paragraph{Will there be a leaderboard?}
Yes. A continuously updated public leaderboard is maintained at
\url{https://www.2077ai.com/datasets/dataset-kina}. Submissions follow a hold-out protocol.

\paragraph{Will the dataset be updated?}
\name{} will be refreshed (versioned as \name{}-v$x.y$) when the strongest
evaluated model exceeds $70\%$ overall accuracy, to prevent saturation.

\subsection*{Maintenance}

\paragraph{Who is supporting/hosting/maintaining the dataset?}
The \name{} team commits to maintaining the dataset and leaderboard for at
least $36$ months from initial release. Issue tracking and contribution
guidelines are at the project repository.

\paragraph{If others want to contribute to the dataset, is there a
mechanism for them to do so?}
Yes. We accept community contributions of subfield-specific items via a
standardized submission form, subject to the same three-stage verification
pipeline described in \S\ref{sec:construction}.